      \theoremstyle{plain}
      \newtheorem{assumption}{Assumption}
\newtheorem{lemma}{Lemma}
\newtheorem{theorem}{Theorem}
\newtheorem{definition}{Definition}
\def\BibTeX{{\rm B\kern-.05em{\sc i\kern-.025em b}\kern-.08em
    T\kern-.1667em\lower.7ex\hbox{E}\kern-.125emX}}
\begin{document}
\title{On the Escaping Efficiency of Distributed Adversarial Training Algorithms}
\author{Ying~Cao*, Kun~Yuan, Ali H. Sayed
\thanks{*Corresponding author. Authors Ying Cao and Ali H. Sayed are with the Institute of Electrical and Micro Engineering in EPFL,
Lausanne. Kun Yuan is with the Center for Machine Learning Research (CMLR) in Peking University. Emails: \{ying.cao, ali.sayed\}@epfl.ch, kunyuan@pku.edu.cn.}
}
\maketitle
% The \author macro works with any number of authors. There are two commands
% used to separate the names and addresses of multiple authors: \And and \AND.
%
% Using \And between authors leaves it to LaTeX to determine where to break the
% lines. Using \AND forces a line break at that point. So, if LaTeX puts 3 of 4
% authors names on the first line, and the last on the second line, try using
% \AND instead of \And before the third author name.

% \author{%
%   David S.~Hippocampus\thanks{Use footnote for providing further information
%     about author (webpage, alternative address)---\emph{not} for acknowledging
%     funding agencies.} \\
%   Department of Computer Science\\
%   Cranberry-Lemon University\\
%   Pittsburgh, PA 15213 \\
%   \texttt{hippo@cs.cranberry-lemon.edu} \\
  % examples of more authors
  % \And
  % Coauthor \\
  % Affiliation \\
  % Address \\
  % \texttt{email} \\
  % \AND
  % Coauthor \\
  % Affiliation \\
  % Address \\
  % \texttt{email} \\
  % \And
  % Coauthor \\
  % Affiliation \\
  % Address \\
  % \texttt{email} \\
  % \And
  % Coauthor \\
  % Affiliation \\
  % Address \\
  % \texttt{email} \\
%}

\begin{abstract}
Adversarial training has been widely studied in recent years due to its role in improving model robustness against adversarial attacks. This paper focuses on comparing different distributed adversarial training algorithms--including centralized and decentralized strategies--within multi-agent learning environments. Previous studies have highlighted the importance of model flatness in determining robustness. To this end, we develop a general theoretical framework to study the escaping efficiency of these algorithms from local minima, which is closely related to the flatness of the resulting models. We show that when the perturbation bound is sufficiently small (i.e., when the attack strength is relatively mild) and a large batch size is used, decentralized adversarial training algorithms--including consensus and diffusion--are guaranteed to escape faster from local minima than the centralized strategy, thereby favoring flatter minima. However, as the perturbation bound increases, this trend may no longer hold. In the simulation results, we illustrate our theoretical findings and systematically compare the performance of models obtained through decentralized and centralized adversarial training algorithms. The results highlight the potential of decentralized strategies to enhance the robustness of models in distributed settings.
\end{abstract}

\begin{IEEEkeywords}
Escaping efficiency, distributed learning, decentralized learning, adversarial training, flatness.
\end{IEEEkeywords}

\section{Introduction}
Despite their success on a wide range of modern vision and language tasks, machine learning models have been shown to be surprisingly vulnerable to adversarial perturbations~\cite{szegedy2013intriguing, goodfellow2014explaining}. Basically, models that perform well under standard (clean) conditions can often be misled by imperceptible input modifications. One of the most widely used methodologies to mitigate this vulnerability is adversarial training~\cite{madry2017towards,wang23ad,bartoldson2024adversarial}, where adversarially perturbed samples are included during training to improve the robustness of models. Earlier works usually studied adversarial robustness in the single-agent setting, where the entire system is composed of a single model. However, distributed learning systems, which involve multiple agents, are becoming increasingly relevant in highly connected world and for large-scale applications \cite{vlaski2023networked,YuanWSYB24}. In the previous work \cite{cao2025decentralized}, we proposed a general distributed adversarial training framework to improve the robustness of networked models and analyzed the convergence behavior of the associated algorithms.

In distributed learning, a group of agents collaboratively optimize a global objective. To achieve this, two main popular methodological paradigms are commonly employed in algorithm design \cite{sayed2014adaptation,sayed_2023}. The first is known as the \emph{centralized} approach, in which all agents transmit their local observations to a central server that performs the global computation. This approach has theoretical properties that are essentially consistent with traditional single-agent learning, since the central server aggregates and processes all information.  The second is the \emph{decentralized} strategy, which differs fundamentally from the centralized setting: all agents are connected by a predefined graph topology, and each agent keeps its local observations private and interacts only with its neighboring agents to update its model. Furthermore, within the decentralized setting, \emph{diffusion} and \emph{consensus} are two widely adopted strategies \cite{vlaski2023networked,sayed2014adaptation,sayed_2023,nedic2009distributed,TuS12} and are also the focus of this paper.

The decentralized framework is primarily motivated by advantages in preserving data privacy, improving fault tolerance, and distributing the computational load across multiple agents \cite{sayed2014adaptation, sayed2014adaptive,LianZZHZL17,abs-2111-04287,ZhuLWH25,RizkVS23,AlghunaimY22}. Regarding the fundamental performance analysis in the standard (clean) environment, early studies generally established that decentralized methods can, at best, match the optimization performance of the centralized approach~\cite{sayed2014adaptation,ChenS15b,YuanAYS20,vlaski2021second,VlaskiS21a,VogelsHJ23}. Some works such as \cite{DengSL023,ZhuHZNST22} suggested that the decentralized operation may even degrade the generalization performance of models relative to the centralized baseline. However, some recent studies have observed that decentralized training can be beneficial to generalization\cite{00010KJS21,BarsBTSN24,ZhuH0ST23,cao2024trade}. Specifically, reference \cite{00010KJS21} empirically found that the consensus distance--i.e., the difference among local models during intermediate stages of training with decentralized strategies--could improve generalization over the centralized training. Similarly, \cite{BarsBTSN24} established that poorly-connected graphs can be beneficial to generalization.  Similarly, reference \cite{ZhuH0ST23,cao2024trade} provably verified that decentralized training methods favor flatter local minima than the centralized approach in non-convex and large-batch settings, and such flatness has been linked to enhanced generalization ability~\cite{ForetKMN21}. In particular, \cite{cao2024trade} showed that decentralized strategies generally strike a more favorable balance between flatness and optimization compared to the centralized solution, thereby resulting in better classification accuracy. Motivated by these observations, this work aims to systematically compare decentralized and centralized adversarial training strategies where adversarial attacks are accounted for. 

In the adversarial learning community, several prior works have highlighted the importance of the flatness of models in determining robustness \cite{ZhangHZCWW24, WuX020, Stutz0S21,abs-2010-03593}. Specifically, they found that flattening the loss landscape with respect to the model parameters tends to improve the robustness of models. Inspired by these findings, we investigate the implicit bias of decentralized and centralized adversarial training methods, particularly regarding their preference for flatter minima and the impact on model robustness. In particular, we focus on addressing the following two challenges:
\begin{itemize}
    \item \textit{In the adversarial training context, can decentralized methods favor flatter models than the centralized strategy, and under what conditions?} 
    \item \textit{Can decentralized adversarial training methods enhance robustness of models to adversarial attacks compared to their centralized counterpart?}
\end{itemize}
To answer these questions, we theoretically examine the escaping efficiency of the distributed adversarial training algorithms, which measures how effectively they escape local minima in non-convex risk landscapes. This efficiency is closely related to their preference for flatter solutions. The key contributions of this work are summarized as follows:

(1) To analyze the escaping efficiency of distributed adversarial training algorithms, we extend the theoretical framework from the clean environment \cite{cao2024trade} to the adversarial learning context. Since the dependence of the Hessian matrix of the risk function on the iterates now renders the analysis rather intractable, we adopt a short-term model in which the original Hessian matrix is approximated by it value at a local minimum. To ensure that the short-term approximation reliably captures the dynamics of the true model in the vicinity of local minima, we rigorously verify that the approximation error between the true dynamics and the short-term model is negligible. It is important to note that this is not a straightforward extension, as the adversarial learning context requires appropriate modifications to the proof techniques due to changes in certain theoretical properties. For instance, the Lipshictz continuity of the risk function may no longer hold, and the presence of adversarial perturbations introduces additional complexities that must be carefully addressed.

(2) Using the short-term model, we derive closed-form expressions for the escaping efficiency of distributed adversarial training algorithms, enabling a direct  comparison of centralized, consensus, and diffusion strategies in the adversarial setting. Basically, our analysis confirms that the network heterogeneity and underlying graph structure contribute to faster escape from local minima for the family of decentralized adversarial training methods--namely, consensus and diffusion--compared to the centralized strategy. As a result, decentralized methods favor flatter minima than the centralized approach  in the large-batch setting. Furthermore, within the family of decentralized methods,  and due to its larger mean-square error performance, consensus exhibits higher escaping efficiency than diffusion, and thus favors flatter solutions. However, it is important to note that these results are valid only when the perturbation bound parameter, which intuitively quantifies the strength of adversarial attacks, remains sufficiently small. As the perturbation bound increases, the above trend may no longer hold.

(3) In the simulation section, we present experimental results comparing adversarial centralized, consensus, and diffusion training algorithms on the CIFAR-10 and CIFAR-100 datasets. We vary the perturbation bounds to simulate different levels of attack strength and to demonstrate its impact on model performance. First, we visualize the flatness of the robust models obtained by the three adversarial training algorithms. As anticipated by our theoretical results, we observe that when adversarial attacks are relatively mild--i.e., when the perturbation bound is small--decentralized methods consistently produce robust models with flatter landscape with respect to the model parameters compared to the centralized solution. Moreover, consensus yields flatter models than diffusion. However, as attack strength increases, this trend does not necessarily hold; in some cases, decentralized methods even result in sharper models than the centralized strategy. Then, we conduct a comprehensive evaluation of the clean accuracy and robustness of the obtained models using AutoAttack \cite{Croce020a}. We observe that decentralized methods achieve better clean accuracy and robustness compared to the centralized approach in the large-batch setting, even in cases where the centralized models are flatter than those obtained by decentralized strategies under strong adversarial attacks. This suggests that while flatness remains an important factor in determining robustness, it is not the sole determinant. These findings further motivate us to examine the optimization performance of the three adversarial training methods. Specifically, we find that the centralized approach, when combined with vanilla SGD, is easier to exhibit suboptimal optimization performance in the large-batch setting. This observation helps explain why models obtained by decentralized methods--despite being sharper--can still be more robust than the flatter counterpart produced by the centralized  approach.  All in all, the comparative results among the three distributed adversarial training algorithms highlight the potential of decentralized training in the multi-agent adversarial learning framework.

\section{Preliminaries}
\subsection{Problem statement}
We consider a network of $K$ agents (or nodes) interconnected through a graph topology. Each agent observes data realizations drawn from independently distributed observations. The objective of multi-agent adversarial learning is to solve the following optimization problem in order to obtain a robust model which is capable of resisting adversarial perturbations in the data:
\begin{equation}\label{formulation_global}
\min\limits_{{w}\in \mathbbm{R}^M} \left\{J(w) \overset{\Delta}{=} \frac{1}{K}\sum\limits_{k=1}^K J_k( w)\right\}
\end{equation}
where $M$ is the dimension of the parameter $w$, and the empirical risk function of each local agent has the following formulation:
\begin{equation}\label{local_risk}
    J_k(w) \overset{\Delta}{=} \frac{1}{N_k}\sum_{i=1}^{N_k}\left\{ \max\limits_{\left\Vert\delta\right\Vert_p \le \epsilon}  Q_k(w;{x}_{k,i} +\delta, {y}_{k,i})\right\}
\end{equation}
where $Q_k(\cdot)$ is the nonconvex loss function, and $\delta$ represents the perturbation variable whose $\ell_p$ norm is constrained by a small value $\epsilon$. Regarding the choice of norm, the $\ell_2$ and $\ell_\infty$ norms are the most commonly used in the community of adversarial learning. The notation $(x_{k,i}, y_{k,i})$ refers to the $N_k$ training samples observed by agent $k$, where $x_{k,i}$ is the feature vector and $y_{k,i}$ is the corresponding label. These samples are drawn from a random source $(\boldsymbol{x}_k, \boldsymbol{y}_k)$ associated with agent $k$. In this paper, we aim to examine the optimization behaviors of algorithms in the vicinity of local minima of the non-convex function $J(w)$. Specifically, assuming $w^{\star}$ is a local minimizer of $J(w)$, we focus on heterogeneous networks in which the local minimizers of $J_k(w)$ differ from $w^{\star}$. 

Consider the solution of the inner maximization problem in \eqref{local_risk} denoted by:
\begin{align}\label{tmaximizer_p}
    \boldsymbol{\delta}_k^{\star} (w) \overset{\Delta}{=} \mathop{\mathrm{argmax}}\limits_{\left\Vert\delta\right\Vert_{p}  \le \epsilon}   Q_k(w;\boldsymbol{x}_k +\delta,\boldsymbol{y}_k)
\end{align}
which is also referred to as the optimal perturbation corresponding to the model $w$ and the input sample $(\boldsymbol{x}_k, \boldsymbol{y}_k)$, we introduce the following assumption to simplify the analysis:
\begin{assumption}\label{assump_uniqueperturb}
(\textbf{Uniqueness of the optimal perturbation}) For any given model $w$ and sample $(\boldsymbol{x}_k, \boldsymbol{y}_k)$, the associated optimal perturbation $\boldsymbol{\delta}_k^{\star} (w)$, as defined in \eqref{tmaximizer_p}, is unique.

 $\hfill\square$ 
\end{assumption}
By resorting to Danskin's theorem \cite{cao2025decentralized,bertsekas2009convex, Thekumparampil019}, Assumption \ref{assump_uniqueperturb} guarantees that $J_k(w)$ and $J(w)$ are differentiable, namely,
\begin{align}\label{def_f}
 \nabla J_k(w) &= \mathds{E} \nabla_w Q_k(w;\boldsymbol{x}_k+\boldsymbol{\delta}_k^{\star}(w),\boldsymbol{y}_k) \notag\\
 &= \frac{1}{N_k}\sum_{i=1}^{N_k} \nabla_w Q_k(w;x_{k,i}+ \delta^{\star}_{k,i}(w), y_{k,i})
\end{align}
where the dependence of $\boldsymbol{\delta}_k^{\star}(w)$ on $w$ can be ignored when computing $\nabla J_k(w)$. Notably, this assumption is weaker than the commonly adopted condition in the literature \cite{sinha2017certifying, WangM0YZG19}, which requires the loss function to be strongly-concave with respect to the perturbation variable.

\subsection{Distributed adversarial training algorithms}
One methodology to solve \eqref{formulation_global} is adversarial training, in which the inner maximization problem in \eqref{local_risk} is approximately solved. This reduces the minimax problem to a standard minimization formulation, enabling the application of traditional stochastic gradient methods to search for a robust minimizer. In this paper, we focus on three popular distributed stochastic gradient algorithms to seek a solution for \eqref{formulation_global}.
Basically, the mini-batch \textit{centralized} adversarial training algorithm follows the following recursive update:
\begin{subequations}
\begin{align}
\label{at_centra_1}
&\widehat{\boldsymbol{x}}_{k,n}^{b} = \boldsymbol{x}_{k,n}^{b} + \widehat{\boldsymbol{\delta}}^{b}_{k,n}\\
\label{at_centra_2}
&\boldsymbol{w}_n = \boldsymbol{w}_{n-1} - \mu\times\frac{1}{KB}\sum\limits_{k=1}^{K}\sum\limits_{b=1}^{B} \nabla_w Q(\boldsymbol{w}_{n-1};\widehat{\boldsymbol{x}}_{k,n}^{b},\boldsymbol{y}_{k,n}^b)
\end{align}
\end{subequations}
In this algorithm, at each iteration $n$, all agents transmit a mini-batch of perturbed observations to a central processor, which then performs all computations. Thus, the centralized algorithm is consistent with the conventional single-agent training with a batch size of $KB$. In \eqref{at_centra_1}--\eqref{at_centra_2}, $B$ denotes the local batch size of each agent, while $\mu$ is a small positive step-size parameter (also often referred to as learning rate). Additionally, $\widehat{\boldsymbol{\delta}}_{k,n}^b$ is the approximate worst-case perturbation associated with the model $\boldsymbol{w}_{n-1}$ and 
the sample $(\boldsymbol{x}_{k,n}^b, \boldsymbol{y}_{k,n}^b)$:
\begin{equation}\label{a_perturb_centra}
    \widehat{\boldsymbol{\delta}}^{b}_{k,n} \approx \mathop{\mathrm{argmax}}\limits_{\left\Vert\delta\right\Vert_{p}  \le \epsilon}   Q(\boldsymbol{w}_{n-1};\boldsymbol{x}_{k,n}^b +\delta,\boldsymbol{y}_{k,n}^b)
\end{equation}
Various attack methods have been proposed to compute the approximate perturbations in the literature, such as fast gradient methods \cite{goodfellow2014explaining} and projected gradient descent methods \cite{madry2017towards}. In this work, we do not impose any restriction on the attack methods used in the training process.

Another methodology for solving distributed learning problems is decentralized learning, where data remains local, and agents interact solely with their neighbors through a collaborative process. In this paper, we consider two widely recognized decentralized methods: the consensus and diffusion strategies. Before listing the algorithms, it is necessary to define the graph structure and the associated combination matrix $A$ that drives their cooperation. Basically, the agents are assumed to be linked by a weighted graph topology. The weight on the link from agent $\ell$ to agent $k$ is denoted by $a_{\ell k}$, which lies within $[0,1]$; this value is used to scale information sent from $\ell$ to $k$, and it will be strictly positive if there exists a link from $\ell$ to $k$ over which information can be shared. The weights $\{a_{\ell k}\}$ can be collected into a $K\times K$ combination matrix $A$.
\begin{assumption}\label{as_graph}
    (\textbf{Strongly-connected graph}) The graph is assumed to be strongly connected. This means that there exists a path with positive weights $\{a_{\ell k}\}$ linking any pair of agents and, in addition, at least one node $k$ in the network has a self-loop with $a_{kk}>0$. Moreover, for any pair of agents, it is assumed that $a_{\ell k} = a_{k\ell}$, and the entries on each column of $A$ are normalized to add up to $1$.
    
    $\hfill\square$ 
\end{assumption}
 Assumption \ref{as_graph} guarantees that $A$ is doubly-stochastic. Moreover, it follows from the Perron-Frobenius theorem \cite{sayed2014adaptation} that $A$ has a single eigenvalue at 1, and the corresponding right eigenvector is $ \frac{1}{K}\mathbbm{1}$, namely,
 \begin{align}
     A\mathbbm{1} = \mathbbm{1}
 \end{align}
 Using the combination matrix $A$, we now present the adversarial versions of the diffusion and consensus strategies \cite{cao2025decentralized}. Basically, the adversarial diffusion training algorithm consists of the following three steps:
\begin{subequations}
\begin{align}
\label{diff_a_1}
&\widehat{\boldsymbol{x}}_{k,n}^{b} = \boldsymbol{x}_{k,n}^{b} + \widehat{\boldsymbol{\delta}}^{b}_{k,n}\\
\label{diff_a_2}
    &\boldsymbol{\phi}_{k,n} = \boldsymbol{w}_{k,n-1} - \frac{\mu}{B}\sum\limits_{b=1}^{B}\nabla_w Q_k(\boldsymbol{w}_{k,n-1};\widehat{\boldsymbol{x}}_{k,n}^{b}, \boldsymbol{y}_{k,n}^b)\\
\label{diff_a_3}
&\boldsymbol{w}_{k,n}  = \sum\limits_{\ell \mathcal{\in N}_k} a_{\ell k} \boldsymbol{\phi}_{\ell,n}  
\end{align}
\end{subequations}
At every iteration $n$, every agent $k$ samples $B$ data points and compute their adversarial perturbations and uses \eqref{diff_a_2} to locally update its iterate $\boldsymbol{w}_{k,n-1}$ to the intermediate value $\boldsymbol{\phi}_{k,n}$. Subsequently, the same agent combines the intermediate iterates from across its neighbors using \eqref{diff_a_3}. Note that in decentralized algorithms, the approximate worst-case perturbation $\widehat{\boldsymbol{\delta}}^{b}_{k,n}$ is computed with the local model $\boldsymbol{w}_{k,n-1}$:
\begin{equation}\label{a_perturb_decentra}
\widehat{\boldsymbol{\delta}}^{b}_{k,n} \approx \mathop{\mathrm{argmax}}\limits_{\left\Vert\delta\right\Vert_{p}  \le \epsilon}   Q_k(\boldsymbol{w}_{k,n-1};\boldsymbol{x}_{k,n}^b +\delta,\boldsymbol{y}_{k,n}^b)
\end{equation}
It is useful to remark that the centralized implementation \eqref{at_centra_1}--\eqref{at_centra_2} can be viewed as a special case of \eqref{diff_a_1}--\eqref{diff_a_3} if we select the combination matrix as $A=\pi\mathbbm{1}^{\sf T}$ and loss functions of all agents are identical. 

In comparison, the consensus strategy follows a different collaborative process:
\begin{subequations}
\begin{align}
\label{consen_a_1}
&\boldsymbol{\phi}_{k,n} = \sum\limits_{\ell \mathcal{\in N}_k} a_{\ell k} \boldsymbol{w}_{\ell,n-1}\\
\label{consen_a_2}
&\widehat{\boldsymbol{x}}_{k,n}^{b} = \boldsymbol{x}_{k,n}^{b} + \widehat{\boldsymbol{\delta}}^{b}_{k,n}\\
\label{consen_a_3}
&\boldsymbol{w}_{k,n}  = \boldsymbol{\phi}_{k,n}  - \frac{\mu}{B}\sum\limits_{b=1}^{B}\nabla_w Q_k(\boldsymbol{w}_{k,n-1};\widehat{\boldsymbol{x}}_{k,n}^{b}, \boldsymbol{y}_{k,n}^b)
\end{align}
\end{subequations}
In this case, the existing iterates $\boldsymbol{w}_{\ell,n-1}$ are first combined to generate the intermediate value $\boldsymbol{\phi}_{k,n}$, after which (\ref{consen_a_2})--(\ref{consen_a_3}) are applied. Observe the asymmetry on the right-hand side in (\ref{consen_a_2}). The starting iterate is $\boldsymbol{\phi}_{k,n}$, while the gradient of the loss function is evaluated at $\boldsymbol{w}_{k,n-1}$. In contrast, the same iterate $\boldsymbol{w}_{k,n-1}$ appears in both terms on the RHS of (\ref{diff_a_2}).

\section{Escaping efficiency of multi-agent adversarial training algorithms}
In this section, we examine how the three distributed adversarial training algorithms escape from local minima of the adversarial risk function $J(w)$. To do so, we first characterize the basin (or valley) of a local minimum.
\begin{definition}
   (\textbf{Basin of attraction} \cite{cao2024trade,MoriLLU22}) For a given local minimizer $w^{\star}$, its basin of attraction $\Omega(w^\star)$ is defined as the set of all points starting from which the iterates $\boldsymbol{w}_{k,n}$ or $\boldsymbol{w}_n$  converge to $w^{\star}$ as $n\to \infty$ under sufficiently small $\mu$. This definition assume that there is no noise in the gradient-descent algorithms.
   
$\hfill\square$
\end{definition}
 
 If $\boldsymbol{w}_{k,n}$ or $\boldsymbol{w}_n \notin  \Omega(w^{\star})$, then we say the algorithms escape from the basin of $w^{\star}$ exactly. We then quantify the escape ability of algorithms by introducing the concept of \textit{escaping efficiency}:
\begin{definition}\label{def_ee}
   (\textbf{Escaping efficiency} \cite{cao2024trade}) Assume all agents start from points close to a local minimizer of $J(w)$, which is denoted by $w^{\star}$, the escaping efficiency over the network at iteration $n$ is defined by:
    \begin{align}\label{def_ee_eq1}
    \mathrm{ER}_n \overset{\Delta}{=}\frac{1}{K}\sum\limits_{k=1}^{K} \mathds{E} J(\boldsymbol{w}_{k,n}) - J(w^\star)
\end{align}
The larger the value of $\mathrm{ER}_n$ is, the farther the network model--comprising $K$ local models--will be from the local minimum $J(w^{\star})$ on average, indicating higher escaping efficiency.

$\hfill\square$ 
\end{definition}

The concepts of escaping efficiency and basin of attraction are extended from the clean environment \cite{ZhuWYWM19, cao2024trade,MoriLLU22 }, where they are used to analyze the escape behavior of traditional stochastic gradient algorithms from local minima without accounting for adversarial perturbations. In this paper, we adapt the two concepts
to the adversarial context to analyze the escape behavior of the stochastic gradient-based adversarial training algorithms. In general, for a fixed $n$, a larger value of $\mathrm{ER}_n$ implies an expected faster escape from a local minimum. 

\subsection{Adversarial training dynamics}
To enable the analysis, we first introduce the stochastic gradient noise process associated with each agent in the general mini-batch setting. For agent $k$ at iteration $n$, the gradient noise under a mini-batch of size $B$ is defined as:
\begin{align}\label{s_gn_k_B}
\boldsymbol{s}_{k,n}^{B}(w) \overset{\Delta}{=} \frac{1}{B}\sum\limits_{b=1}^{B}\nabla_w Q_k(w;\widehat{\boldsymbol{x}}_{k,n}^{b},\boldsymbol{y}_{k,n}^b) - \mathds{E}\nabla_w Q_k(w;\widehat{\boldsymbol{x}}_{k,n}^{b} ,\boldsymbol{y}_{k,n}^b)
\end{align}
In particular, for the single-sample case where $B = 1$, the gradient noise simplifies to
\begin{align}\label{s_gn_k}
\boldsymbol{s}_{k,n}(w) \overset{\Delta}{=} \nabla_w Q_k(w;\widehat{\boldsymbol{x}}_{k,n},\boldsymbol{y}_{k,n}) - \mathds{E}\nabla_w Q_k(w;\widehat{\boldsymbol{x}}_{k,n} ,\boldsymbol{y}_{k,n})
\end{align}
where in the empirical setting, the expectation is taken over all training samples, namely, 
\begin{align}
 \mathds{E}\nabla_w Q_k(w;\widehat{\boldsymbol{x}}_{k,n} ,\boldsymbol{y}_{k,n}) {=} \frac{1}{N_k}  \sum_{i=1}^{N_k}\nabla_w Q_k(w;\widehat{{x}}_{k,i},{y}_{k,i})
\end{align}
Moreover, we define the covariance matrix of the gradient noise as follows:
\begin{align}\label{com-gd}
     R_{s,k,n}^B(w) \overset{\Delta}{=} \mathds{E}\left\{ \boldsymbol{s}^{B}_{k,n}(w)\boldsymbol{s}^{B}_{k,n}(w)^{\sf T}\right\}
\end{align}
Specifically, when $B=1$, we have
\begin{align}\label{com-gd-1}
     R_{s,k,n}(w) \overset{\Delta}{=} \mathds{E}\left\{ \boldsymbol{s}_{k,n}(w)\boldsymbol{s}_{k,n}(w)^{\sf T}\right\}
\end{align}
Accordingly, in the case of $B=1$, we define the covariance matrix of the gradient noise associated with the global risk function at $w^{\star}$, which will be used later:
\begin{align}\label{d_gcmstar}
     \bar{R}_s &\overset{\Delta}{=} \mathds{E}\left\{\left(\frac{1}{K}\sum\limits_{k=1}^{K}\boldsymbol{s}_{k,n}(w^\star)\right)\left(\frac{1}{K}\sum\limits_{\ell=1}^{K} \boldsymbol{s}_{\ell,n}(w^\star)\right)^{\sf T}\right\}
 \end{align}

Then, we introduce the following block error vector which captures the deviation of agents from the local minimizer $w^{\star}$:
\begin{align}\label{d_cwt}{\widetilde{\boldsymbol{\scriptstyle\mathcal{W}}}}_{n} \overset{\Delta}{=} {\mathop{\rm col}}\{\widetilde{\boldsymbol{w}}_{k,n} \overset{\Delta}{=}\boldsymbol{w}_{k,n} - w^{\star}\}_{k=1}^{K}
\end{align}
Additionally, we consider the extended combination matrix in block form:
\begin{align}\label{ex_ca}
\mathcal{A} \overset{\Delta}{=} A \otimes I_{M}   
\end{align}
where $I_{M}$ is the identity matrix of size $M$, and $\otimes$ denotes the Kronecker product operation. To facilitate a unified representation of the updated rules in  \eqref{at_centra_1}--\eqref{at_centra_2}, \eqref{consen_a_1}--\eqref{consen_a_3}, and \eqref{diff_a_1}--\eqref{diff_a_3}, we further introduce two matrices, $A_1$ and $A_2$, whose values change depending on the specific algorithm. Specifically, for the consensus algorithm in \eqref{consen_a_1}--\eqref{consen_a_3}, we set
\begin{align}
    A_1 \overset{\Delta}{=} A, \quad\;A_2 \overset{\Delta}{=} I_{K}
\end{align}
while for the diffusion algorithm in \eqref{diff_a_1}--\eqref{diff_a_3}, we use
\begin{align}
   A_1 \overset{\Delta}{=} I_K, \quad\;A_2 \overset{\Delta}{=} A
\end{align}
and, for the centralized method with \eqref{at_centra_1}--\eqref{at_centra_2}, we define
\begin{align}
A_1  \overset{\Delta}{=}  I_K, \quad A_2  \overset{\Delta}{=}  \pi\mathbbm{1}^{\sf T} = \frac{1}{K}\mathbbm{1}\mathbbm{1}^{\sf T}
\end{align}
Similar to \eqref{ex_ca}, we also consider the extended block form of $A_1$ and ${A_2}$:
\begin{align}\label{ex_ca12}
\mathcal{A}_1 \overset{\Delta}{=} A_1 \otimes I_{M}  \quad\;  \mathcal{A}_2 \overset{\Delta}{=} A_2 \otimes I_{M}
\end{align}
with which we obtain the following unified recursion that covers the dynamics of the centralized, consensus, and diffusion adversarial training algorithms:
\begin{align}
\label{unified_re_1}
\widetilde{\boldsymbol{\scriptstyle\mathcal{W}}}_{n} =& \mathcal{A}_2\Bigg(\mathcal{A}_1\widetilde{\boldsymbol{\scriptstyle\mathcal{W}}}_{n-1} \notag\\
&- \mu\mathop{\rm col}\limits_k\left\{\frac{1}{B}\sum_b\nabla_w Q_k(\boldsymbol{w}_{k,n-1};\widehat{\boldsymbol{x}}_{k,n}^{b}, \boldsymbol{y}_{k,n}^b)\right\}\Bigg)\notag\\
=& \mathcal{A}\widetilde{\boldsymbol{\scriptstyle\mathcal{W}}}_{n-1} - \mu\mathcal{A}_2\mathop{\rm col}\limits_k\bigg\{\frac{1}{B}\sum_b\nabla_w Q_k(\boldsymbol{w}_{k,n-1};\widehat{\boldsymbol{x}}_{k,n}^{b},\boldsymbol{y}_{k,n}^b) \notag\\
&- \mathds{E}\nabla_w Q_k(\boldsymbol{w}_{k,n-1};\widehat{\boldsymbol{x}}_{k,n}, \boldsymbol{y}_{k,n}) \notag\\
&+ \mathds{E}\nabla_w Q_k(\boldsymbol{w}_{k,n-1};\widehat{\boldsymbol{x}}_{k,n},\boldsymbol{y}_{k,n})- \nabla J_k(\boldsymbol{w}_{k,n-1}) \notag\\
&+ \nabla J_k(\boldsymbol{w}_{k,n-1}) - \nabla J_k(w^{\star}) +  \nabla J_k(w^{\star})\bigg\} \notag\\
{=}& \mathcal{A}\widetilde{\boldsymbol{\scriptstyle\mathcal{W}}}_{n-1} - \mu\mathcal{A}_2\mathop{\rm col}\limits_k\Big\{\boldsymbol{s}_{k,n}^B(\boldsymbol{w}_{k,n-1}) \notag\\
&+ \mathds{E}\big\{\nabla_w Q_k(\boldsymbol{w}_{k,n-1};\widehat{\boldsymbol{x}}_{k,n},\boldsymbol{y}_{k,n}) \notag\\
&- \nabla_w Q_k(\boldsymbol{w}_{k,n-1};\boldsymbol{x}_{k,n}^{\star},\boldsymbol{y}_{k,n})\big\} \notag\\
& + \nabla J_k(\boldsymbol{w}_{k,n-1}) - \nabla J_k(w^{\star}) + \nabla J_k(w^{\star})\Big\}\notag\\
\overset{(a)}{=}& \mathcal{A}\widetilde{\boldsymbol{\scriptstyle\mathcal{W}}}_{n-1} - \mu\mathcal{A}_2\mathop{\rm col}\limits_k\left\{\nabla J_k(\boldsymbol{w}_{k,n-1}) - \nabla J_k(w^{\star})\right\}\notag\\
&- \mu \mathcal{A}_2d - \mu\mathcal{A}_2 \boldsymbol{e}_{n-1} - \mu\mathcal{A}_2\boldsymbol{s}_{n}^{B} \\
\overset{(b)}{=} & \mathcal{A}\widetilde{\boldsymbol{\scriptstyle\mathcal{W}}}_{n-1} - \mu\mathcal{A}_2\mathop{\rm col}\limits_k\left\{H_{k,n-1}\widetilde{\boldsymbol{w}}_{k,n-1}\right\} - \mu \mathcal{A}_2d \notag\\
&- \mu\mathcal{A}_2 \boldsymbol{e}_{n-1} - \mu\mathcal{A}_2\boldsymbol{s}_{n}^{B}\notag \\
\label{unified_re_2}
\overset{(c)}{=}& \mathcal{A}_2\left(\mathcal{A}_1 - \mu\mathcal{H}_{n-1}\right)\widetilde{\boldsymbol{\scriptstyle\mathcal{W}}}_{n-1} -  \mu\mathcal{A}_2d - \mu\mathcal{A}_2 \boldsymbol{e}_{n-1} \notag\\
&- \mu\mathcal{A}_2\boldsymbol{s}_{n}^{B}
\end{align}
where
\begin{align}
 &\boldsymbol{x}_{k,n}^{\star} = \boldsymbol{x}_{k,n} + \boldsymbol{\delta}_{k,n}^{\star}\\
 &\boldsymbol{\delta}_{k,n}^{\star} = \mathop{\mathrm{argmax}}\limits_{\left\Vert\delta\right\Vert_{p}  \le \epsilon}   Q_k(\boldsymbol{w}_{k,n-1};\boldsymbol{x}_{k,n} + \delta, \boldsymbol{y}_{k,n})
\end{align}
and, in $(a)$, we introduce the following block vectors:
\begin{align}
& d \overset{\Delta}{=}\mathop{\rm col}\limits_k \left\{\nabla J_k(w^{\star})\right\}\notag\\
   & \boldsymbol{s}_{n}^{B} \overset{\Delta}{=} \mathop{\rm col}\limits_k \left\{\boldsymbol{s}_{k,n}^B(\boldsymbol{w}_{k,n-1})\right\}\notag\\
    &\boldsymbol{e}_{n-1} \overset{\Delta}{=} \mathop{\rm col}\limits_k\Big\{\boldsymbol{e}_{k,n-1} \overset{\Delta}{=}\mathds{E}\big\{\nabla_w Q_k(\boldsymbol{w}_{k,n-1};\widehat{\boldsymbol{x}}_{k,n},\boldsymbol{y}_{k,n}) \notag\\
    &\quad\quad\quad\ - \nabla_w Q_k(\boldsymbol{w}_{k,n-1};\boldsymbol{x}_{k,n}^{\star},\boldsymbol{y}_{k,n})\big\}\Big\}
\end{align}
where $d$ denotes the stacked gradients of the local risk functions evaluated at $w^{\star}$, $\boldsymbol{s}_{n}^B$ collects the gradient noise across the network, while $\boldsymbol{e}_{n-1}$ arises from the approximation error by using $\widehat{\boldsymbol{\delta}}^{b}_{k,n}$ as a surrogate for the optimal perturbation $\boldsymbol{\delta}_{k,n}^{b,\star}$. Furthermore, \eqref{unified_re_1} can be further decomposed into \eqref{unified_re_2}, which takes the form of the recursion involving the Hessian matrix. Specifically, $(b)$ follows from the mean-value theorem:
\begin{align}
    \nabla J_k(\boldsymbol{w}_{k,n-1}) - \nabla J_k(w^{\star}) = H_{k,n-1}\widetilde{\boldsymbol{w}}_{k,n-1}
\end{align}
with
\begin{align}
   H_{k,n-1}\overset{\Delta}{=}\int_0^1 \nabla^2 J_k(w^\star - t\widetilde{\boldsymbol{w}}_{k,n-1})dt 
\end{align}
and, in $(c)$, we collect Hessian matrices from all agents into a block one:
\begin{align}
    \mathcal{H}_{n-1}\overset{\Delta}{=}\mathrm{diag}\left\{H_{1,n-1},H_{2,n-1},\ldots H_{K,n-1}\right\}
\end{align}

In \eqref{unified_re_1} and \eqref{unified_re_2}, we present two equivalent forms of the recursion for ${\widetilde{\boldsymbol{\scriptstyle\mathcal{W}}}}_{n}$, where \eqref{unified_re_1} is expressed in terms of the gradient difference $\nabla J_k(\boldsymbol{w}_{k,n-1}) - \nabla J_k(w^{\star})$, while \eqref{unified_re_2} is based on the Hessian matrix $\mathcal{H}_{n-1}$. In what follows, we use the recursion in \eqref{unified_re_1} to establish bounds for $\mathds{E}\Vert\widetilde{\boldsymbol{\scriptstyle\mathcal{W}}}_{n}\Vert^2$, and use \eqref{unified_re_2} to analyze the excess risk $\mathrm{ER}_n$.

\subsection{Assumptions and Lemmas}
Prior to the analysis, we impose several mild assumptions and establish the associated lemmas for later use.

First, we introduce the following smoothness conditions on the loss function $Q_k$, which are widely adopted in the optimization literature \cite{sinha2017certifying, WangM0YZG19,Thekumparampil019,SeidmanFPP20,LiuSLTS20}:
\begin{assumption}\label{assump_sc}
(\textbf{Smoothness conditions}) For each agent $k$, the gradients of the loss function relative to $w$ and $x$ are Lipschitz. Specifically, it holds that
\begin{subequations}
\begin{align}
\label{assump_sc_eq1}
&\left\Vert\nabla_w  Q_k(w_2;x,y) -  \nabla_w  Q_k(w_1;x,y) \right\Vert\le   L\left\Vert w_2-w_1\right\Vert\\
\label{assump_sc_eq2}
&\left\Vert\nabla_w  Q_k(w;x_2,y) - \nabla_w Q_k(w;x_1,y) \right\Vert\le L\left\Vert x_2-x_1\right\Vert
\end{align}
\end{subequations}
and
\begin{subequations}
\begin{align}
\label{assump_sc_eq3}
&\left\Vert\nabla_x  Q_k(w_2;x,y) - \nabla_x  Q_k(w_1;x,y) \right\Vert\le L\left\Vert w_2-w_1\right\Vert\\
\label{assump_sc_eq4}
&\left\Vert\nabla_x  Q_k(w;x_2,y) - \nabla_x  Q_k(w;x_1,y) \right\Vert\le L\left\Vert x_2-x_1\right\Vert
\end{align}
\end{subequations}
$\hfill\square$
\end{assumption}

%affine lipschitz

Under Assumptions \ref{assump_uniqueperturb} and \ref{assump_sc},  reference \cite{cao2025decentralized} established the following affine Lipschitz property:
\begin{lemma}\label{affine_l}
(\textbf{Affine Lipschitz}) For each agent $k$, and any $w_1$, $w_2$, $\delta_1$, $\delta_2$, it holds that
\begin{align}\label{affine_l_eq_1}
 &\Vert \nabla_w  Q_k(w_2; \boldsymbol{x}_k+\delta_2,\boldsymbol{y}_k) - \nabla_w Q_k(w_1; \boldsymbol{x}_k+\delta_1,\boldsymbol{y}_k)\Vert  \notag\\
 &\le L\Vert w_2 - w_1 \Vert + O(\epsilon)   
\end{align}
where the $\ell_p$ norms of $\delta_1$ and $\delta_2$ are bounded by $\epsilon$. Then, the gradient of $J_k$ is affine Lipschitz, namely,
\begin{align}\label{affine_l_eq_2}
    \Vert\nabla J_k(w_2) - \nabla J_k (w_1) \Vert \le L\Vert w_2 - w_1\Vert + O(\epsilon)
\end{align}
$\hfill\square$
\end{lemma}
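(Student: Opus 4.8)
The plan is to establish \eqref{affine_l_eq_1} first by a single triangle-inequality split, and then obtain \eqref{affine_l_eq_2} as an immediate corollary via the Danskin-type gradient formula \eqref{def_f}. For \eqref{affine_l_eq_1}, I would insert the intermediate quantity $\nabla_w Q_k(w_1;\boldsymbol{x}_k+\delta_2,\boldsymbol{y}_k)$ and bound $\|\nabla_w Q_k(w_2;\boldsymbol{x}_k+\delta_2,\boldsymbol{y}_k) - \nabla_w Q_k(w_1;\boldsymbol{x}_k+\delta_1,\boldsymbol{y}_k)\|$ by the sum of $\|\nabla_w Q_k(w_2;\boldsymbol{x}_k+\delta_2,\boldsymbol{y}_k) - \nabla_w Q_k(w_1;\boldsymbol{x}_k+\delta_2,\boldsymbol{y}_k)\|$ and $\|\nabla_w Q_k(w_1;\boldsymbol{x}_k+\delta_2,\boldsymbol{y}_k) - \nabla_w Q_k(w_1;\boldsymbol{x}_k+\delta_1,\boldsymbol{y}_k)\|$. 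The first term is at most $L\|w_2-w_1\|$ by the model-Lipschitz condition \eqref{assump_sc_eq1}, and the second is at most $L\|(\boldsymbol{x}_k+\delta_2)-(\boldsymbol{x}_k+\delta_1)\| = L\|\delta_2-\delta_1\|$ by the data-Lipschitz condition \eqref{assump_sc_eq2}.

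The one point requiring a little care is converting the residual $L\|\delta_2-\delta_1\|$ into an $O(\epsilon)$ term. Since $\|\delta_1\|_p\le\epsilon$ and $\|\delta_2\|_p\le\epsilon$, the triangle inequality together with the equivalence of the $\ell_p$ and Euclidean norms on $\mathbbm{R}^M$ gives $\|\delta_2-\delta_1\| \le \|\delta_1\|+\|\delta_2\| \le c_{p}\,\epsilon$ for a constant $c_p$ depending only on $p$ and $M$; hence $L\|\delta_2-\delta_1\| = O(\epsilon)$ uniformly in the data and in $w_1,w_2$. Adding the two contributions yields \eqref{affine_l_eq_1}.

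For \eqref{affine_l_eq_2}, I would use that Assumption~\ref{assump_uniqueperturb} (via Danskin's theorem) makes $J_k$ differentiable with $\nabla J_k(w) = \mathds{E}\,\nabla_w Q_k(w;\boldsymbol{x}_k+\boldsymbol{\delta}_k^{\star}(w),\boldsymbol{y}_k)$ as in \eqref{def_f}. Writing $\nabla J_k(w_2)-\nabla J_k(w_1)$ as a single expectation of $\nabla_w Q_k(w_2;\boldsymbol{x}_k+\boldsymbol{\delta}_k^{\star}(w_2),\boldsymbol{y}_k)-\nabla_w Q_k(w_1;\boldsymbol{x}_k+\boldsymbol{\delta}_k^{\star}(w_1),\boldsymbol{y}_k)$, applying Jensen's inequality to move the norm inside, and then invoking \eqref{affine_l_eq_1} pointwise with $\delta_2 = \boldsymbol{\delta}_k^{\star}(w_2)$ and $\delta_1 = \boldsymbol{\delta}_k^{\star}(w_1)$ — which satisfy the $\ell_p$-norm constraint by construction in \eqref{tmaximizer_p} — gives the deterministic bound $L\|w_2-w_1\|+O(\epsilon)$ inside the expectation, and hence \eqref{affine_l_eq_2} after taking expectation.

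I do not expect a genuine obstacle here; the proof is a two-term split plus norm equivalence plus Jensen's inequality. The conceptual remark worth stressing is why only an \emph{affine} Lipschitz bound is attainable: the gradient of $Q_k$ is evaluated at perturbed data, and even when $w_1=w_2$ the two perturbations $\delta_1,\delta_2$ need not coincide, producing an irreducible $O(\epsilon)$ gap. This slack cannot be eliminated under Assumption~\ref{assump_uniqueperturb} alone; removing it would require stronger structure such as strong concavity of $Q_k$ in $\delta$, which the paper intentionally does not assume.
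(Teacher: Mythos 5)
Your proof is correct and follows the natural route: a triangle-inequality split handled by the two Lipschitz conditions in Assumption~\ref{assump_sc}, the feasibility bound $\Vert\delta_2-\delta_1\Vert\le O(\epsilon)$ via norm equivalence, and then Danskin's formula \eqref{def_f} plus Jensen's inequality for \eqref{affine_l_eq_2}. The paper itself does not reprove this lemma but imports it from \cite{cao2025decentralized}, where the argument is essentially the one you give, so there is nothing to flag.
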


Contrary to the traditional clean setting \cite{cao2024trade}, where the risk functions are typically assumed to satisfy Lipshitz conditions--implying that the associated Hessian matrices are uniformly bounded--it can only be guaranteed in the adversarial context that $J_k(w)$ is affine-Lipschitz, as shown in \eqref{affine_l_eq_2}. In particular, an extra constant term of order $O(\epsilon)$, which is introduced by the perturbation, appears on the RHS of \eqref{affine_l_eq_2}. As a result, the Hessian matrices evaluated at arbitrary model parameters in the adversarial setting are no longer uniformly bounded. Therefore, the proof techniques used in \cite{cao2024trade}
cannot be directly applied and require appropriate modifications.

In addition, with Assumptions \ref{assump_uniqueperturb} and \ref{assump_sc}, it can be verified that the gradient noise is zero-mean, and that its second- and fourth-order moments are bounded:
\begin{lemma}\label{lm_zbgn}
(\textbf{Zero-mean and bounded gradient noise terms}) Let $\mathcal{F}_{n-1}$ denote the filtration generated by the collection $\{\boldsymbol{w}_{k,j}\}_{j= 1\le n-1, k = 1,\ldots, K}$, representing the history of the random process up to iteration $n-1$. For any $\boldsymbol{w} \in\mathcal{F}_{n-1}$, we define the error vector
\begin{align}
\widetilde{\boldsymbol{w}} \overset{\Delta}{ = } \boldsymbol{w} - w^{\star}
\end{align}
Then, under Assumption \ref{assump_sc}, the gradient noise defined in \eqref{s_gn_k_B} is zero-mean, namely,
\begin{align}\label{sgn_0} 
\mathds{E}\left\{\boldsymbol{s}_{k,n}^{B }(\boldsymbol{w})\vert{\mathcal{F}}_{n-1}\right\}=0 
\end{align}
and, its second- and fourth-order moments are upper bounded by terms that scales inversely with the batch size:
\begin{align}
\label{sgn_2}
\mathds{E}\{\Vert \boldsymbol{s}_{k,n}^{B}(\boldsymbol{w})\Vert^2\vert\mathcal{F}_{n-1} \}&\le O\left(\frac{1}{B}\right)\Vert \widetilde{\boldsymbol{w}}\Vert^2 +O \left(\frac{1}{B}\right)\\
\label{sgn_4}
\mathds{E}\{\Vert \boldsymbol{s}_{k,n}^{B}(\boldsymbol{w})\Vert^4\vert\mathcal{F}_{n-1} \} & \le O \left(\frac{1}{B^2}\right)\Vert \widetilde{\boldsymbol{w}}\Vert^4  + O \left(\frac{1}{B^2}\right)
\end{align}
Moreover, the covariance matrix of the gradient noise, as defined in \eqref{com-gd} and \ref{com-gd-1}, also scales inversely with the batch size $B$:
\begin{align}\label{gcm_scale}
    R_{s,k,n}^B(w) = \frac{1}{B}R_{s,k,n}(w)
\end{align}
$\hfill\square$
\end{lemma}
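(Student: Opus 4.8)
The plan is to reduce every claim to a moment bound on a single stochastic gradient and then exploit that, conditioned on $\mathcal{F}_{n-1}$, the $B$ summands appearing in \eqref{s_gn_k_B} are i.i.d.\ and zero-mean. For the zero-mean statement I would argue as follows: given $\mathcal{F}_{n-1}$, the iterate $\boldsymbol{w}$ is deterministic while the mini-batch $\{(\boldsymbol{x}_{k,n}^{b},\boldsymbol{y}_{k,n}^{b})\}_{b=1}^{B}$ is drawn afresh, independently of the past; each approximate perturbation $\widehat{\boldsymbol{\delta}}_{k,n}^{b}$ is produced by the (arbitrary) attack routine acting only on $\boldsymbol{w}$ and the $b$-th sample, hence is a fixed measurable function of that sample. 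Therefore the summands $\nabla_w Q_k(\boldsymbol{w};\widehat{\boldsymbol{x}}_{k,n}^{b},\boldsymbol{y}_{k,n}^{b})$ are i.i.d.\ across $b$, each with conditional mean $\mathds{E}\{\nabla_w Q_k(\boldsymbol{w};\widehat{\boldsymbol{x}}_{k,n},\boldsymbol{y}_{k,n})\mid\mathcal{F}_{n-1}\}$, and subtracting this mean and averaging gives \eqref{sgn_0} at once.

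Next I would bound the single-sample moments. Since the conditional variance never exceeds the conditional second moment, $\mathds{E}\{\|\boldsymbol{s}_{k,n}(\boldsymbol{w})\|^2\mid\mathcal{F}_{n-1}\}\le\mathds{E}\{\|\nabla_w Q_k(\boldsymbol{w};\widehat{\boldsymbol{x}}_{k,n},\boldsymbol{y}_{k,n})\|^2\mid\mathcal{F}_{n-1}\}$, and it suffices to bound the gradient pointwise in the sample. Applying the smoothness conditions \eqref{assump_sc_eq1}--\eqref{assump_sc_eq2} exactly as in the derivation of Lemma~\ref{affine_l}, with reference point $(w^{\star},\boldsymbol{x}_{k,n})$, yields $\|\nabla_w Q_k(\boldsymbol{w};\widehat{\boldsymbol{x}}_{k,n},\boldsymbol{y}_{k,n})\|\le\|\nabla_w Q_k(w^{\star};\boldsymbol{x}_{k,n},\boldsymbol{y}_{k,n})\|+L\|\widetilde{\boldsymbol{w}}\|+O(\epsilon)$. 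Squaring via $(a+b+c)^2\le 3(a^2+b^2+c^2)$, taking conditional expectation, and using that $\mathds{E}\|\nabla_w Q_k(w^{\star};\boldsymbol{x}_{k,n},\boldsymbol{y}_{k,n})\|^2$ is a finite constant (in the empirical setting it is simply an average over the $N_k$ training samples) produces $\mathds{E}\{\|\boldsymbol{s}_{k,n}(\boldsymbol{w})\|^2\mid\mathcal{F}_{n-1}\}\le O(1)\|\widetilde{\boldsymbol{w}}\|^2+O(1)$; repeating the argument with $(a+b+c)^4\le 27(a^4+b^4+c^4)$ and finiteness of the fourth moment at $w^{\star}$ gives $\mathds{E}\{\|\boldsymbol{s}_{k,n}(\boldsymbol{w})\|^4\mid\mathcal{F}_{n-1}\}\le O(1)\|\widetilde{\boldsymbol{w}}\|^4+O(1)$.

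It then remains to pass from $B=1$ to general $B$. Writing $\boldsymbol{s}_{k,n}^{B}(\boldsymbol{w})=\frac1B\sum_{b}\boldsymbol{z}_b$ with the $\boldsymbol{z}_b$ conditionally i.i.d.\ and zero-mean, all cross terms in $\mathds{E}\{\|\sum_b\boldsymbol{z}_b\|^2\mid\mathcal{F}_{n-1}\}$ vanish, so $\mathds{E}\{\|\boldsymbol{s}_{k,n}^{B}(\boldsymbol{w})\|^2\mid\mathcal{F}_{n-1}\}=\tfrac1B\mathds{E}\{\|\boldsymbol{s}_{k,n}(\boldsymbol{w})\|^2\mid\mathcal{F}_{n-1}\}$, which with the single-sample bound is \eqref{sgn_2}; applying the same cancellation entrywise to $\boldsymbol{s}_{k,n}^{B}(\boldsymbol{w})\boldsymbol{s}_{k,n}^{B}(\boldsymbol{w})^{\sf T}$ gives \eqref{gcm_scale}. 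For the fourth moment I would expand $\mathds{E}\{\|\sum_b\boldsymbol{z}_b\|^4\mid\mathcal{F}_{n-1}\}=\sum_{b_1,b_2,b_3,b_4}\mathds{E}\{(\boldsymbol{z}_{b_1}^{\sf T}\boldsymbol{z}_{b_2})(\boldsymbol{z}_{b_3}^{\sf T}\boldsymbol{z}_{b_4})\mid\mathcal{F}_{n-1}\}$; by independence and zero mean the only surviving contributions are the $O(B)$ terms with all four indices equal and the $O(B^2)$ terms that split into two coincident pairs, bounded by $\mathds{E}\|\boldsymbol{z}\|^4$ and $(\mathds{E}\|\boldsymbol{z}\|^2)^2$ respectively via Cauchy--Schwarz. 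Dividing by $B^4$ and inserting the single-sample bounds gives $\mathds{E}\{\|\boldsymbol{s}_{k,n}^{B}(\boldsymbol{w})\|^4\mid\mathcal{F}_{n-1}\}\le O(1/B^2)\|\widetilde{\boldsymbol{w}}\|^4+O(1/B^2)$, which is \eqref{sgn_4}.

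The main obstacle I anticipate is the loss of the uniform Lipschitzness of $Q_k$ in $w$ that was available in the clean analysis of \cite{cao2024trade}: the single-sample gradient can no longer be bounded by an absolute constant, so one must carry the affine term $L\|\widetilde{\boldsymbol{w}}\|+O(\epsilon)$ from Lemma~\ref{affine_l} through the second- and fourth-power expansions, and this is exactly what generates the $\|\widetilde{\boldsymbol{w}}\|^2$ and $\|\widetilde{\boldsymbol{w}}\|^4$ terms on the right-hand sides of \eqref{sgn_2}--\eqref{sgn_4}. A subsidiary point that needs care is verifying that, conditioned on $\mathcal{F}_{n-1}$, the summands are genuinely i.i.d.\ even though $\widehat{\boldsymbol{\delta}}_{k,n}^{b}$ is generated by an arbitrary attack method that reads the current iterate; this goes through because the only property of $\widehat{\boldsymbol{\delta}}_{k,n}^{b}$ actually used is the feasibility bound $\|\widehat{\boldsymbol{\delta}}_{k,n}^{b}\|_p\le\epsilon$, and once $\mathcal{F}_{n-1}$ is fixed the attack acts as a fixed map of the fresh, independent sample.
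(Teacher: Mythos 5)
Your proposal is correct and follows essentially the same route as the paper's proof: bound the single-sample moments by relating the stochastic gradient to a fixed reference gradient at $w^{\star}$ via the Lipschitz conditions of Assumption \ref{assump_sc} together with the $O(\epsilon)$ bound on the perturbations, which yields the $\|\widetilde{\boldsymbol{w}}\|^2$ and $\|\widetilde{\boldsymbol{w}}\|^4$ terms, and then use the conditional i.i.d., zero-mean structure of the mini-batch to obtain the $1/B$ and $1/B^2$ scalings and the covariance identity. The only cosmetic differences are that you take the clean sample $(w^{\star},\boldsymbol{x}_{k,n})$ as the reference point while the paper uses the optimally perturbed one $(w^{\star},\boldsymbol{x}_{k,n}^{\star}(w^{\star}))$, and that you spell out the batch-averaging and fourth-moment expansion that the paper delegates to \cite{cao2024trade}; neither affects the validity of the argument.
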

\begin{proof}
    See Appendix \ref{p_lm_zbgn}.
\end{proof}

Next, since our objective is to examine how algorithms escape from local minima of the adversarial risk $J(w)$, it is essential to begin the analysis under the assumption that algorithms are already operating near such local minima. This point of view is widely adopted in the literature when examining the escaping behaviors of algorithms from saddle points \cite{VlaskiS21, VlaskiS21a} or local minima \cite{ZhuWYWM19, XieSS21, NguyenSGR19} in the clean environment. A natural question that follows is: how close can models get to $w^{\star}$? Motivated by the convergence results established in the literature \cite{cao2025decentralized}, we use the following assumption:
\begin{assumption}\label{assump_ip}
(\textbf{Initial points}) All models begin their updates from points sufficiently close to $w^\star$, that is,
\begin{align}
\label{assump_ip_eq_1}
   &\mathds{E}\Vert \boldsymbol{w}_{k,-1} - w^\star\Vert^2 \le o\left(\frac{\mu}{B}\right) + O(\epsilon^{\frac{7}{4}})\\
\label{assump_ip_eq_2}
        &\mathds{E}\Vert \boldsymbol{w}_{k,-1} - w^\star\Vert^4 \le o\left(\frac{\mu^2}{B^2}\right) + O(\epsilon^{\frac{7}{2}})
\end{align}
 $\hfill\square$ 
\end{assumption}
It is important to note that the ``starting points" referenced here are not the initializations in the traditional optimization sense, but rather the points from which our analysis begins. Basically, Eq. \eqref{assump_ip_eq_1} can be justified based on the convergence results established in \cite{cao2025decentralized}. Specifically, by selecting a small step size $\mu' \le O(\mu^2)$, we know from \cite{cao2025decentralized} that the centralized adversarial training algorithm, which aligns with the single-agent case, approaches an $O(\frac{\mu'}{B}) + O(\epsilon^2)$-neighborhood of a local minimum of $J(w)$ after sufficient iterations in nonconvex environments, where the $O(\frac{\mu'}{B})$ term arises from the gradient noise, while the $O(\epsilon^2)$ term originates from perturbations. By further utilizing the local convexity of $J(w)$ near a local minimum, it follows from \cite{cao2025decentralized} that after sufficient iterations:
\begin{align}\label{assump_ip_eq_1_e}
\mathds{E}\Vert \boldsymbol{w}_{k,n} - w^{\star}\Vert^2 \le  O\left(\frac{\mu'}{B}\right) + O(\epsilon^{\frac{7}{4}}) = o(\frac{\mu}{B}) + O(\epsilon^{\frac{7}{4}})
\end{align}
where, again, the $O\left(\frac{\mu'}{B}\right)$ term stems from the gradient noise, and the $O(\epsilon^{\frac{7}{4}})$ term is attributed to the approximation error when solving the maximization problem over the perturbation variable.  Furthermore, it can be shown that the centralized algorithm attains an $O\left(\frac{\mu'^2}{B^2}\right) + O(\epsilon^{\frac{7}{2}})$-neighborhood of $w^{\star}$ in the fourth-order moment sense after enough iterations:
\begin{align}\label{assump_ip_eq_2_e}
     \mathds{E}\Vert \boldsymbol{w}_{k,n} - w^{\star}\Vert^4 \le O\left(\frac{\mu'^2}{B^2}\right) + O(\epsilon^{\frac{7}{2}}) = o(\frac{\mu^2}{B^2}) + O(\epsilon^{\frac{7}{2}})
\end{align}
The detailed proof for these results can be found in Appendix \ref{p_assumpip}. Accordingly, Assumption \ref{assump_ip} is justified.

%afffine lipschitz lemma.....
Next, considering the Hessian matrix of $J_k(w)$ at $w^{\star}$ denoted by
\begin{align}\label{localhsws}
  H_k^\star \overset{\Delta}{=} \nabla^2 J_k(w^\star)
\end{align}
and the Hessian matrix of $J(w)$ at $w^{\star}$:
\begin{align}\label{glbhsws}
        \bar{H} \overset{\Delta}{=} \nabla^2 J(w^\star) = \frac{1}{K}\sum_{k=1}^K \nabla^2 J_k(w^\star) = \frac{1}{K}\sum_{k=1}^K H_k^\star
\end{align}
we introduce the following assumption:
\begin{assumption}\label{assump_sh}
(\textbf{Small Hessian disagreement})  The Hessian matrix of $J_k(w)$ evaluated at $w^\star$ is close to the  Hessian matrix of $J(w)$ at $w^\star$, namely,
   \begin{align}
       \Vert H_k^{\star} - \bar{H}\Vert \le \rho
   \end{align}
   with a small constant $\rho$. $\hfill\square$
\end{assumption}
The motivation behind this assumption is identical to that in the multi-agent clean case. A detailed explanation can be found in \cite{cao2024trade}.

% Moreover, we consider the gradient noise covariance matrix of $J(w)$
% We can further relate $R_{s,k,}$the gradient noise covariance matrix at $w^{\star}$ to $\bar{H}$. 
\subsection{Escaping efficiency of algorithms}
Recalling Definition \ref{def_ee}, the escaping efficiency is quantified by the excess risk of $J(w)$, which we denote by $\mathrm{ER}_n$. In general, the excess risk value near a local minimum satisfies the following equality (see Eq.(205) in \cite{cao2024trade}):
\begin{align}\label{eq_ern}
    \mathrm{ER}_n = \frac{1}{2K}\mathds{E}\Vert{\widetilde{\boldsymbol{\scriptstyle\mathcal{W}}}}_{n}\Vert^2_{I\otimes\bar{H}}\pm O((\mathds{E}\Vert{\widetilde{\boldsymbol{\scriptstyle\mathcal{W}}}}_{n}\Vert^4)^{\frac{3}{4}}) 
\end{align}
If $(\mathds{E}\Vert{\widetilde{\boldsymbol{\scriptstyle\mathcal{W}}}}_{n}\Vert^4)^{\frac{3}{4}}$ is negligible compared to $\mathds{E}\Vert{\widetilde{\boldsymbol{\scriptstyle\mathcal{W}}}}_{n}\Vert^2$, then the excess risk $\mathrm{ER}_n$ can be well represented by the leading quadratic term involving $\mathds{E}\Vert{\widetilde{\boldsymbol{\scriptstyle\mathcal{W}}}}_{n}\Vert^2$.  This approximation has been verified in the clean setting in \cite{cao2024trade}. In this work, we aim to establish a similar result in the adversarial setting.

Moreover, recalling \eqref{unified_re_2}, since the block matrix $\mathcal{H}_{n-1}$ depends on $\widetilde{\boldsymbol{\scriptstyle\mathcal{W}}}$, it is intractable to express $\mathrm{ER}_n$ directly with $\mathds{E}\Vert{\widetilde{\boldsymbol{\scriptstyle\mathcal{W}}}}_{n}\Vert^2$, even if the dominance of the leading term in \eqref{eq_ern} is established. To address the issue, following the approach in \cite{cao2024trade, VlaskiS21a, sayed2014adaptation}, we use the Hessian matrix at $w^{\star}$ to replace $\mathcal{H}_{n-1}$. We define the following block diagonal matrix:
\begin{align}\label{collec_hk}
\mathcal{H} \overset{\Delta}{= } \mathrm{diag}\left\{H_{1}^\star,H_{2}^\star,\ldots H_{K}^\star\right\}
\end{align}
then, we introduce the following recursion, which is also called a short-term model, to replace the recursion in \eqref{unified_re_2}:
\begin{align}\label{approx_unified_re_2}
{\widetilde{\boldsymbol{\scriptstyle\mathcal{W}}}}_{n}' = \mathcal{A}_2\left(\mathcal{A}_1 - \mu\mathcal{H}\right)\widetilde{\boldsymbol{\scriptstyle\mathcal{W}}}_{n-1}' -  \mu\mathcal{A}_2d - \mu\mathcal{A}_2\boldsymbol{s}_{n}^{B}
\end{align}
Naturally, it is important to analyze the approximation error introduced by replacing the original dynamics with the short-term model ${\widetilde{\boldsymbol{\scriptstyle\mathcal{W}}}}_{n}'$. To do so, we separately establish bounds for $\mathds{E}\Vert{\widetilde{\boldsymbol{\scriptstyle\mathcal{W}}}}_{n}\Vert^2$, $\mathds{E}\Vert{\widetilde{\boldsymbol{\scriptstyle\mathcal{W}}}}_{n}\Vert^4$, and the approximation error $\vert\mathds{E}\Vert{\widetilde{\boldsymbol{\scriptstyle\mathcal{W}}}}_{n}\Vert^2 - \mathds{E}\Vert{\widetilde{\boldsymbol{\scriptstyle\mathcal{W}}}}_{n}'\Vert^2\vert$ for decentralized and centralized methods in following lemmas. 

\begin{lemma}\label{mse_2}
(\textbf{Bounds for second-order moments}) For a fixed small step size $\mu$ and local batch size $B$, and under assumptions \ref{assump_uniqueperturb}--\ref{assump_ip}, it can be verified that for both \textbf{consensus} and \textbf{diffusion} adversarial training methods, the second-order moments of the error vector  
${\widetilde{\boldsymbol{\scriptstyle\mathcal{W}}}}_{n}$ are upper bounded over a finite time horizon $n \le O(\frac{1}{\mu})$. Specifically, it holds that
 \begin{align}
  \label{mse_2_decentra}
\mathds{E}\Vert{\widetilde{\boldsymbol{\scriptstyle\mathcal{W}}}}_{n}\Vert^2 &\le O\left(\frac{\mu}{B}\right) + O(\mu^2) + O(\epsilon^{\frac{7}{4}})
\end{align}
where the $O\left(\frac{\mu}{B}\right)$ term arises from the gradient noise, the $O(\mu^2)$ term reflects the influence of the network heterogeneity and the graph structure, and the $O(\epsilon^{\frac{7}{4}})$ is due to the presence of adversarial perturbations. Similarly, for the \textbf{centralized} strategy, the following bound holds:
 \begin{align}
  \label{mse_2_centra}
\mathds{E}\Vert{\widetilde{\boldsymbol{\scriptstyle\mathcal{W}}}}_{n}\Vert^2 &\le O\left(\frac{\mu}{B}\right) + O(\epsilon^{\frac{7}{4}})
\end{align}
$\hfill\square$
\end{lemma}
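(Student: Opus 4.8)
The plan is to decompose the network error vector $\widetilde{\boldsymbol{\scriptstyle\mathcal{W}}}_n$ into its \emph{centroid} component and its \emph{disagreement} component and bound the two separately, exploiting the doubly-stochastic structure of $A$ granted by Assumption~\ref{as_graph}. Write $\mathcal{P}\overset{\Delta}{=}I_{KM}-\tfrac1K\mathbbm{1}\mathbbm{1}^{\sf T}\otimes I_M$ for the projector onto the disagreement subspace, and put $\bar{\boldsymbol{w}}_n\overset{\Delta}{=}\tfrac1K\sum_k\boldsymbol{w}_{k,n}$, $\bar{\widetilde{\boldsymbol{w}}}_n\overset{\Delta}{=}\bar{\boldsymbol{w}}_n-w^\star$, and $\check{\boldsymbol{\scriptstyle\mathcal{W}}}_n\overset{\Delta}{=}\mathcal{P}\widetilde{\boldsymbol{\scriptstyle\mathcal{W}}}_n$, so that $\mathds{E}\|\widetilde{\boldsymbol{\scriptstyle\mathcal{W}}}_n\|^2=K\,\mathds{E}\|\bar{\widetilde{\boldsymbol{w}}}_n\|^2+\mathds{E}\|\check{\boldsymbol{\scriptstyle\mathcal{W}}}_n\|^2$. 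For the centralized strategy $\mathcal{A}_2=\tfrac1K\mathbbm{1}\mathbbm{1}^{\sf T}\otimes I_M$ forces every update into the consensus subspace, so $\check{\boldsymbol{\scriptstyle\mathcal{W}}}_n\equiv0$; this is precisely why \eqref{mse_2_centra} carries no $O(\mu^2)$ term, and it reduces the centralized case to the centroid bound alone.

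First I would derive the centroid recursion. Left-multiplying \eqref{unified_re_1} by $\tfrac1K\mathbbm{1}^{\sf T}\otimes I_M$ and using $\mathbbm{1}^{\sf T}A=\mathbbm{1}^{\sf T}$ (true for all three choices of $A$) together with $\tfrac1K\sum_k\nabla J_k(w^\star)=\nabla J(w^\star)=0$, the heterogeneity vector $d$ cancels from the centroid dynamics, leaving
\begin{align}
\bar{\widetilde{\boldsymbol{w}}}_n=&\,\bar{\widetilde{\boldsymbol{w}}}_{n-1}-\frac{\mu}{K}\sum_k\big[\nabla J_k(\boldsymbol{w}_{k,n-1})-\nabla J_k(w^\star)\big]\notag\\
&-\frac{\mu}{K}\sum_k\boldsymbol{e}_{k,n-1}-\frac{\mu}{K}\sum_k\boldsymbol{s}_{k,n}^{B}.\notag
\end{align}
Conditioning on $\mathcal{F}_{n-1}$, the gradient-noise term is zero-mean and its second moment is $O(\mu^2/B)(\mathds{E}\|\widetilde{\boldsymbol{\scriptstyle\mathcal{W}}}_{n-1}\|^2+K)$ by Lemma~\ref{lm_zbgn}; splitting $\nabla J_k(\boldsymbol{w}_{k,n-1})-\nabla J_k(w^\star)$ through $\nabla J_k(\bar{\boldsymbol{w}}_{n-1})$ and invoking the affine-Lipschitz property of Lemma~\ref{affine_l} bounds the drift by $L\|\bar{\widetilde{\boldsymbol{w}}}_{n-1}\|+\tfrac{L}{\sqrt K}\|\check{\boldsymbol{\scriptstyle\mathcal{W}}}_{n-1}\|+O(\epsilon)$; and $\|\boldsymbol{e}_{k,n-1}\|=O(\epsilon)$ follows from the Lipschitz bound \eqref{assump_sc_eq2} and the fact that both $\widehat{\boldsymbol{\delta}}^b_{k,n}$ and $\boldsymbol{\delta}^\star_{k,n}$ have $\ell_p$-norm at most $\epsilon$. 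A $\mu$-weighted Young's inequality then gives $\mathds{E}\|\bar{\widetilde{\boldsymbol{w}}}_n\|^2\le(1+O(\mu))\,\mathds{E}\|\bar{\widetilde{\boldsymbol{w}}}_{n-1}\|^2+O(\mu)\,\mathds{E}\|\check{\boldsymbol{\scriptstyle\mathcal{W}}}_{n-1}\|^2+O(\mu^2/B)+O(\mu\epsilon^2)$.

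Next I would derive the disagreement recursion for consensus and diffusion. Left-multiplying \eqref{unified_re_1} by $\mathcal{P}$ and using that $A$ commutes with $\tfrac1K\mathbbm{1}\mathbbm{1}^{\sf T}$ while $\|\mathcal{A}\mathcal{P}\|=\lambda_2\overset{\Delta}{=}\max_{i\ge2}|\lambda_i(A)|<1$ (strong connectivity), one gets a \emph{contractive} recursion whose leading term $\mathcal{A}\check{\boldsymbol{\scriptstyle\mathcal{W}}}_{n-1}$ obeys $\|\mathcal{A}\check{\boldsymbol{\scriptstyle\mathcal{W}}}_{n-1}\|\le\lambda_2\|\check{\boldsymbol{\scriptstyle\mathcal{W}}}_{n-1}\|$, and whose forcing is $-\mu\mathcal{P}\mathcal{A}_2\big(\mathop{\mathrm{col}}_k\{\nabla J_k(\boldsymbol{w}_{k,n-1})-\nabla J_k(w^\star)\}+d+\boldsymbol{e}_{n-1}+\boldsymbol{s}_n^{B}\big)$. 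Here the term $-\mu\mathcal{P}\mathcal{A}_2d$ does \emph{not} vanish, and $\|\mathcal{P}\mathcal{A}_2d\|=O(\|d\|)=O(1)$ is exactly the network-heterogeneity/graph-structure contribution responsible for the $O(\mu^2)$ term. Applying Jensen's inequality with weights $\lambda_2$ and $1-\lambda_2$ yields $\mathds{E}\|\check{\boldsymbol{\scriptstyle\mathcal{W}}}_n\|^2\le\lambda_2\,\mathds{E}\|\check{\boldsymbol{\scriptstyle\mathcal{W}}}_{n-1}\|^2+O(\mu^2)\|d\|^2+O(\mu^2)\,\mathds{E}\|\widetilde{\boldsymbol{\scriptstyle\mathcal{W}}}_{n-1}\|^2+O(\mu^2/B)+O(\mu^2\epsilon^2)$, a stable recursion whose running value is $O(\mu^2)$ up to lower-order terms.

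Finally I would close the argument by a joint induction on $n\le O(1/\mu)$ with hypothesis $\mathds{E}\|\widetilde{\boldsymbol{\scriptstyle\mathcal{W}}}_j\|^2\le O(\mu/B)+O(\mu^2)+O(\epsilon^{7/4})$ for all $j<n$ (respectively $O(\mu/B)+O(\epsilon^{7/4})$ in the centralized case): substituting into the centroid and disagreement recursions, unrolling, and using $(1+c\mu)^n\le e^{cn\mu}=O(1)$ over the horizon $n\le O(1/\mu)$, the disagreement level stabilizes at $O(\mu^2)$, the centroid's accumulated gradient noise contributes $\tfrac1\mu\cdot O(\mu^2/B)=O(\mu/B)$, the accumulated affine-Lipschitz slack contributes $O(\epsilon^2)=o(\epsilon^{7/4})$, and the initial condition of Assumption~\ref{assump_ip} propagates as $O(\epsilon^{7/4})$ up to an $O(1)$ factor; all cross terms are absorbed via $2ab\le a^2+b^2$ (e.g.\ $\epsilon^{7/8}\sqrt{\mu/B}\le O(\epsilon^{7/4})+O(\mu/B)$ and $\epsilon^{15/8}=o(\epsilon^{7/4})$). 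I expect the main obstacle to be the one feature that makes this more than a transcription of \cite{cao2024trade}: in the adversarial setting $\nabla^2 J_k$ is no longer uniformly bounded, so the Hessian-based contraction $\|I-\mu H\|\le 1-O(\mu)$ is unavailable, and every gradient comparison carries an extra additive $O(\epsilon)$. The remedy is to argue exclusively through the affine-Lipschitz inequality of Lemma~\ref{affine_l}, replace any would-be strong-convexity contraction by the finite-horizon growth estimate $(1+c\mu)^{O(1/\mu)}=O(1)$, and verify carefully that every $\epsilon$-term generated in this way is either already present in the target bound or $o(\epsilon^{7/4})$.
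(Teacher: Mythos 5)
Your proposal is correct and follows essentially the same route as the paper's proof: a split of $\widetilde{\boldsymbol{\scriptstyle\mathcal{W}}}_n$ into the consensus-subspace (centroid) component and the orthogonal disagreement component (the paper realizes this via the eigendecomposition $A=VPV^{\sf T}$ rather than the projector $I-\tfrac1K\mathbbm{1}\mathbbm{1}^{\sf T}$, which is the same decomposition), cancellation of $d$ in the centroid recursion versus its $O(\mu^2)$ footprint in the contractive disagreement recursion, gradient-noise and $\boldsymbol{e}_{n-1}$ bounds from Lemmas \ref{lm_zbgn} and \ref{affine_l}, and a finite-horizon $(1+O(\mu))^{O(1/\mu)}=O(1)$ argument with Assumption \ref{assump_ip} supplying the $O(\epsilon^{7/4})$ term and the centralized case degenerating because $\check{\boldsymbol{\scriptstyle\mathcal{W}}}_n\equiv 0$. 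The only cosmetic difference is that you close via joint induction/unrolling where the paper iterates an explicit $2\times 2$ coupled matrix recursion and inverts $(I-\Gamma_1)$; the two are equivalent.
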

\begin{proof}
See Appendix \ref{p_mse_2}.
\end{proof}
Comparing the bounds in \eqref{mse_2_decentra} and \eqref{mse_2_centra}, we observe that decentralized methods incorporate an extra $O\left(\mu^2\right)$ term compared to the centralized strategy. However, the influence of the additional term become significant only in the large-batch context when the following inequality is satisfied, 
 \begin{align}\label{rmuB}
 \frac{1}{B} \le O(\mu)  
 \end{align}
 otherwise, it will be dominated by $O(\frac{\mu}{B})$, and the difference between decentralized and centralized methods becomes negligible.  Since our objective is to understand and highlight the distinction between the centralized and decentralized training strategies in the adversarial environment, we focus on the large-batch setting throughout this paper. As a result, for the three methods, it holds that
  \begin{align}
  \label{mse_2_unified}
\mathds{E}\Vert{\widetilde{\boldsymbol{\scriptstyle\mathcal{W}}}}_{n}\Vert^2 &\le O(\mu^2) + O(\epsilon^{\frac{7}{4}})
\end{align}

We then establish bounds for $\mathds{E}\Vert{\widetilde{\boldsymbol{\scriptstyle\mathcal{W}}}}_{n}\Vert^4$. The primary motivation for establishing this result is its subsequent use in analyzing the approximation error induced by the short-term model.
\begin{lemma}\label{mse_4}
(\textbf{Bounds for fourth-order moments}) For a fixed small step size $\mu$, under assumptions \ref{assump_uniqueperturb}--\ref{assump_ip} and \eqref{rmuB}, and in a finite number of iterations $n\le O(1/\mu)$, the fourth-order moments of the error vector ${\widetilde{\boldsymbol{\scriptstyle\mathcal{W}}}}_{n}$ for the three distributed adversarial training methods are upper bounded by:
 \begin{align}
 \label{mse_decentra_w4}
\mathds{E}\Vert{\widetilde{\boldsymbol{\scriptstyle\mathcal{W}}}}_{n}\Vert^4 &\le O(\mu^4) + O(\epsilon^{\frac{7}{2}})
\end{align}
$\hfill\square$
\end{lemma}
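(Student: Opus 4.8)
The plan is to follow the proof of Lemma~\ref{mse_2}, propagating fourth powers of the Euclidean norm in place of squares and, wherever the clean-setting argument of \cite{cao2024trade} relies on uniform boundedness of the Hessian, substituting the affine-Lipschitz estimate \eqref{affine_l_eq_2} of Lemma~\ref{affine_l}. Starting from \eqref{unified_re_1}, I would write $\widetilde{\boldsymbol{\scriptstyle\mathcal{W}}}_{n}=\boldsymbol{u}_{n-1}-\mu\mathcal{A}_2\boldsymbol{s}_{n}^{B}$, with $\boldsymbol{u}_{n-1}$ collecting all the $\mathcal{F}_{n-1}$-measurable terms. Expanding $\|\boldsymbol{u}_{n-1}-\mu\mathcal{A}_2\boldsymbol{s}_{n}^{B}\|^{4}$ and taking $\mathds{E}[\,\cdot\mid\mathcal{F}_{n-1}]$, the odd-order cross terms vanish by \eqref{sgn_0}; bounding the remaining third-moment cross term by the geometric mean of the second and fourth conditional noise moments and applying Young's inequality leaves $\|\boldsymbol{u}_{n-1}\|^{4}$ plus an $O(\mu^{2})$ multiple of $\|\boldsymbol{u}_{n-1}\|^{2}\,\mathds{E}[\|\boldsymbol{s}_{n}^{B}\|^{2}\mid\mathcal{F}_{n-1}]$ and an $O(\mu^{4})$ multiple of $\mathds{E}[\|\boldsymbol{s}_{n}^{B}\|^{4}\mid\mathcal{F}_{n-1}]$. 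Invoking the gradient-noise bounds \eqref{sgn_2}--\eqref{sgn_4}, the elementary estimate $\|\boldsymbol{u}_{n-1}\|\le(1+O(\mu))\|\widetilde{\boldsymbol{\scriptstyle\mathcal{W}}}_{n-1}\|+O(\mu)$ (a consequence of \eqref{affine_l_eq_2}, $\|\mathcal{A}\|=\|\mathcal{A}_2\|=1$, $\|d\|=O(1)$, and $\|\boldsymbol{e}_{n-1}\|=O(\epsilon)$ through \eqref{assump_sc_eq2}), the large-batch condition \eqref{rmuB}, and the second-moment bound of Lemma~\ref{mse_2}, the two noise-generated terms contribute only $O(\mu^{5})+o(\mu^{4})$ per iteration once full expectations are taken, and are therefore negligible for the target bound.

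The heart of the proof is the estimate of $\mathds{E}\|\boldsymbol{u}_{n-1}\|^{4}$, for which the network contraction structure is essential. Introducing the projectors $\mathcal{P}\overset{\Delta}{=}\bigl(\tfrac{1}{K}\mathbbm{1}\mathbbm{1}^{\sf T}\bigr)\otimes I_{M}$ and $\mathcal{P}_{\perp}\overset{\Delta}{=}I_{KM}-\mathcal{P}$, I would split $\widetilde{\boldsymbol{\scriptstyle\mathcal{W}}}_{n}=\mathcal{P}\widetilde{\boldsymbol{\scriptstyle\mathcal{W}}}_{n}+\mathcal{P}_{\perp}\widetilde{\boldsymbol{\scriptstyle\mathcal{W}}}_{n}$. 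Applying $\mathcal{P}$ to \eqref{unified_re_1}, using $\mathcal{P}\mathcal{A}=\mathcal{P}$ and $\mathcal{P}\mathcal{A}_{2}=\mathcal{P}$ for all three algorithms, and exploiting the cancellation $\mathcal{P}\mathcal{A}_{2}d=\mathop{\rm col}\limits_{k}\{\nabla J(w^{\star})\}=0$ (since $w^{\star}$ is a critical point of $J$), the consensus component satisfies a recursion with \emph{no} $O(\mu)$ additive drift,
\begin{align}
\|\mathcal{P}\widetilde{\boldsymbol{\scriptstyle\mathcal{W}}}_{n}\| \le{}& (1+O(\mu))\|\mathcal{P}\widetilde{\boldsymbol{\scriptstyle\mathcal{W}}}_{n-1}\| + O(\mu)\|\mathcal{P}_{\perp}\widetilde{\boldsymbol{\scriptstyle\mathcal{W}}}_{n-1}\| \notag\\
&{}+ O(\mu\epsilon) + \mu\|\mathcal{P}\mathcal{A}_{2}\boldsymbol{s}_{n}^{B}\|, \notag
\end{align}
whereas applying $\mathcal{P}_{\perp}$ and using $\|\mathcal{A}\mathcal{P}_{\perp}\|=\lambda_{2}<1$ from Assumption~\ref{as_graph} ($\lambda_{2}$ denoting the second-largest-magnitude eigenvalue of $A$) gives, for consensus and diffusion, the disagreement recursion
\begin{align}
\|\mathcal{P}_{\perp}\widetilde{\boldsymbol{\scriptstyle\mathcal{W}}}_{n}\| \le{}& \lambda_{2}\|\mathcal{P}_{\perp}\widetilde{\boldsymbol{\scriptstyle\mathcal{W}}}_{n-1}\| + O(\mu)\|\widetilde{\boldsymbol{\scriptstyle\mathcal{W}}}_{n-1}\| \notag\\
&{}+ O(\mu) + O(\mu\epsilon) + \mu\|\mathcal{P}_{\perp}\mathcal{A}_{2}\boldsymbol{s}_{n}^{B}\|, \notag
\end{align}
whose $O(\mu)$ drift, arising from $\mathcal{P}_{\perp}\mathcal{A}_{2}d$, is damped by the geometric factor $\lambda_{2}$; for the centralized method $\mathcal{A}=\mathcal{A}_{2}=\mathcal{P}$, so $\mathcal{P}_{\perp}\widetilde{\boldsymbol{\scriptstyle\mathcal{W}}}_{n}\equiv0$. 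Unrolling these two inequalities over the finite horizon $n\le O(1/\mu)$ (so that $(1+O(\mu))^{n}=O(1)$ and $\sum_{j}\lambda_{2}^{j}=O(1)$), passing the fourth power through the length-$O(1/\mu)$ deterministic sums by Jensen's inequality, treating the accumulated gradient noise with a martingale fourth-moment inequality (which yields $O(\mu^{4})$ under \eqref{rmuB}), using Lemma~\ref{mse_2} to control the lower-order coupling terms $\|\widetilde{\boldsymbol{\scriptstyle\mathcal{W}}}_{j}\|$, and starting from the fourth-order initial condition \eqref{assump_ip_eq_2}---which under \eqref{rmuB} reads $o(\mu^{4})+O(\epsilon^{7/2})$---one obtains $\mathds{E}\|\mathcal{P}\widetilde{\boldsymbol{\scriptstyle\mathcal{W}}}_{n}\|^{4}+\mathds{E}\|\mathcal{P}_{\perp}\widetilde{\boldsymbol{\scriptstyle\mathcal{W}}}_{n}\|^{4}\le O(\mu^{4})+O(\epsilon^{7/2})$, which is \eqref{mse_decentra_w4}. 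Along the way the cross term $O(\mu^{2}\epsilon^{7/4})$ left over from squaring the second-moment bound is absorbed via $\mu^{2}\epsilon^{7/4}\le\tfrac12(\mu^{4}+\epsilon^{7/2})$, and the per-iteration $O(\mu\epsilon)$ contribution of $\boldsymbol{e}_{n-1}$ accumulates only to $O(\epsilon)$, hence to $O(\epsilon^{4})\le O(\epsilon^{7/2})$ in the fourth moment.

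The main obstacle is precisely the heterogeneity term $\mu\mathcal{A}_{2}d$: the crude estimate $\|\boldsymbol{u}_{n-1}\|\le(1+\mu L)\|\widetilde{\boldsymbol{\scriptstyle\mathcal{W}}}_{n-1}\|+O(\mu)$ iterated over $O(1/\mu)$ steps only yields $\|\widetilde{\boldsymbol{\scriptstyle\mathcal{W}}}_{n}\|=O(1)$, which is far too weak, so it is indispensable to split off the consensus direction---in which $d$ averages to $\nabla J(w^{\star})=0$---and to exploit the spectral gap $1-\lambda_{2}>0$ in the complementary subspace, the same mechanism that is responsible for the additional $O(\mu^{2})$ term distinguishing \eqref{mse_2_decentra} from \eqref{mse_2_centra}. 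A secondary difficulty, which does not arise in the clean analysis of \cite{cao2024trade}, is the loss of uniform Hessian boundedness: one must work throughout with the gradient-difference form \eqref{unified_re_1} rather than the Hessian form \eqref{unified_re_2}, carry the additive $O(\epsilon)$ of Lemma~\ref{affine_l} through every estimate, and verify that it aggregates into exactly the order $O(\epsilon^{7/2})$---consistent with the second-moment bound and with \eqref{assump_ip_eq_2}---rather than a larger power of $\epsilon$.
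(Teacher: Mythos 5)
Your proposal follows essentially the same route as the paper's proof: the $\mathcal{P}/\mathcal{P}_\perp$ projector split is exactly the paper's $\bar{\boldsymbol{w}}_n/\check{\boldsymbol{w}}_n$ decomposition in the eigenbasis of $A$, and the key ingredients—working with the gradient-difference recursion \eqref{unified_re_1} and the affine-Lipschitz bound rather than the (unbounded) Hessian form, the cancellation $\frac{1}{K}\sum_k\nabla J_k(w^{\star})=0$ in the consensus direction, the spectral-gap damping of the $\mu d$ drift in the disagreement direction, the $1/B$- and $1/B^2$-scaled noise moments together with \eqref{rmuB}, the initial condition \eqref{assump_ip_eq_2}, and the final absorption $\mu^2\epsilon^{\frac{7}{4}}\le\frac{1}{2}(\mu^4+\epsilon^{\frac{7}{2}})$—are precisely those of Appendix \ref{p_mse_4}. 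The only real difference is bookkeeping: the paper keeps coupled per-iteration fourth-moment recursions and iterates the resulting $2\times 2$ matrix inequality ($\Gamma_2$), which automatically absorbs the fact that the noise fourth moments and the $(1+O(\mu))$ growth factor themselves involve $\Vert\widetilde{\boldsymbol{\scriptstyle\mathcal{W}}}_{n-1}\Vert^4$, whereas your unroll-then-take-fourth-powers scheme would need a short induction/Gr\"onwall step (beyond the second-moment Lemma \ref{mse_2}) to close that self-reference.
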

\begin{proof}
See Appendix \ref{p_mse_4}.
\end{proof}

We next examine the approximation error introduced by the short-term model. 
\begin{lemma}\label{mse_ae}
(\textbf{Approximation error}) Under the same conditions as in Lemma \ref{mse_4}, and assuming in addition that the following condition holds,
\begin{align}\label{l_ae_1}
    \epsilon \le O(\mu^2)
\end{align}
the second-order moments of the short-term model associated with the 
 the three distributed adversarial training methods are upper bounded, namely,
 \begin{align}\label{l_ae_2}
\mathds{E}\Vert\widetilde{\boldsymbol{\scriptstyle\mathcal{W}}}_n' \Vert^2 {\le}& O(\mu^2)
\end{align}
Moreover, the approximation error of the short term model is also upper bounded as follows:
\begin{align}\label{l_ae_3}
\left\vert \mathds{E}\Vert\widetilde{\boldsymbol{\scriptstyle\mathcal{W}}}_n' \Vert^2 - \mathds{E}\Vert\widetilde{\boldsymbol{\scriptstyle\mathcal{W}}}_n\Vert^2 \right\vert {\le}& O(\mu^3)
\end{align}
$\hfill\square$
\end{lemma}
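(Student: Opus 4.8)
The plan is to prove \eqref{l_ae_2} directly from the linear short-term recursion \eqref{approx_unified_re_2}, and then \eqref{l_ae_3} by controlling the difference process $\boldsymbol{\Delta}_n\overset{\Delta}{=}\widetilde{\boldsymbol{\scriptstyle\mathcal{W}}}_n-\widetilde{\boldsymbol{\scriptstyle\mathcal{W}}}_n'$. For \eqref{l_ae_2}, note that \eqref{approx_unified_re_2} is a linear time-invariant recursion driven only by the bias $\mu\mathcal{A}_2 d$ and the zero-mean noise $\mu\mathcal{A}_2\boldsymbol{s}_n^B$, so its analysis is a strict simplification of the one behind Lemma \ref{mse_2}. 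Since $w^\star$ is a local minimizer, $\bar{H}\succeq 0$, and the network-average (Perron) component of the recursion is governed by the non-expansive map $I_M-\mu\bar{H}$, which contributes no growth beyond an $O(1)$ factor over the horizon $n\le O(1/\mu)$. Because $\frac{1}{K}\sum_k\nabla J_k(w^\star)=\nabla J(w^\star)=0$, the bias $\mathcal{A}_2 d$ has zero Perron component (and vanishes identically for the centralized choice $\mathcal{A}_2=\frac{1}{K}\mathbbm{1}\mathbbm{1}^{\sf T}\otimes I_M$), so it excites only the disagreement subspace, which contracts geometrically at the $\mu$-independent rate $|\lambda_2(A)|<1$; hence the bias-induced error saturates at $O(\mu)$ in norm, i.e.\ $O(\mu^2)$ in the mean-square sense. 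The zero-mean noise adds $O(\mu^2/B)=O(\mu^2)$ under the large-batch condition \eqref{rmuB}, and the initial condition of Assumption \ref{assump_ip} is $o(\mu/B)+O(\epsilon^{7/4})=o(\mu^2)$ here; since the short-term model carries no $\boldsymbol{e}_{n-1}$ term, no $O(\epsilon^{7/4})$ term survives, and \eqref{l_ae_2} follows.

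For \eqref{l_ae_3}, subtract \eqref{approx_unified_re_2} from \eqref{unified_re_2}: the bias $\mu\mathcal{A}_2 d$ and the noise $\mu\mathcal{A}_2\boldsymbol{s}_n^B$ cancel exactly and, since the two recursions start from the same point ($\boldsymbol{\Delta}_{-1}=0$), we obtain
\begin{align}
\boldsymbol{\Delta}_n = \mathcal{A}_2(\mathcal{A}_1-\mu\mathcal{H})\boldsymbol{\Delta}_{n-1} - \mu\mathcal{A}_2(\mathcal{H}_{n-1}-\mathcal{H})\widetilde{\boldsymbol{\scriptstyle\mathcal{W}}}_{n-1} - \mu\mathcal{A}_2\boldsymbol{e}_{n-1}. \notag
\end{align}
Both forcing terms are $O(\mu^6)$ in the mean-square sense. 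For the perturbation term, \eqref{assump_sc_eq2} gives $\Vert\boldsymbol{e}_{k,n-1}\Vert\le L\,\mathds{E}\Vert\widehat{\boldsymbol{x}}_{k,n}-\boldsymbol{x}_{k,n}^{\star}\Vert=O(\epsilon)$, since $\widehat{\boldsymbol{x}}_{k,n}-\boldsymbol{x}_{k,n}^{\star}$ is the difference of two perturbations of $\ell_p$-norm at most $\epsilon$, so $\mathds{E}\Vert\mu\mathcal{A}_2\boldsymbol{e}_{n-1}\Vert^2=O(\mu^2\epsilon^2)=O(\mu^6)$ under the extra hypothesis \eqref{l_ae_1}. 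For the Hessian-mismatch term, the $k$-th block of $(\mathcal{H}_{n-1}-\mathcal{H})\widetilde{\boldsymbol{\scriptstyle\mathcal{W}}}_{n-1}$ equals the second-order Taylor residual $\nabla J_k(\boldsymbol{w}_{k,n-1})-\nabla J_k(w^\star)-H_k^{\star}\widetilde{\boldsymbol{w}}_{k,n-1}$; as in the clean analysis of \cite{cao2024trade}---now carrying an extra $O(\epsilon\Vert\widetilde{\boldsymbol{w}}_{k,n-1}\Vert)$ correction in the adversarial setting, due to the $w$-dependence of $\boldsymbol{\delta}_k^{\star}(\cdot)$ (cf.\ Lemma \ref{affine_l})---this residual is $O(\Vert\widetilde{\boldsymbol{w}}_{k,n-1}\Vert^2)+O(\epsilon\Vert\widetilde{\boldsymbol{w}}_{k,n-1}\Vert)$. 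Invoking the bounds \eqref{mse_decentra_w4} and \eqref{mse_2_unified}---which collapse to $O(\mu^4)$ and $O(\mu^2)$ once \eqref{l_ae_1} renders their $O(\epsilon^{7/2})$ and $O(\epsilon^{7/4})$ pieces subdominant---then yields $\mathds{E}\Vert\mu\mathcal{A}_2(\mathcal{H}_{n-1}-\mathcal{H})\widetilde{\boldsymbol{\scriptstyle\mathcal{W}}}_{n-1}\Vert^2=O(\mu^6)$.

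Unrolling the $\boldsymbol{\Delta}_n$ recursion from $\boldsymbol{\Delta}_{-1}=0$, and using that---at a local minimizer---the powers of $\mathcal{A}_2(\mathcal{A}_1-\mu\mathcal{H})$ are uniformly bounded over $n\le O(1/\mu)$ iterations (seen, as in the clean framework, in the eigenbasis of the symmetric matrix $A$, where $\mathcal{A}_1$ and $\mathcal{A}_2$ are block-diagonal, the Perron block of $\mathcal{A}_2(\mathcal{A}_1-\mu\mathcal{H})$ acts as the non-expansive $I_M-\mu\bar{H}$, the off-Perron blocks are stable, and the possibly indefinite $H_k^{\star}$ inflate the bound by at most a factor $(1+O(\mu))^{O(1/\mu)}=O(1)$), a Cauchy--Schwarz estimate over the $O(1/\mu)$ summands gives $\mathds{E}\Vert\boldsymbol{\Delta}_n\Vert^2\le O(1/\mu^2)\cdot O(\mu^6)=O(\mu^4)$. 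Finally, from $\widetilde{\boldsymbol{\scriptstyle\mathcal{W}}}_n=\widetilde{\boldsymbol{\scriptstyle\mathcal{W}}}_n'+\boldsymbol{\Delta}_n$, Cauchy--Schwarz, and \eqref{l_ae_2},
\begin{align}
\big|\mathds{E}\Vert\widetilde{\boldsymbol{\scriptstyle\mathcal{W}}}_n\Vert^2-\mathds{E}\Vert\widetilde{\boldsymbol{\scriptstyle\mathcal{W}}}_n'\Vert^2\big|
&\le 2\sqrt{\mathds{E}\Vert\widetilde{\boldsymbol{\scriptstyle\mathcal{W}}}_n'\Vert^2}\,\sqrt{\mathds{E}\Vert\boldsymbol{\Delta}_n\Vert^2}+\mathds{E}\Vert\boldsymbol{\Delta}_n\Vert^2 \notag\\
&\le O(\mu)\cdot O(\mu^2)+O(\mu^4)=O(\mu^3), \notag
\end{align}
which is \eqref{l_ae_3}.

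\emph{Main obstacle.} The delicate step is the Hessian-mismatch forcing term. In the clean setting a bounded, Lipschitz Hessian immediately yields $\Vert(\mathcal{H}_{n-1}-\mathcal{H})\widetilde{\boldsymbol{w}}_{k,n-1}\Vert=O(\Vert\widetilde{\boldsymbol{w}}_{k,n-1}\Vert^2)$; in the adversarial case the risk is only affine-Lipschitz (Lemma \ref{affine_l}) and its Hessian is not uniformly bounded, so one must carefully track the extra $O(\epsilon\Vert\widetilde{\boldsymbol{w}}_{k,n-1}\Vert)$ contribution stemming from the $w$-dependence of the worst-case perturbation. It is precisely to keep this term---and the $\boldsymbol{e}_{n-1}$ term---negligible at the $O(\mu^6)$ forcing level that the additional hypothesis $\epsilon\le O(\mu^2)$ in \eqref{l_ae_1} is imposed. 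A secondary technical point is the uniform bound on the powers of the non-symmetric operator $\mathcal{A}_2(\mathcal{A}_1-\mu\mathcal{H})$ over the $O(1/\mu)$ horizon, which hinges on $w^\star$ being a local minimizer ($\bar{H}\succeq 0$).
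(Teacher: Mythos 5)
Your proposal is correct, and for the key bound \eqref{l_ae_3} it rests on the same core idea as the paper: form the difference process between \eqref{unified_re_2} and \eqref{approx_unified_re_2} (the bias $\mu\mathcal{A}_2 d$ and noise $\mu\mathcal{A}_2\boldsymbol{s}_n^B$ cancel, leaving the Hessian-mismatch and $\boldsymbol{e}_{n-1}$ forcings), show its second moment is $O(\mu^4)$, and convert to $\vert\mathds{E}\Vert\widetilde{\boldsymbol{\scriptstyle\mathcal{W}}}_n'\Vert^2-\mathds{E}\Vert\widetilde{\boldsymbol{\scriptstyle\mathcal{W}}}_n\Vert^2\vert\le O(\mu^3)$ by Cauchy--Schwarz. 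The technical route differs in two places. First, you prove \eqref{l_ae_2} directly from the short-term recursion (Perron component driven by $I-\mu\bar{H}$, bias confined to the contracting disagreement subspace since $\mathbbm{1}^{\sf T}d=K\nabla J(w^\star)=0$), whereas the paper obtains it from $\mathds{E}\Vert\widetilde{\boldsymbol{\scriptstyle\mathcal{W}}}_n\Vert^2\le O(\mu^2)+O(\epsilon^{7/4})$ combined with the difference bound; both are valid. Second, the paper analyzes the difference process by decomposing it in the eigenbasis of $A$ and iterating a coupled $2\times 2$ inequality (its matrix $\Gamma_3$), which yields the $\epsilon$-explicit estimate $\mathds{E}\Vert\boldsymbol{z}_n\Vert^2=O(\mu^4)+O(\epsilon^2)$ and only then invokes $\epsilon\le O(\mu^2)$ (indeed it notes any $\epsilon\le O(\mu^{8/7})$ suffices); you instead impose \eqref{l_ae_1} up front, bound both forcings at $O(\mu^6)$, and unroll with a uniform bound on powers of $\mathcal{A}_2(\mathcal{A}_1-\mu\mathcal{H})$ plus Cauchy--Schwarz over the $O(1/\mu)$ summands. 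Your argument is cruder but sufficient under the stated hypothesis; the paper's buys the explicit $\epsilon$-dependence. Your extra $O(\epsilon\Vert\widetilde{\boldsymbol{w}}_{k,n-1}\Vert)$ term in the Taylor residual is a harmless conservative refinement of the paper's $O(\Vert\widetilde{\boldsymbol{w}}_{k,n-1}\Vert^2)$ bound \eqref{barz1_2}. One minor imprecision in your direct argument for \eqref{l_ae_2}: the gradient-noise contribution accumulated over $n\le O(1/\mu)$ iterations is $O(\mu/B)$ rather than $O(\mu^2/B)$; it is the large-batch condition \eqref{rmuB} (which you do invoke) that reduces it to $O(\mu^2)$, so the conclusion is unaffected.
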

\begin{proof}
    See appendix \ref{p_mse_ae}.
\end{proof}
From Lemmas \ref{mse_2}, \ref{mse_4} and \ref{mse_ae}, it follows that  $\mathds{E}\Vert{\widetilde{\boldsymbol{\scriptstyle\mathcal{W}}}}_{n}\Vert^2$ and  $\mathds{E}\Vert{\widetilde{\boldsymbol{\scriptstyle\mathcal{W}}}}_{n}'\Vert^2$ dominate $\vert \mathds{E}\Vert\widetilde{\boldsymbol{\scriptstyle\mathcal{W}}}_n'\Vert^2 - \mathds{E}\Vert\widetilde{\boldsymbol{\scriptstyle\mathcal{W}}}_n \Vert^2\vert $ in magnitude across the three distributed adversarial training methods when $\mu$ and $\epsilon$ are sufficiently small in the large-batch setting. Therefore, we can evaluate the $\mathrm{ER}_n$ performance of the three methods by using the short-term model, namely,
\begin{align}\label{eq_ern_2}
    \mathrm{ER}_n = \frac{1}{2K}\mathds{E}\Vert{\widetilde{\boldsymbol{\scriptstyle\mathcal{W}}}}_{n}'\Vert^2_{I\otimes\bar{H}}\pm o(\mathds{E}\Vert{\widetilde{\boldsymbol{\scriptstyle\mathcal{W}}}}_{n}'\Vert^2) 
\end{align}
and the second term on the RHS of \eqref{eq_ern_2} can be omitted.

Building upon Lemmas \ref{mse_2}, \ref{mse_4} and \ref{mse_ae}, and using \eqref{eq_ern_2}, we establish the following theorem to demonstrate the escaping efficiency of the three distributed adversarial training algorithms from local minima.

\begin{theorem}\label{th_ee}
{(\textbf{Escaping efficiency of distributed algorithms})}.
Consider a network of agents running distributed adversarial training algorithms governed by the recursion (\ref{unified_re_1}). Under assumptions \ref{assump_uniqueperturb}--\ref{assump_sh} and the conditions specified in \eqref{rmuB} and \eqref{l_ae_1}, after $n$ iterations with 
\begin{align}
    n \le O(1/\mu)
\end{align}
the following results holds:  
\begin{align}
\label{th_ee_1}
   \mathrm{ER}_{n,cen} = & {\frac{\mu}{B} e(n)} \pm o(\mu^{2})\\
\label{th_ee_2} 
   \mathrm{ER}_{n,con} = &{\frac{\mu}{B} e(n)+ \mu^2 f_{con}(n)} \pm  o(\mu^2) \\
\label{th_ee_3}
   \mathrm{ER}_{n,dif} = &  {\frac{\mu}{B}e(n)+ \mu^2 f_{dif}(n)} \pm  o(\mu^2)
\end{align}
where
\begin{align}
\label{th_ee_4}
&e(n) = \frac{1}{4} \mathrm{Tr}\left(\left(I - (I - \mu \bar{H})^{2(n+1)}\right)\bar{R}_s \right)\\
\label{th_ee_5}
&f_{con}(n) = \frac{1}{2K}\Vert d^{\sf T}\mathcal{V}_\alpha( I - \mathcal{P}_{\alpha})^{-1}(I - \mathcal{P}_{\alpha}^{n+1})\Vert^2_{I\otimes\bar{H}}\\
\label{th_ee_6}
&f_{dif}(n) = \frac{1}{2K}\Vert d^{\sf T}\mathcal{V}_\alpha \mathcal{P}_\alpha( I - \mathcal{P}_{\alpha})^{-1}(I - \mathcal{P}_{\alpha}^{n+1})\Vert^2_{I\otimes\bar{H}}
\end{align}
and $\mathrm{{ER}}_{n,cen}$, $\mathrm{{ER}}_{n,con}$, and $ \mathrm{{ER}}_{n,dif}$ represent the excess risk of the centralized, consensus and diffusion methods at iteration $n$, respectively. Moreover, it holds that
\begin{align}\label{th_ee_7}
\mathrm{ER}_{n,cen} \le \mathrm{ER}_{n,dif} \le \mathrm{ER}_{n,con}
\end{align}    

$\hfill\square$
\end{theorem}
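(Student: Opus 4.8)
The plan is to carry out the entire analysis on the short‑term model \eqref{approx_unified_re_2}. By Lemmas \ref{mse_2}, \ref{mse_4} and \ref{mse_ae} we already have \eqref{eq_ern_2}, so it suffices to evaluate $\tfrac{1}{2K}\mathds{E}\Vert\widetilde{\boldsymbol{\scriptstyle\mathcal{W}}}_{n}'\Vert^2_{I\otimes\bar{H}}$ up to an $o(\mu^2)$ error. First I would unroll \eqref{approx_unified_re_2}: writing $\mathcal{B}\overset{\Delta}{=}\mathcal{A}_2(\mathcal{A}_1-\mu\mathcal{H})$, which is a \emph{deterministic} matrix built only from the Hessians $H_k^\star$,
\[
\widetilde{\boldsymbol{\scriptstyle\mathcal{W}}}_{n}' \;=\; \mathcal{B}^{\,n+1}\widetilde{\boldsymbol{\scriptstyle\mathcal{W}}}_{-1}' \;-\; \mu\sum_{i=0}^{n}\mathcal{B}^{\,i}\mathcal{A}_2 d \;-\; \mu\sum_{i=0}^{n}\mathcal{B}^{\,i}\mathcal{A}_2\boldsymbol{s}_{n-i}^{B}.
\]
Since the gradient noise is conditionally zero‑mean (Lemma \ref{lm_zbgn}), all cross terms between the deterministic part and the noise part, and between noise terms at distinct iterations, vanish under expectation; and the initial‑condition term is $o(\mu^2)$ by Assumption \ref{assump_ip} together with \eqref{rmuB}--\eqref{l_ae_1}. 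Hence, up to $o(\mu^2)$, $\tfrac{1}{2K}\mathds{E}\Vert\widetilde{\boldsymbol{\scriptstyle\mathcal{W}}}_{n}'\Vert^2_{I\otimes\bar{H}}$ equals a \emph{noise term} $\tfrac{\mu^2}{2K}\sum_{i=0}^{n}\mathds{E}\Vert\mathcal{B}^{\,i}\mathcal{A}_2\boldsymbol{s}_{n-i}^{B}\Vert^2_{I\otimes\bar{H}}$ plus a \emph{heterogeneity term} $\tfrac{\mu^2}{2K}\Vert\sum_{i=0}^{n}\mathcal{B}^{\,i}\mathcal{A}_2 d\Vert^2_{I\otimes\bar{H}}$.

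For the noise term I would first freeze the covariance, replacing each $R_{s,k,n-i}^{B}(\boldsymbol{w}_{k,n-i-1})$ by $\tfrac{1}{B}R_{s,k,n}(w^\star)$; the replacement error is controlled through continuity of the noise covariance (a consequence of Assumption \ref{assump_sc} and Lemma \ref{affine_l}), the bound $\mathds{E}\Vert\widetilde{\boldsymbol{w}}_{k,n-1}\Vert^2=O(\mu^2)$ that follows from Lemma \ref{mse_2}, and $\epsilon\le O(\mu^2)$, and it is $o(\mu^2)$ after multiplication by the $O(\mu)$ prefactor of the geometric sum. Then I would decompose $\mathcal{A}_2\boldsymbol{s}_{n-i}^{B}$ into its Perron component $\mathbbm{1}\otimes\big(\tfrac{1}{K}\sum_{k}\boldsymbol{s}_{k,n-i}^{B}\big)$ and a disagreement component. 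Using $A\mathbbm{1}=\mathbbm{1}$, the symmetry of $A$, and \eqref{glbhsws}, the operator $\mathcal{B}$ acts on the Perron subspace as $I-\mu\bar{H}$, up to a leakage of order $\mu$ into the disagreement subspace; consequently the Perron part of the noise yields $\tfrac{\mu^2}{2B}\sum_{i=0}^{n}\mathrm{Tr}\big((I-\mu\bar{H})^{2i}\bar{H}\bar{R}_s\big)$, and using $\sum_{i=0}^{n}(I-\mu\bar{H})^{2i}=\big(I-(I-\mu\bar{H})^{2(n+1)}\big)\big(I-(I-\mu\bar{H})^2\big)^{-1}$ together with $\big(I-(I-\mu\bar{H})^2\big)^{-1}\bar{H}=\tfrac{1}{2\mu}I+O(1)$ this equals $\tfrac{\mu}{B}e(n)+o(\mu^2)$ with $e(n)$ as in \eqref{th_ee_4}. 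The disagreement component of the noise (and the order‑$\mu$ leakage) is propagated by $\mathcal{B}^{\,i}$, which on the disagreement subspace is a contraction of norm at most $\sigma+O(\mu)<1$ with $\sigma=\max_{j\ge2}|\lambda_j(A)|$; the resulting geometric sums are $O(1)$, so that contribution is $O(\mu^2/B)=o(\mu^2)$. Thus the noise term equals $\tfrac{\mu}{B}e(n)+o(\mu^2)$ for all three algorithms.

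For the heterogeneity term, the crucial fact is $(\pi^{\sf T}\otimes I)d=\nabla J(w^\star)=0$, so $d$—and likewise $\mathcal{A}_2 d$, which is $d$ for consensus and $\mathcal{A}d$ for diffusion—lies in the disagreement subspace, while the order‑$\mu$ Perron leakage produced by the $\mu\mathcal{H}$ term in $\mathcal{B}$ only perturbs this term at order $\mu^3$. Working in the disagreement subspace via the eigendecomposition of the symmetric doubly‑stochastic $A$, and introducing $\mathcal{V}_\alpha$, $\mathcal{P}_\alpha$ exactly as in \cite{cao2024trade} (with $\mathcal{P}_\alpha$ the disagreement‑subspace representation of the update operator), I would rewrite $\sum_{i=0}^{n}\mathcal{B}^{\,i}\mathcal{A}_2 d$ as $\mathcal{V}_\alpha(I-\mathcal{P}_\alpha)^{-1}(I-\mathcal{P}_\alpha^{\,n+1})\mathcal{V}_\alpha^{\sf T}d$ for consensus and as $\mathcal{V}_\alpha\mathcal{P}_\alpha(I-\mathcal{P}_\alpha)^{-1}(I-\mathcal{P}_\alpha^{\,n+1})\mathcal{V}_\alpha^{\sf T}d$ for diffusion (the extra factor $\mathcal{P}_\alpha$ reflecting the extra combination $\mathcal{A}_2=\mathcal{A}$); substituting into $\tfrac{\mu^2}{2K}\Vert\cdot\Vert^2_{I\otimes\bar{H}}$ gives $\mu^2 f_{con}(n)$ and $\mu^2 f_{dif}(n)$ of \eqref{th_ee_5}--\eqref{th_ee_6}. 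For the centralized method $\mathcal{A}_2 d=\mathbbm{1}\otimes\nabla J(w^\star)=0$, so the heterogeneity term vanishes and only the common noise term survives. Adding the two contributions gives \eqref{th_ee_1}--\eqref{th_ee_3}. Finally, \eqref{th_ee_7} follows since $f_{con}(n),f_{dif}(n)\ge0$ (so $\mathrm{ER}_{n,cen}$ is smallest up to $o(\mu^2)$), and since $f_{dif}$ differs from $f_{con}$ only by the extra factor $\mathcal{P}_\alpha$, whose spectral norm on the disagreement subspace is below one (to leading order $\mathcal{P}_\alpha=\Lambda_\alpha\otimes I$, which commutes with $I\otimes\bar{H}$ and has norm $\sigma<1$); hence inserting it can only shrink the weighted norm, giving $f_{dif}(n)\le f_{con}(n)$ up to $o(\mu^2)$, just as in \cite{cao2024trade}.

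I expect the main obstacle to be not any individual computation but the order bookkeeping in the adversarial setting: showing rigorously, under the large‑batch regime \eqref{rmuB} and $\epsilon\le O(\mu^2)$, that (a) the disagreement part of the gradient noise, (b) the order‑$\mu$ leakage of $\mathcal{B}$ across the Perron/disagreement splitting, and (c) the error from freezing the noise covariance at $w^\star$ are all genuinely $o(\mu^2)$. Item (c) is the most delicate point, because in the adversarial context only the affine‑Lipschitz property of Lemma \ref{affine_l}—carrying an $O(\epsilon)$ additive slack instead of exact Lipschitzness—is available, which is precisely why the condition $\epsilon\le O(\mu^2)$ is needed; Assumption \ref{assump_sh} is then used to keep the disagreement‑subspace operator $\mathcal{P}_\alpha$ and the associated quantities $f_{con},f_{dif}$ stable and well behaved.
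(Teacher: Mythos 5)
Your proposal is correct and follows essentially the same route as the paper: the paper's proof of Theorem~\ref{th_ee} simply invokes Lemmas~\ref{mse_2}--\ref{mse_ae} and \eqref{eq_ern_2} to justify the short-term model and then defers the remaining computation to Appendices E and F of \cite{cao2024trade}, which is precisely the unrolling of \eqref{approx_unified_re_2}, the Perron/disagreement splitting, the covariance freezing at $w^\star$, and the geometric-sum evaluations you carry out explicitly. Your order bookkeeping under \eqref{rmuB} and $\epsilon\le O(\mu^2)$, including the identification of the covariance-freezing step as the point where the affine-Lipschitz slack must be absorbed, matches the role these conditions play in the paper's argument.
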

\begin{proof}
The proof of Theorem \ref{th_ee} follows a similar line of reasoning as in Appendices E and F of \cite{cao2024trade}, with the key distinction that adversarial perturbations are explicitly accounted for in the definitions of the Hessian matrix $\bar{H}$ and the covariance matrix $\bar{R}_s$. In contrast, \cite{cao2024trade} focuses exclusively on the clean case.
\end{proof}

Theorem \ref{th_ee} shows the escaping efficiency of the three distributed adversarial training algorithms from local minima. Similar to Appendix E of \cite{cao2024trade}, in theory, the algorithms require $O(1/\mu)$ number of iterations to exit the local basin. We observe from \eqref{th_ee_1}--\eqref{th_ee_3} that, similar to findings in the clean case \cite{cao2024trade}, decentralized adversarial training strategies--namely, diffusion and consensus--implicitly incorporate extra $O(\mu^2)$ terms due to the network heterogeneity and the graph structure compared to the centralized method. Specifically, in the homogeneous networks where $w^{\star}_k = w^{\star}$ for all agents, $d$ vanishes. In addition, in the centralized setting, the matrices $\mathcal{V}_{\alpha}$ and $\mathcal{P}_{\alpha}$ are $0$. In these cases, the extra $O(\mu^2)$ become $0$.  On the contrary, in heterogeneous networks and decentralized settings, the presence of nonzero $O(\mu^2)$
terms leads to larger $\mathrm{ER}_n$ values for decentralized adversarial training methods compared to their centralized counterpart. As a result, decentralized adversarial training methods exhibit faster escape behavior from local minima. Furthermore, as indicated by \eqref{th_ee_7}, consensus escapes faster from local minima than diffusion.

Notably, the above results rely on the conditions specified in \eqref{rmuB} and \eqref{l_ae_1}. First, similar to the clean case in \cite{cao2024trade}, \eqref{rmuB} implies that the batch size should be sufficiently large; otherwise, the $O(\mu^2)$ terms \eqref{th_ee_5}--\eqref{th_ee_6} may become negligible in comparison to the $O(\mu/B)$ term. In such cases, the performance differences among decentralized and centralized methods become less pronounced. Second, \eqref{l_ae_1} requires $\epsilon$ to be sufficiently small, which intuitively corresponds to the case where the attack strength of adversarial perturbations is relatively mild. Otherwise, the approximation error of the short-term model may no longer be negligible compared to the true model. In such cases, the conclusions drawn from Theorem~\ref{th_ee} may no longer be valid.

Similar to \cite{cao2024trade}, we can further relate the escaping efficiency of algorithms to the flatness of local minima, which can be quantified by the trace of the Hessian matrix, $\mathrm{Tr}(\bar{H})$. Following the reasoning in Eqs. (67)--(73) of \cite{cao2024trade}, we similarly conclude that the three algorithms tend to stay around flat local minima, where $\mathrm{Tr}(\bar{H})$ is small. As discussed before, decentralized adversarial training methods escape more quickly from local minima than the centralized strategy. As a result, they are more likely to leave sharp basins and converge to a flatter ones. This demonstrates the tendency of decentralized methods to favor flatter minima over the centralized method in the adversarial environment. Furthermore, within the decentralized methods, the consensus strategy demonstrates a stronger preference for flatter minima compared to the diffusion one. 

\section{Simulation results}
In this section, we compare the performance of the three distributed adversarial training algorithms on CIFAR10 and CIFAR100 datasets. A brief summary of the experimental setup is provided below, while additional implementation details can be found in Appendix \ref{appen_as}. We employ WideResNet-28-10 for CIFAR10 and WideResNet-34-10 for CIFAR100. For the communication topology, we randomly generate a doubly-stochastic and strongly-connected graph with $K=16$ nodes using the Metropolis rule as described in \cite{sayed2014adaptation}. In our experiments, we evaluate each algorithm under two different local batch sizes with $B=128, 256$. In the decentralized experiments, we randomly split the entire dataset into $K$ disjoint subsets, with each agent observing data from a single subset only. For the centralized baseline, training follows the traditional single-agent framework, where the effective batch size is $K \times B$.
In our simulations, we consider adversarial perturbations constrained by $\ell_2$ and $\ell_\infty$ norms. Specifically, for $\ell_\infty$ bounded attacks, we simulate with bounds $\epsilon = 8/255$ and $3/255$, and for the $\ell_2$ norm, we use the bound $\epsilon = 128/255$. Following prior works~\cite{abs-2010-03593,RiceWK20} , we adopt the piece-wise learning rate schedule throughout all experiments. Since \emph{overfitting} is a widely observed phenomenon in adversarial training, we evaluate the performance of algorithms using both the \emph{best} and \emph{final} models, where the \emph{best} model refers to the one that achieves highest validation performance during training, while the \emph{final} model denotes the one obtained at convergence. In the main text, to maintain consistency with the theoretical analysis, we present the simulation results for the classical PGD training \cite{madry2017towards} using the SGD optimizer with the momentum parameter set to $0$. To enable a more comprehensive comparison of the robustness of models trained by the three algorithms, we also report results using the SGD optimizer with momentum and the TRADES method \cite{ZhangYJXGJ19}. These additional results are provided in Appendix \ref{appen_as}.

\subsection{Flatness visualization}
We begin by visualizing the flatness of the models obtained by the three algorithms. Specifically, we employ the risk landscape visualization method proposed in \cite{Li0TSG18} to examine the geometry around the learned models. Basically, for a given model parameterized by $w$, we compute the risk value at the perturbed parameters of the form $w+\alpha\boldsymbol{v}$, where $\boldsymbol{v}$ is a random direction scaled to match the norm of $w$, and $\alpha$ is a scaler that controls the magnitude of the perturbation. Figure \ref{loss_visual_cifar10_sgd} illustrates the results obtained on the CIFAR10 dataset, whereas thoes on CIFAR100 are shown in Figure \ref{loss_visual_cifar100_sgd}. These figures indicate that when adversarial perturbations are constrained by $\epsilon = 128/255$ using the $\ell_2$ norm or $\epsilon = 3/255$ using the $\ell_\infty$ norm--reflecting relatively mild attack scenarios--decentralized adversarial training methods tend to produce flatter models compared to their centralized counterpart. However, when the attack strength increases to $\epsilon = 8/255$ with the $\ell_\infty$ norm, this trend may no longer hold. Furthermore, within the decentralized methods, consensus generally yields flatter models than diffusion. Given prior observations in the literature that flatter models often exhibit improved robustness \cite{WuX020, Stutz0S21,abs-2010-03593}, our simulation results naturally raise the following question: Do flatter models always guarantee better robustness against adversarial attacks in distributed adversarial training?

\begin{figure*}[htbp]
\begin{center}
\subfigure[CIFAR10: B = 128, $\epsilon=128/255$]{\includegraphics[width=.3\linewidth]
{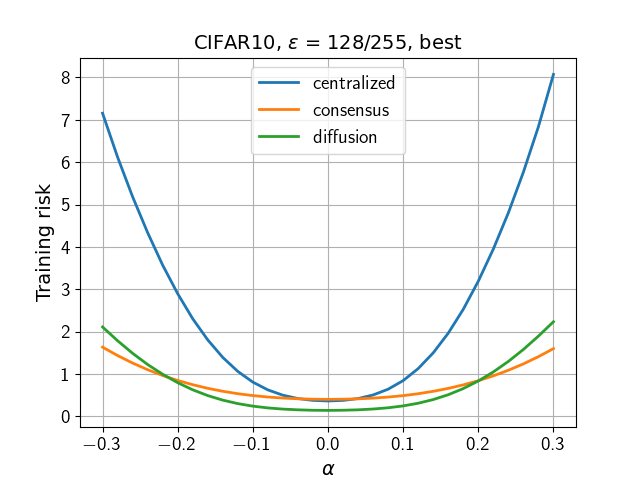}}
\subfigure[CIFAR10: B = 128, $\epsilon=3/255$]
{\includegraphics[width=.3\linewidth]{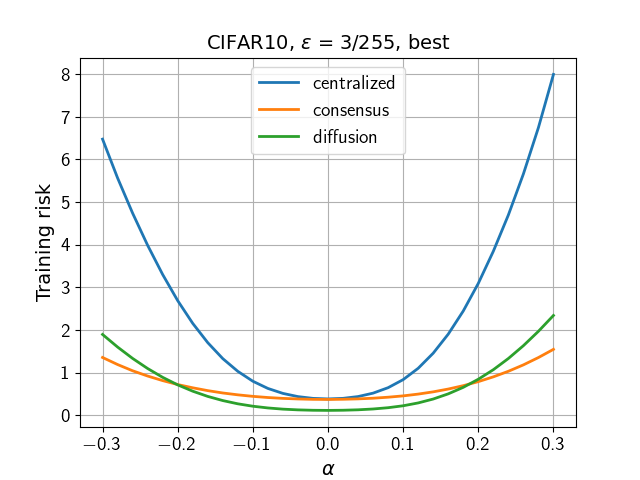}}
\subfigure[CIFAR10: B = 128, $\epsilon=8/255$]
{\includegraphics[width=.3\linewidth]
{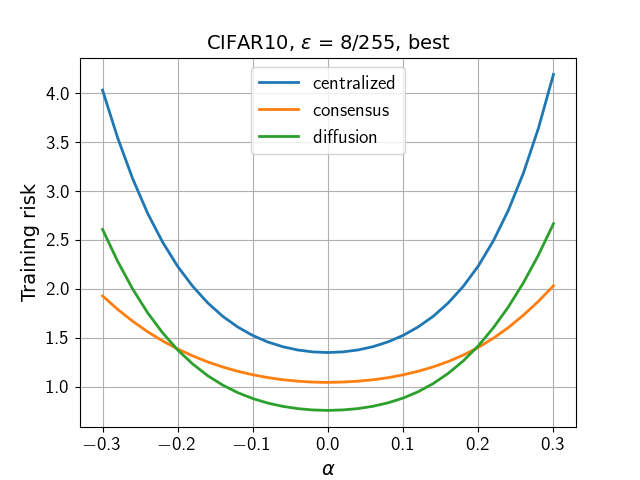}}
\subfigure[CIFAR10: B = 256, $\epsilon=128/255$]{\includegraphics[width=.3\linewidth]
{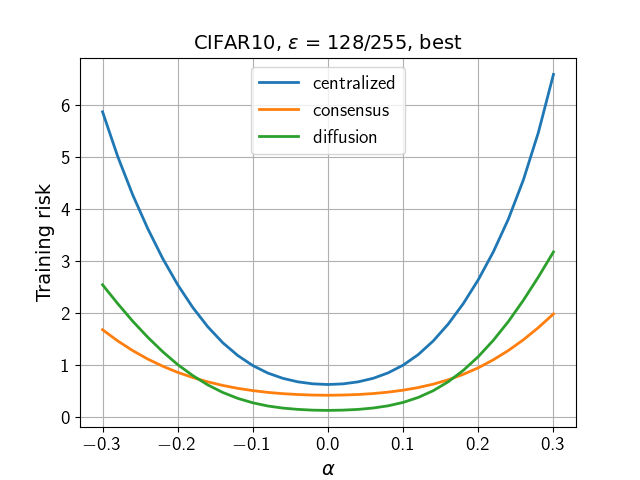}}
\subfigure[CIFAR10: B = 256, $\epsilon=3/255$]
{\includegraphics[width=.3\linewidth]{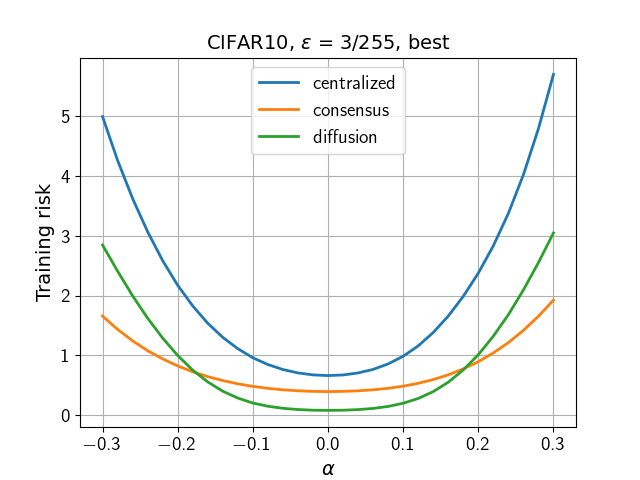}}
\subfigure[CIFAR10: B = 256, $\epsilon=8/255$]
{\includegraphics[width=.3\linewidth]
{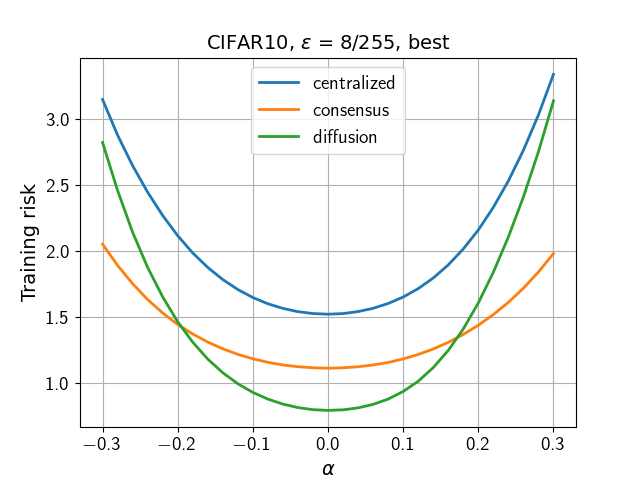}}
\subfigure[CIFAR10: B = 128, $\epsilon=128/255$]{\includegraphics[width=.3\linewidth]
{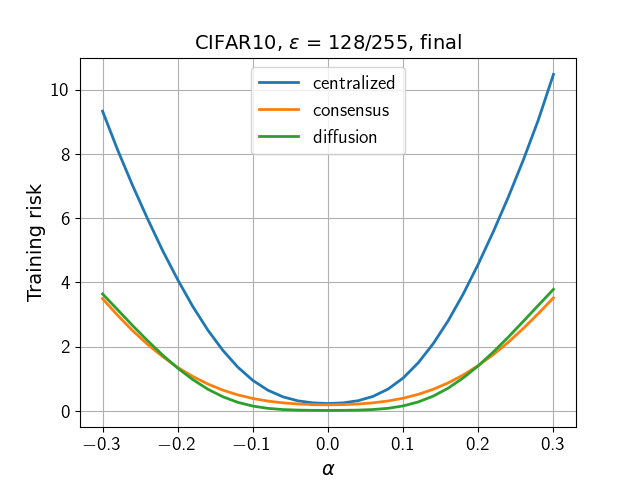}}
\subfigure[CIFAR10: B = 128, $\epsilon=3/255$]
{\includegraphics[width=.3\linewidth]{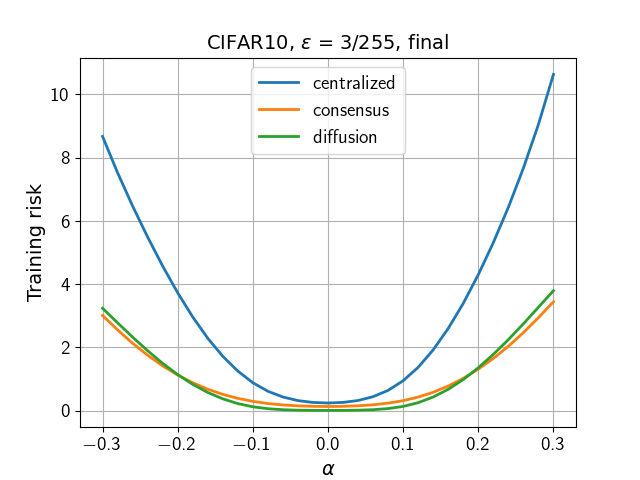}}
\subfigure[CIFAR10: B = 128, $\epsilon=8/255$]
{\includegraphics[width=.3\linewidth]
{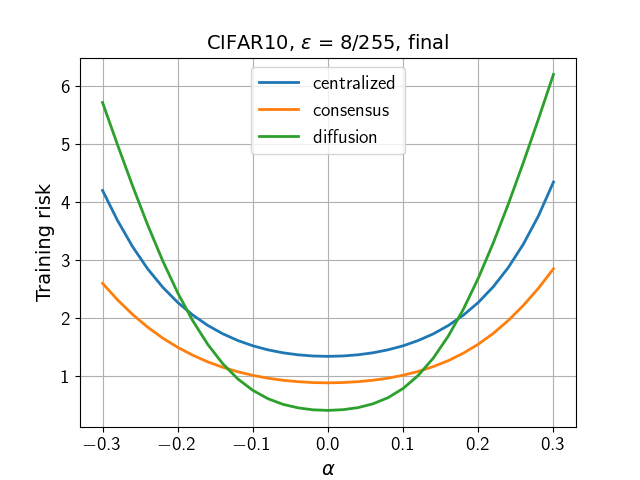}}
\subfigure[CIFAR10: B = 256, $\epsilon=128/255$]{\includegraphics[width=.3\linewidth]
{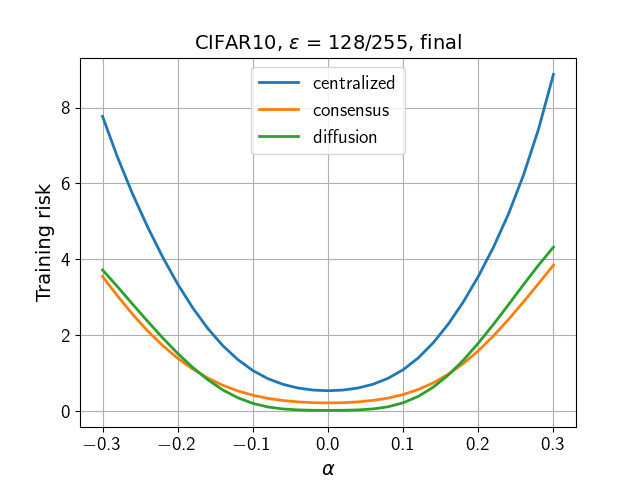}}
\subfigure[CIFAR10: B = 256, $\epsilon=3/255$]
{\includegraphics[width=.3\linewidth]{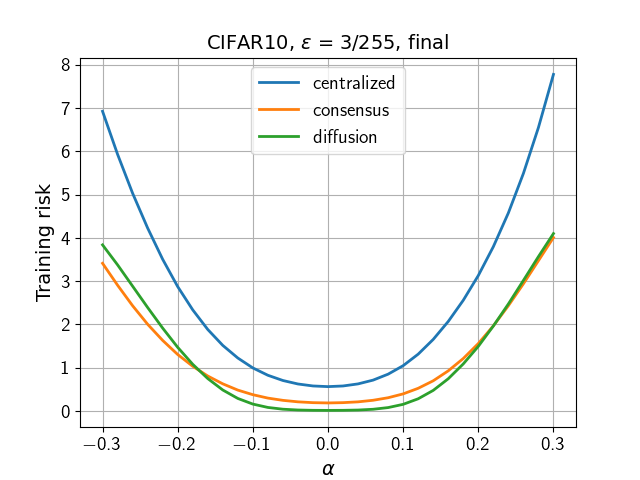}}
\subfigure[CIFAR10: B = 256, $\epsilon=8/255$]
{\includegraphics[width=.3\linewidth]
{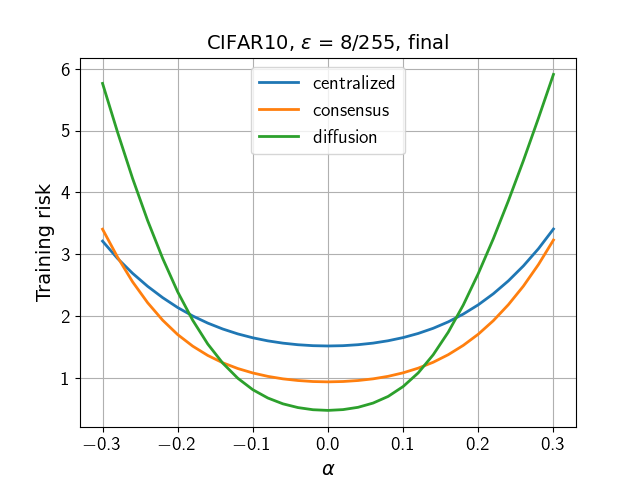}}
\caption{Flatness visualization of the models trained on the CIFAR10 dataset. We apply the visualization method from~\cite{Li0TSG18}, which computes the average training risk value $J(w + \alpha\boldsymbol{v})$ along different random directions $\boldsymbol{v}$ scaled to match the norm of the model parameter $w$. Wider valleys correspond to flatter minima. Panels (a)–(f) present the results of the \textit{best} models, while panels (g)–(l) correspond to the results of the \textit{final} models. From these figures, we observe that when adversarial perturbations are bounded by $128/255$ under the $\ell_2$ norm or by $3/255$ under $\ell_\infty$ norm--i.e., when the attack strength is relatively mild--decentralized adversarial training methods consistently yield flatter solutions compared to the centralized counterpart. However, when the attack strength increases to $\epsilon = 8/255$ under the $\ell_\infty$ norm, this principle could be broken. For example, in panels (f) and (l), the models obtained via diffusion are noticeably sharper than those obtained via centralized training.}
\label{loss_visual_cifar10_sgd}
\end{center}
\end{figure*}

\begin{figure*}[htbp]
\begin{center}
\subfigure[CIFAR100: B = 128, $\epsilon=128/255$]{\includegraphics[width=.3\linewidth]
{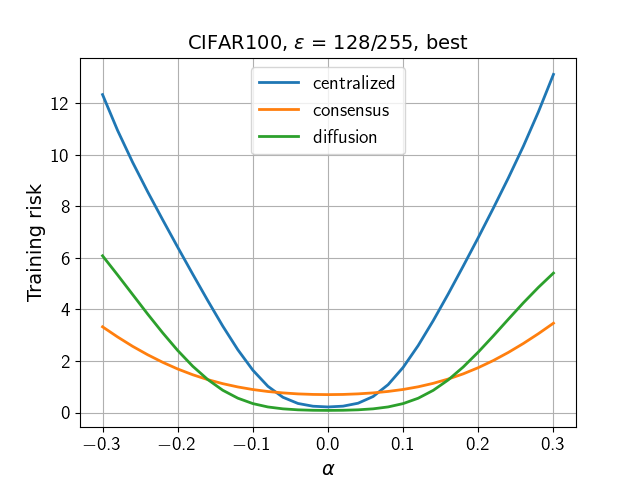}}
\subfigure[CIFAR100: B = 128, $\epsilon=3/255$]
{\includegraphics[width=.3\linewidth]{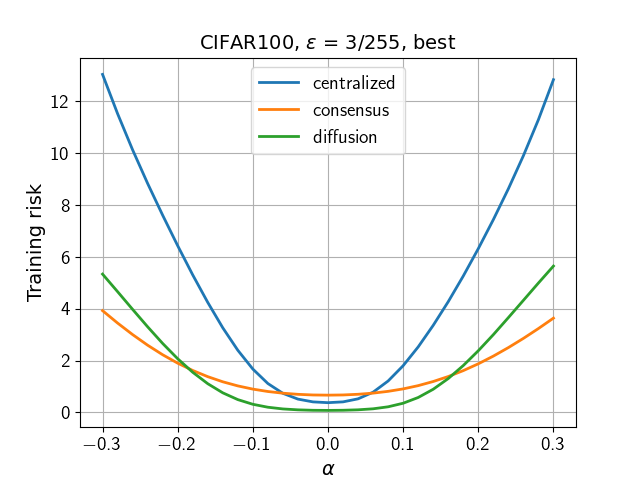}}
\subfigure[CIFAR100: B = 128, $\epsilon=8/255$]
{\includegraphics[width=.3\linewidth]
{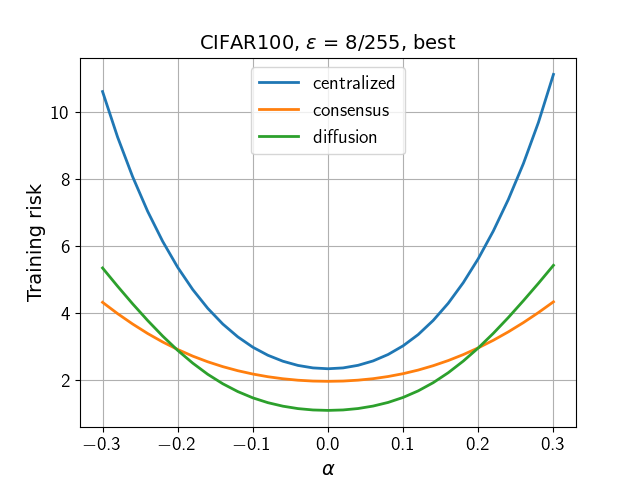}}
\subfigure[CIFAR100: B = 256, $\epsilon=128/255$]{\includegraphics[width=.3\linewidth]
{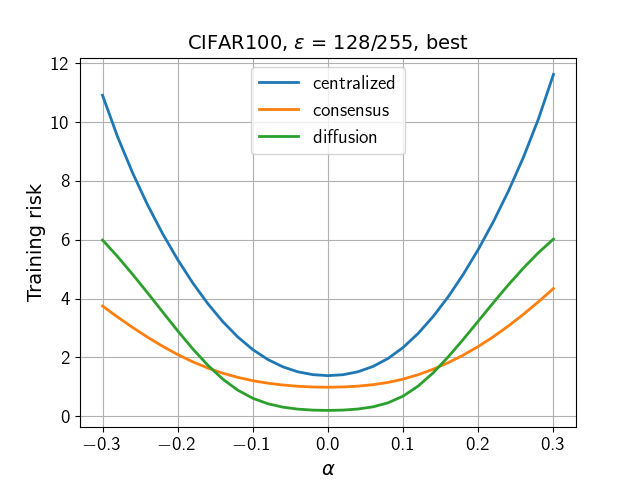}}
\subfigure[CIFAR100: B = 256, $\epsilon=3/255$]
{\includegraphics[width=.3\linewidth]{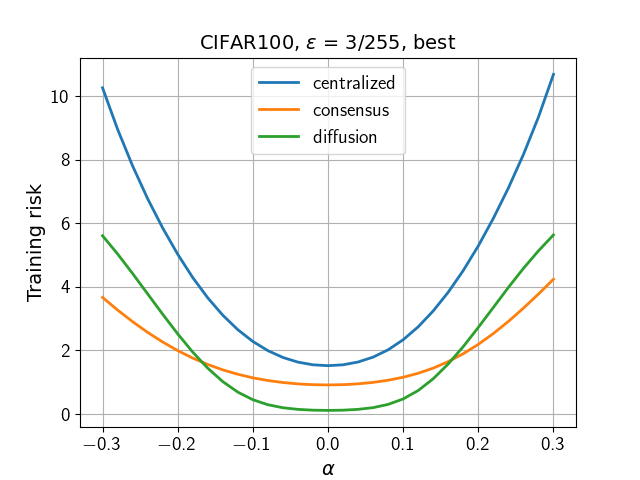}}
\subfigure[CIFAR100: B = 256, $\epsilon=8/255$]
{\includegraphics[width=.3\linewidth]
{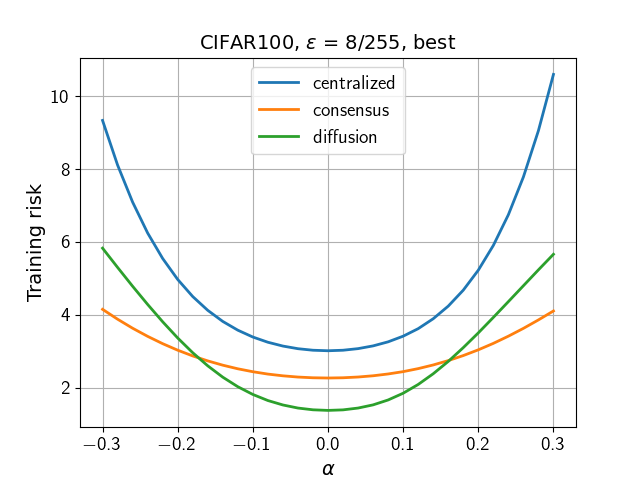}}
\subfigure[CIFAR100: B = 128, $\epsilon=128/255$]{\includegraphics[width=.3\linewidth]
{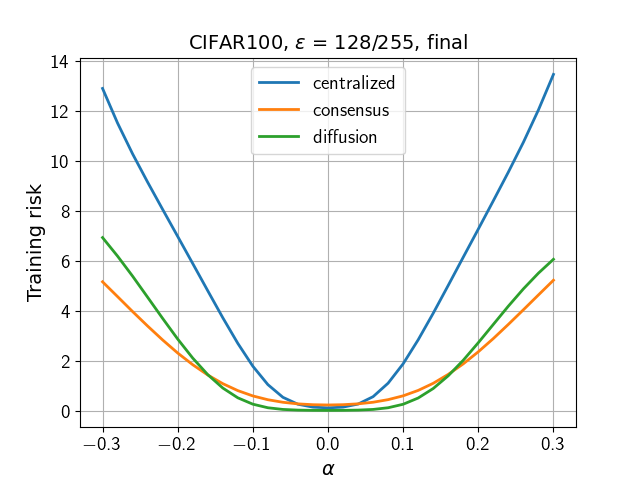}}
\subfigure[CIFAR100: B = 128, $\epsilon=3/255$]
{\includegraphics[width=.3\linewidth]{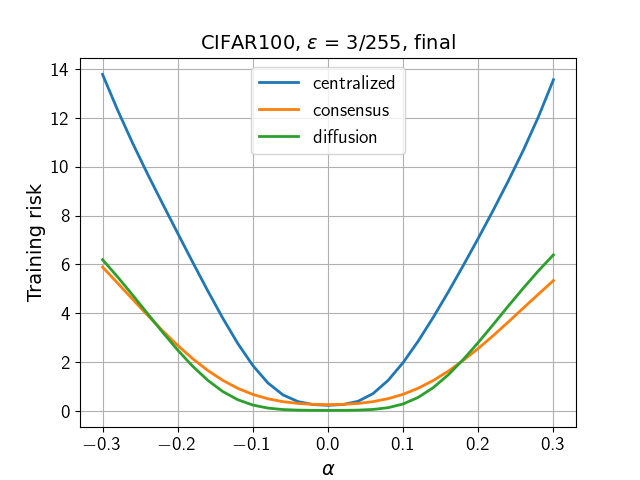}}
\subfigure[CIFAR100 B = 128, $\epsilon=8/255$]
{\includegraphics[width=.3\linewidth]
{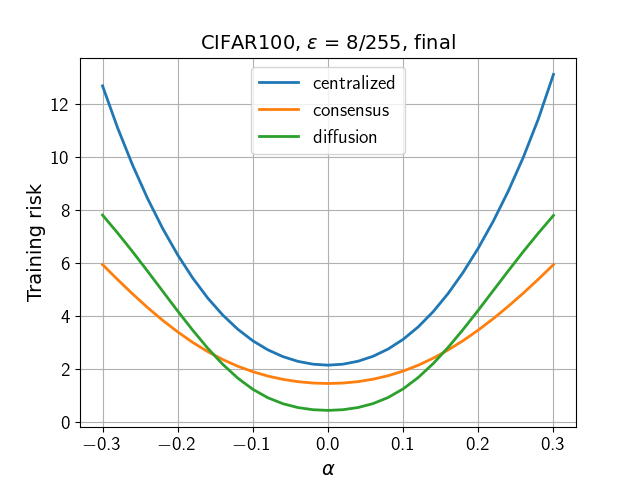}}
\subfigure[CIFAR100: B = 256, $\epsilon=128/255$]{\includegraphics[width=.3\linewidth]
{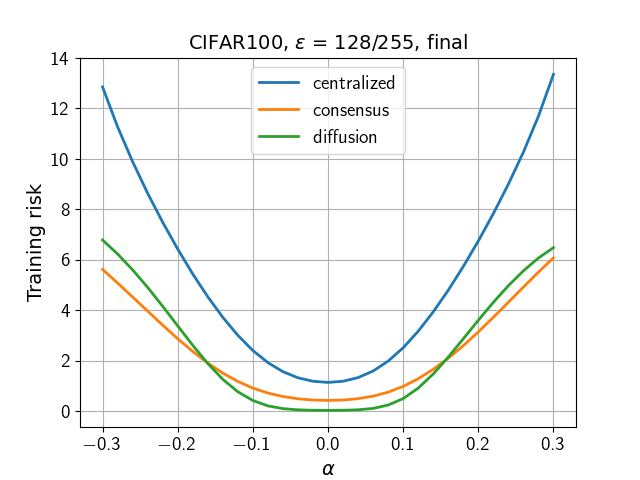}}
\subfigure[CIFAR100: B = 256, $\epsilon=3/255$]
{\includegraphics[width=.3\linewidth]{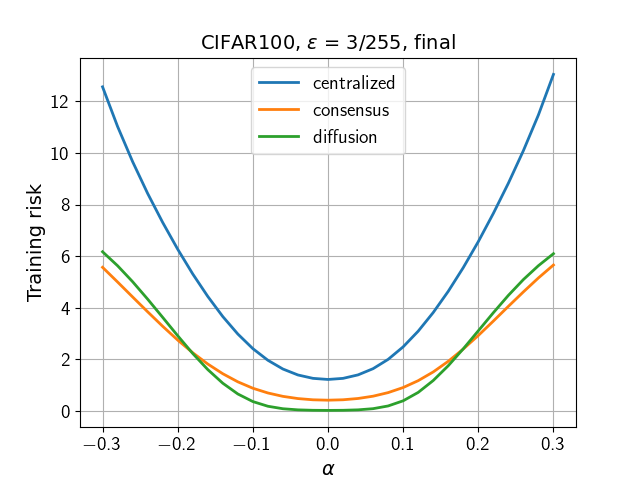}}
\subfigure[CIFAR100: B = 256, $\epsilon=8/255$]
{\includegraphics[width=.3\linewidth]
{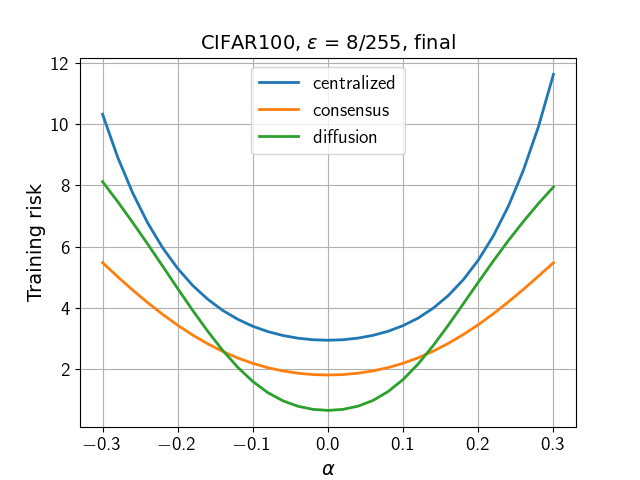}}
\caption{Flatness visualization of the models trained on the CIFAR100 dataset. The visualization method is the same as in Figure \ref{loss_visual_cifar10_sgd}. Panels (a)–(f) present the results of the best models, while panels (g)–(l) correspond to the results of the final models. Similar to Figure \ref{loss_visual_cifar10_sgd}, when adversarial perturbations are bounded by $128/255$ under the $\ell_2$ norm or by $3/255$ under $\ell_\infty$ norm, decentralized adversarial training methods tend to produce flatter models compared to the centralized approach. However, when the attack strength increases to $\epsilon = 8/255$ under the $\ell_\infty$ norm, this trend does not necessarily hold anymore.}
\label{loss_visual_cifar100_sgd}
\end{center}
\end{figure*}

\subsection{Clean and robust performance comparison of centralized and decentralized methods}
We then evaluate the performance of the trained models using AutoAttack \cite{Croce020a}, a standardized and widely adopted benchmark that combines several strong, parameter-free adversarial attacks. The results are reported in Table~\ref{robustacc_sgd}, which presents both the clean accuracy--tested on unperturbed data--and the robust accuracy--measured on adversarially perturbed data. Those results represent the average accuracy computed over three independent runs with different random seeds. As shown in Table~\ref{robustacc_sgd}, models obtained using decentralized training consistently outperform those produced by the centralized method on both clean and adversarial examples. Moreover, consensus yield models with stronger robustness than diffusion. It is worth noting that one may question the relatively lower robust accuracy reported in Table~\ref{robustacc_sgd}, as it appears suboptimal compared to values commonly reported in the literature~\cite{abs-2010-03593}. Specifically, without using additional data and under the $\ell_\infty$-bounded perturbations with $\epsilon = 8/255$, the robust accuracy of the \emph{best} on CIFAR-10  typically reaches around $51\%$, and on CIFAR-100 around $20\%$. This discrepancy primarily stems from the fact that our experimental setting differs from that we focus on \textit{large-batch} settings, whereas the literature primarily investigates small batch size. To be consistent with the literature, we additionally train neural networks using TRADES and the SGD momentum optimizer with large batch size. The corresponding results, presented in Appendix~\ref{appen_as}, show that the robust accuracy in large-batch settings indeed drops compared to the small-batch setting used in the literature~\cite{abs-2010-03593}. We focus on large batch sizes both to align with our theoretical findings and because they empirically meet the practical requirements of modern large-scale, multi-agent systems \cite{ZhangLZCCFMHH022,KeskarMNST17}.

By combining the results in Table~\ref{robustacc_sgd} with the flatness measurements illustrated in Figures \ref{loss_visual_cifar100_sgd} and \ref{loss_visual_cifar10_sgd}, we observe the following: when the attack strength is relatively mild--specifically, with $\epsilon$ set to $128/255$ under the $\ell_2$ norm or $3/255$ under the $\ell_\infty$ norm, decentralized methods produce both flatter models and superior performance on clean and adversarial data compared to centralized training. However, when the attack strength increases to $8/255$ with the $\ell_\infty$ norm, decentralized training do not necessarily produce flatter models. Despite this, they still achieve better performance than the centralized approach. These observations suggest that, although flatness plays a significant role in enhancing adversarial robustness, it may not be the sole determining factor—particularly in the presence of strong adversarial attacks.

To gain further insights into the performance of the distributed adversarial training algorithms, and motivated by \cite{cao2024trade}, we examine their optimization behavior--another important factor influencing the performance of machine learning models. The corresponding results are shown in Figure \ref{optim_sgd}, from which we observe that decentralized adversarial training generally achieves better optimization performance, i.e., lower training error, than the centralized approach in the large-batch setting, especially when $\epsilon = 8/255$ under the $\ell_\infty$ norm. This observation helps explain why models trained with decentralized methods are more robust than those with the centralized approach even though decentralized models exhibiting sharper loss landscapes. Moreover, diffusion shows improved optimization performance compared to consensus; however, since consensus generally produces flatter models than diffusion, it can lead to higher robustness. Overall, these simulation results demonstrate the potential of decentralized adversarial training in improving model robustness within distributed learning frameworks.

%from which we observe that in the large-batch adversarial training, centralized training is easier to have optimization issues, especially under strong adversarial attacks.

\begin{table*}[!t]
\footnotesize
\centering
\begin{spacing}{1.6}
\caption{Clean and robust accuracy of the obtained models evaluated using AutoAttack on CIFAR10 and CIFAR100.}
\label{robustacc_sgd}
\begin{tabular}{cccccccccccccccccc}
\toprule
\multirow{2}{*}{dataset} & \multirow{2}{*}{norm} & \multirow{2}{*}{$\epsilon$} & \multirow{2}{*}{local batch} & \multirow{2}{*}{method} &  \multicolumn{2}{c}{best} & \multicolumn{2}{c}{final} \\
\cline{6-7}\cline{8-9}
&&&&&clean($\%$) & AA($\%$) & clean($\%$) & AA($\%$) \\
\hline
\multirow{18}{*}{CIFAR10} & \multirow{12}{*}{$\ell_\infty$} & \multirow{6}{*}{$8/255$} & \multirow{3}{*}{128} & centralized &71.43&35.90&71.81&35.87 \\
    &&&&consensus &81.58&\textbf{44.33}&83.06&\textbf{42.99}\\
    &&&&diffusion &\textbf{84.57}&43.31&\textbf{84.67}&38.72\\
\cline{4-9}
&&&\multirow{3}{*}{256} & centralized &65.84&33.66&66.08&33.67 \\
    &&&&consensus &79.83&\textbf{44.02}&81.77&\textbf{42.52}\\
    &&&&diffusion &\textbf{83.17}&41.94&\textbf{83.73}&37.32\\
\cline{3-9}
&& \multirow{6}{*}{$3/255$} &\multirow{3}{*}{128} & centralized &84.79&61.10&84.44&59.31 \\
    &&&&consensus &90.44&\textbf{71.56}&90.99&\textbf{68.72}\\
    &&&&diffusion &\textbf{91.15}&70.31&\textbf{91.10}&68.40\\
\cline{4-9}
&&&\multirow{3}{*}{256} & centralized &82.83&60.12&83.22&59.54 \\
    &&&&consensus &89.76&\textbf{70.48}&90.13&\textbf{67.87}\\
    &&&&diffusion &\textbf{90.40}&68.86&\textbf{90.30}&67.09\\
\cline{3-9}
& \multirow{6}{*}{$\ell_2$} & \multirow{6}{*}{$128/255$} & \multirow{3}{*}{128} & centralized &83.78&55.12&83.32&52.62 \\
    &&&&consensus &89.07&\textbf{63.94}&89.59&\textbf{60.02}\\
    &&&&diffusion &\textbf{90.24}&62.46&\textbf{89.89}&59.73\\
\cline{4-9}
&&&\multirow{3}{*}{256} & centralized &82.25&54.44&82.36&53.86 \\
    &&&&consensus &88.69&\textbf{62.75}&89.12&\textbf{59.19}\\
    &&&&diffusion &\textbf{89.17}&60.99&\textbf{89.15}&58.39\\
\midrule
\multirow{18}{*}{CIFAR100} & \multirow{12}{*}{$\ell_\infty$} & \multirow{6}{*}{$8/255$} & \multirow{3}{*}{128} & centralized &46.18&16.09&46.05&15.36 \\
    &&&&consensus &58.37&\textbf{23.59}&\textbf{59.06}&\textbf{21.39}\\
    &&&&diffusion &\textbf{58.89}&21.32&57.47&18.70\\
\cline{4-9}
&&&\multirow{3}{*}{256} & centralized &39.86&15.41&40.21&15.39 \\
    &&&&consensus &55.67&\textbf{22.23}&\textbf{56.93}&\textbf{20.87}\\
    &&&&diffusion &\textbf{56.87}&20.04&55.93&17.64\\
\cline{3-9}
&& \multirow{6}{*}{$3/255$} &\multirow{3}{*}{128} & centralized &55.34&29.48&54.77&28.81 \\
    &&&&consensus &\textbf{68.61}&\textbf{42.80}&\textbf{67.70}&\textbf{39.47}\\
    &&&&diffusion &67.62&40.57&67.26&39.38\\
\cline{4-9}
&&&\multirow{3}{*}{256} & centralized &53.48&30.27&53.53&29.17 \\
    &&&&consensus &\textbf{66.95}&\textbf{41.36}&\textbf{66.37}&\textbf{37.88}\\
    &&&&diffusion &66.27&38.66&65.81&37.18\\
\cline{3-9}
& \multirow{6}{*}{$\ell_2$} & \multirow{6}{*}{$128/255$} & \multirow{3}{*}{128} & centralized &54.22&25.80&53.88&25.18 \\
    &&&&consensus &\textbf{67.17}&\textbf{37.98}&\textbf{66.31}&\textbf{34.20} \\
    &&&&diffusion &66.57&35.43&65.82&33.88\\
\cline{4-9}
&&&\multirow{3}{*}{256}& centralized &53.17&26.94&52.81&25.59 \\
    &&&&consensus &\textbf{65.73}&\textbf{36.54}&\textbf{65.28}&\textbf{32.75}\\
    &&&&diffusion &64.55&33.58&64.44&32.13\\
\bottomrule      
\end{tabular}
\end{spacing}
\end{table*}

\begin{figure*}[htbp]
\begin{center}
\subfigure[CIFAR10: B = 128, $\epsilon = 128/255$]{\includegraphics[width=.3\linewidth]{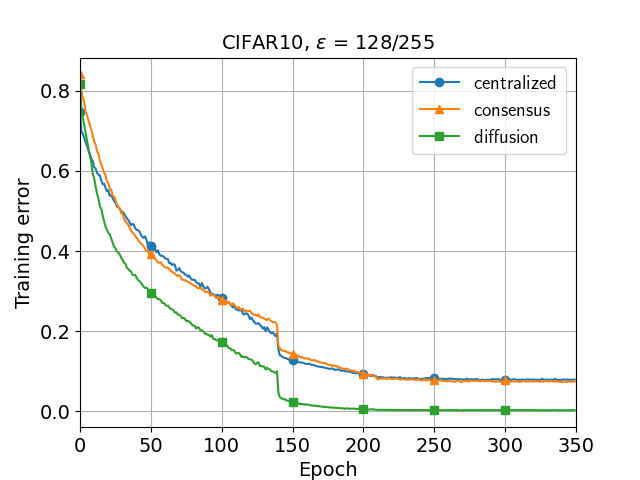}}
\subfigure[CIFAR10: B = 128, $\epsilon = 3/255$]{\includegraphics[width=.3\linewidth]{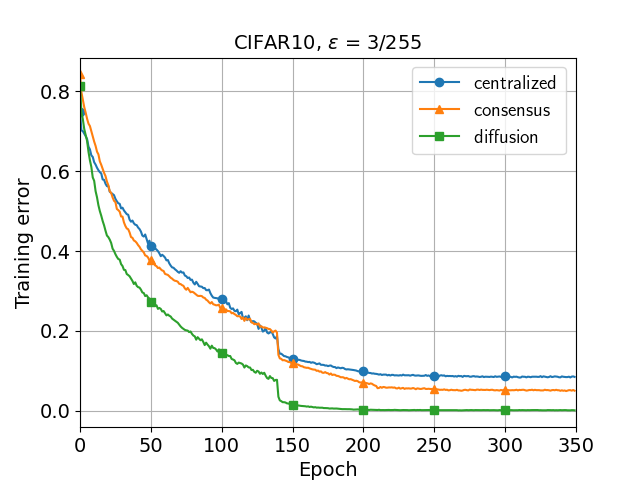}}
\subfigure[CIFAR10: B = 128, $\epsilon = 8/255$]{\includegraphics[width=.3\linewidth]{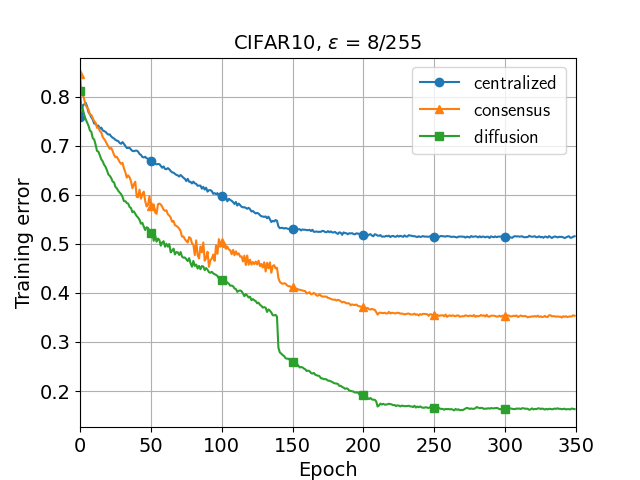}}
\subfigure[CIFAR10: B = 256, $\epsilon = 128/255$]{\includegraphics[width=.3\linewidth]{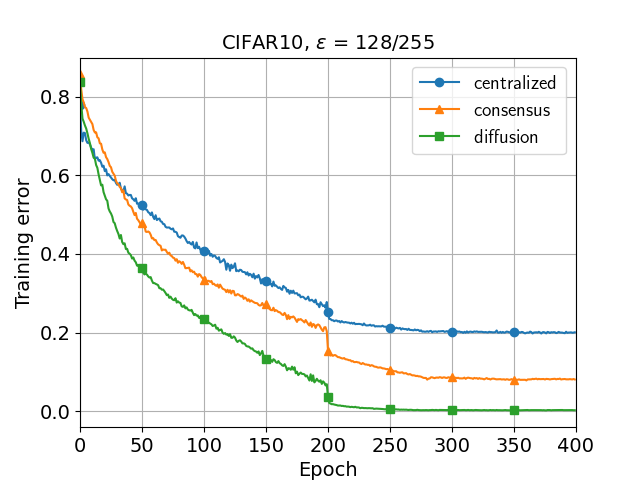}}
\subfigure[CIFAR10: B = 256, $\epsilon = 3/255$]{\includegraphics[width=.3\linewidth]{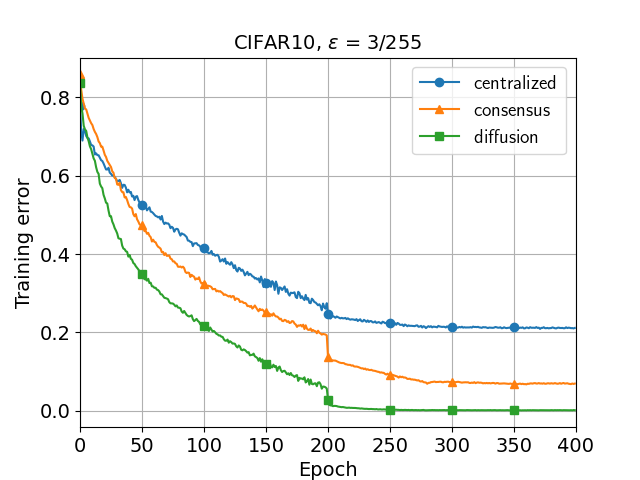}}
\subfigure[CIFAR10: B = 256, $\epsilon = 8/255$]{\includegraphics[width=.3\linewidth]{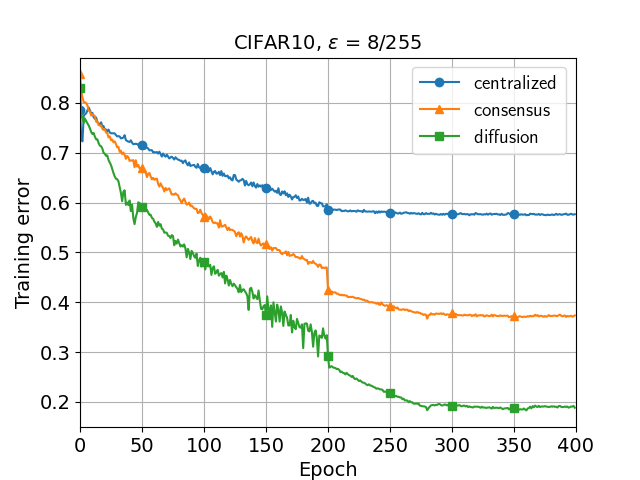}}
\subfigure[CIFAR100: B = 128, $\epsilon = 128/255$]{\includegraphics[width=.3\linewidth]{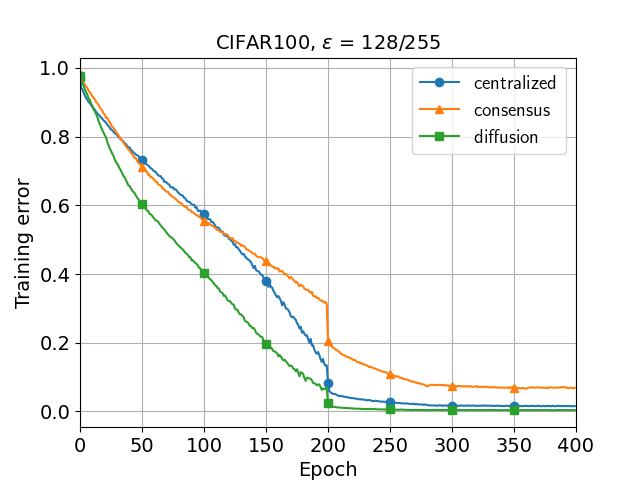}}
\subfigure[CIFAR100: B = 128, $\epsilon = 3/255$]{\includegraphics[width=.3\linewidth]{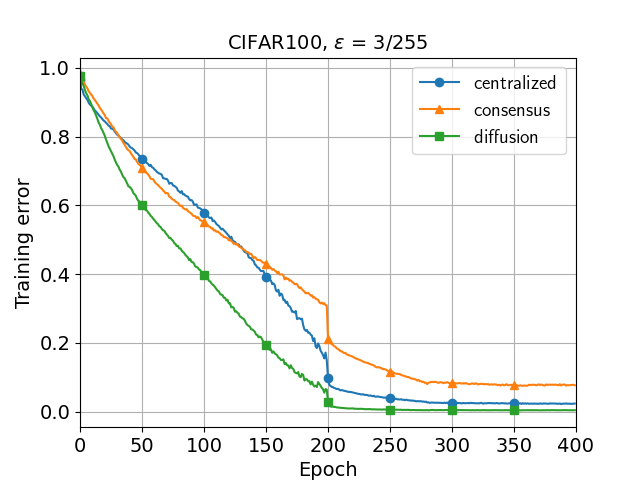}}
\subfigure[CIFAR100: B = 128, $\epsilon = 8/255$]{\includegraphics[width=.3\linewidth]{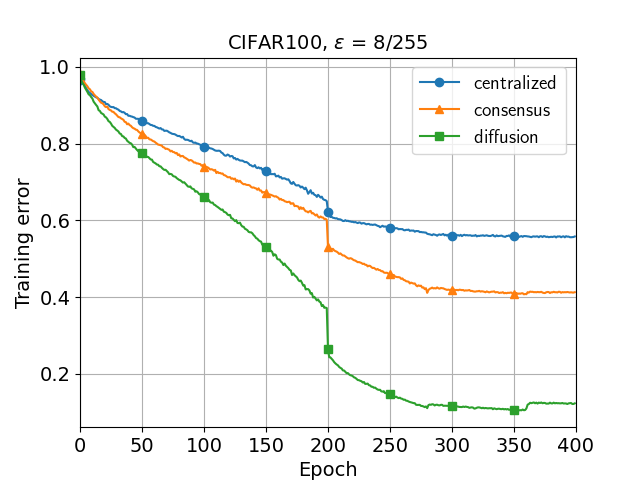}}
\subfigure[CIFAR100: B = 256, $\epsilon = 128/255$]{\includegraphics[width=.3\linewidth]{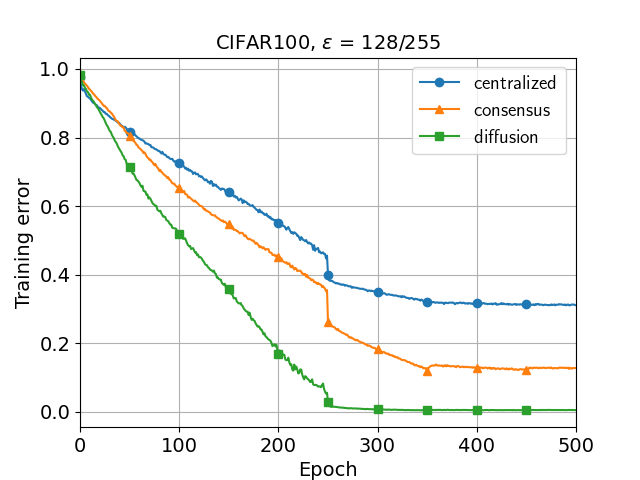}}
\subfigure[CIFAR100: B = 256, $\epsilon = 3/255$]{\includegraphics[width=.3\linewidth]{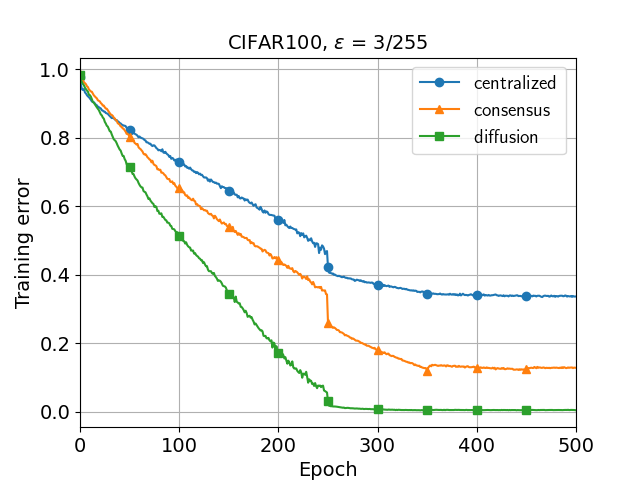}}
\subfigure[CIFAR100: B = 256, $\epsilon = 8/255$]{\includegraphics[width=.3\linewidth]{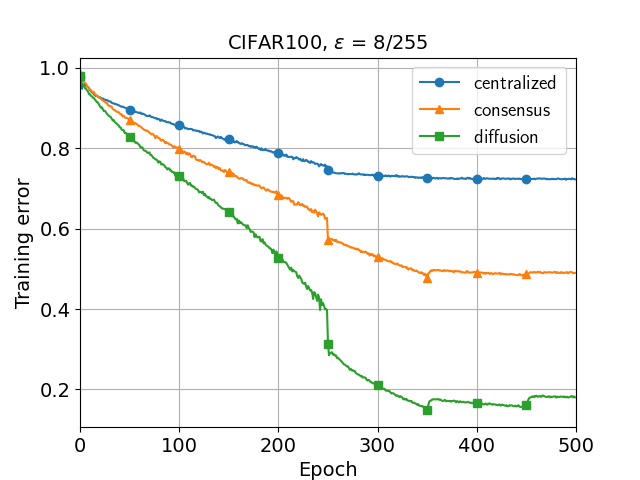}}
\caption{The evolution of the training error. (a)--(f) CIFAR10; (g)--(l) CIFAR100. We observe that under strong adversarial attacks--specifically, when the perturbation bound is $\epsilon = 8/255$--the optimization performance of centralized models in the large-batch setting can be bad.}
\label{optim_sgd}
\end{center}
\end{figure*}

\section{Conclusion}
We investigated the learning behavior of stochastic gradient descent-based distributed adversarial training algorithms near local minima of non-convex robust risk functions. Theoretical results indicate that, in the large-batch setting, when the perturbation bound is sufficiently small, i.e., the attack strength is relatively mild, decentralized adversarial training algorithms escape local minima more efficiently and tend to favor flatter solutions compared to the centralized baseline. However, as the attack strength increases, this behavior may no longer hold. Our simulation results further demonstrate the potential of decentralized adversarial training algorithms to enhance model robustness within the distributed learning framework.

In this paper, we aim to understand the implicit bias of distributed adversarial training algorithms. Due to limitations in computational resources, our simulations are conducted on small-scale datasets without the use of additional data. While state-of-the-art studies often focus on large-scale applications with extra data \cite{wang23ad,bartoldson2024adversarial}, we encourage the community—particularly those with access to sufficient GPU resources—to implement and evaluate decentralized adversarial algorithms in real-world, large-scale scenarios.

\section*{Acknowledgments}
We acknowledge the assistance of ChatGPT in improving the English presentation in this paper.
\bibliography{references}
\bibliographystyle{ieeetr}

%%%%%%%%%%%%%%%%%%%%%%%%%%%%%%%%%%%%%%%%%%%%%%%%%%%%%%%%%%%%%%%%%%%%%%%%%%%%%%%%%%%%%%%%%%%%%%%%%%%%%%%%%
%%%%%%%%%%%%%%%%%%%%%%%%%%%%%%%%%%%%%%%%%%%%%%%%%%%%%%%%%%%%%%%%%%%%%%%%%%%%%%%%%%%%%%%%%%%%%%%%%%%%%%%%%
\newpage
\onecolumn
\appendices
\section{Proof for Lemma \ref{lm_zbgn}}\label{p_lm_zbgn}
\begin{proof}
We begin by proving that the gradient noise is zero-mean. Basically, \eqref{s_gn_k_B} gives
\begin{align}\label{p_s0}
\mathds{E}\left\{s_{k,n}^B(\boldsymbol{w})\vert\mathcal{F}_{n-1}\right\} &= \mathds{E} \left\{\frac{1}{B}\sum\limits_b \nabla_w Q_k(\boldsymbol{w};\widehat{\boldsymbol{x}}_{k,n}^b,\boldsymbol{y}_{k,n}^b)  - \mathds{E}\nabla_w Q_k(\boldsymbol{w};\widehat{\boldsymbol{x}}_{k,n},\boldsymbol{y}_{k,n})\right\} \notag\\
&= \frac{1}{B}\sum\limits_b \mathds{E}\nabla_w Q_k(\boldsymbol{w};\widehat{\boldsymbol{x}}_{k,n}^b,\boldsymbol{y}_{k,n}^b) - \mathds{E}\nabla_w Q_k(\boldsymbol{w};\widehat{\boldsymbol{x}}_{k,n},\boldsymbol{y}_{k,n})\notag\\
&= 0
\end{align}

We now show that the second- and fourth-order moments of the gradient noise are bounded. To do so, we introduce the optimal perturbation corresponding to the parameter $w^{\star}$ and the sample $\boldsymbol{x}_{k,n}$:
\begin{align}\label{dstar_wstar}
    \boldsymbol{\delta}^{\star}(w^{\star}) \overset{\Delta}{=}\mathop{\mathrm{argmax}}\limits_{\left\Vert\delta\right\Vert_{p}  \le \epsilon}   Q_k(w^{\star};\boldsymbol{x}_{k,n} + \delta,\boldsymbol{y}_{k,n})
\end{align}
we also define the corresponding adversarially perturbed sample as:
\begin{align}\label{xstar_wstar}
    \boldsymbol{x}_{k,n}^{\star}(w^{\star}) \overset{\Delta}{=} \boldsymbol{x}_{k,n} + \boldsymbol{\delta}^{\star}(w^{\star})
\end{align}
Then, in the case where the batch size $B = 1$, we have
\begin{align}\label{p_s2}
&\mathds{E}\{\Vert \boldsymbol{s}_{k,n}(\boldsymbol{w})\Vert^2\vert\mathcal{F}_{n-1} \}\notag\\
&= \mathds{E}\Vert\nabla_w Q_k(\boldsymbol{w};\widehat{\boldsymbol{x}}_{k,n}, \boldsymbol{y}_{k,n}) - \mathds{E}\nabla_w Q_k(\boldsymbol{w};\widehat{\boldsymbol{x}}_{k,n},\boldsymbol{y}_{k,n})\Vert^2 \notag\\
&\overset{(a)}{\le} 4\mathds{E}\Vert\nabla_w Q_k(\boldsymbol{w};\widehat{\boldsymbol{x}}_{k,n},\boldsymbol{y}_{k,n})\Vert^2\notag\\
&=4\mathds{E}\Vert\nabla_w Q_k(\boldsymbol{w};\widehat{\boldsymbol{x}}_{k,n},\boldsymbol{y}_{k,n}) - \nabla_w Q_k(w^{\star};\boldsymbol{x}_{k,n}^{\star}(w^{\star}),\boldsymbol{y}_{k,n}) + \nabla_w Q_k(w^{\star};\boldsymbol{x}_{k,n}^{\star}(w^{\star}),\boldsymbol{y}_{k,n})\Vert^2\notag\\
&\overset{(b)}{\le}8\mathds{E}\Vert\nabla_w Q_k(\boldsymbol{w};\widehat{\boldsymbol{x}}_{k,n},\boldsymbol{y}_{k,n}) - \nabla_w Q_k(w^{\star};\widehat{\boldsymbol{x}}_{k,n},\boldsymbol{y}_{k,n}) + \nabla_w Q_k(w^{\star};\widehat{\boldsymbol{x}}_{k,n},\boldsymbol{y}_{k,n}) \notag\\
&\quad\;- \nabla_w Q_k(w^{\star};\boldsymbol{x}_{k,n}^{\star}(w^{\star}),\boldsymbol{y}_{k,n})\Vert^2 + 8\mathds{E}\Vert\nabla_w Q_k(w^{\star};\boldsymbol{x}_{k,n}^{\star}(w^{\star}),\boldsymbol{y}_{k,n})\Vert^2\notag\\
&\overset{(c)}{\le}16\mathds{E}\Vert\nabla_w Q_k(\boldsymbol{w};\widehat{\boldsymbol{x}}_{k,n},\boldsymbol{y}_{k,n}) - \nabla_w Q_k(w^{\star};\widehat{\boldsymbol{x}}_{k,n},\boldsymbol{y}_{k,n})\Vert^2 + 16\mathds{E}\Vert\nabla_w Q_k(w^{\star};\widehat{\boldsymbol{x}}_{k,n},\boldsymbol{y}_{k,n}) \notag\\
&\quad\;- \nabla_w Q_k(w^{\star};\boldsymbol{x}_{k,n}^{\star}(w^{\star}),\boldsymbol{y}_{k,n})\Vert + 8\mathds{E}\Vert\nabla_w Q_k(w^{\star};\boldsymbol{x}_{k,n}^{\star}(w^{\star}),\boldsymbol{y}_{k,n})\Vert^2\notag\\
&\overset{(d)}{\le}16L^2\Vert\widetilde{\boldsymbol{w}}\Vert^2+ O(\epsilon^2) + 8\mathds{E}\Vert\nabla_w Q_k(w^{\star};\boldsymbol{x}_{k,n}^{\star}(w^{\star}),\boldsymbol{y}_{k,n})\Vert^2
\end{align}
where $(a)$, $(b)$, and $(c)$ follow from Jensen's inequality, and $(d)$ follows from the Lipschitz conditions in Assumption \ref{assump_sc} and the constraint of the perturbation variable. That is,
\begin{align}\label{size_e}
    \Vert\nabla_w Q_k(w^{\star};\widehat{\boldsymbol{x}}_{k,n},\boldsymbol{y}_{k,n}) - \nabla_w Q_k(w^{\star};\boldsymbol{x}_{k,n}^{\star}(w^{\star}),\boldsymbol{y}_{k,n})\Vert^2 \overset{(a)}{\le} L^2\Vert\widehat{\boldsymbol{\delta}}_{k,n} - \boldsymbol{\delta}^{\star}(w^\star)\Vert^2 \overset{(b)}{\le}  O(\epsilon^2) 
\end{align}
where $(a)$ follows from \eqref{assump_sc_eq2}, and $(b)$ follows from Eq.(144) of \cite{cao2025decentralized}, which makes the use of the equivalence of vector norms, namely,
\begin{align}\label{delta_d}
  \Vert\widehat{\boldsymbol{\delta}}_{k,n} - \boldsymbol{\delta}^{\star}(w^\star)\Vert \le O(\epsilon)  
\end{align}

Similarly, for the fourth-order moments of the gradient noise, we have
\begin{align}\label{p_s4}
 &\mathds{E}\{\Vert \boldsymbol{s}_{k,n}(\boldsymbol{w})\Vert^4 \vert\mathcal{F}_{n-1} \} \notag\\
 & = \mathds{E}\Vert\nabla_w Q_k(\boldsymbol{w};\widehat{\boldsymbol{x}}_{k,n},\boldsymbol{y}_{k,n}) - \mathds{E}\nabla_w Q_k(\boldsymbol{w};\widehat{\boldsymbol{x}}_{k,n},\boldsymbol{y}_{k,n})\Vert^4\notag\\
 &\overset{(a)}{\le}16\mathds{E}\Vert\nabla_w Q_k(\boldsymbol{w};\widehat{\boldsymbol{x}}_{k,n},\boldsymbol{y}_{k,n})\Vert^4\notag\\
 &=16\mathds{E}\Vert\nabla_w Q_k(\boldsymbol{w};\widehat{\boldsymbol{x}}_{k,n},\boldsymbol{y}_{k,n}) - \nabla_w Q_k(w^{\star};\boldsymbol{x}_{k,n}^{\star}(w^{\star}),\boldsymbol{y}_{k,n}) + \nabla_w Q_k(w^{\star};\boldsymbol{x}_{k,n}^{\star}(w^{\star}),\boldsymbol{y}_{k,n})\Vert^4\notag\\
 &\overset{(b)}{\le}128\mathds{E}\Vert\nabla_w Q_k(\boldsymbol{w};\widehat{\boldsymbol{x}}_{k,n},\boldsymbol{y}_{k,n}) - \nabla_w Q_k(w^{\star};\widehat{\boldsymbol{x}}_{k,n},\boldsymbol{y}_{k,n}) + \nabla_w Q_k(w^{\star};\widehat{\boldsymbol{x}}_{k,n},\boldsymbol{y}_{k,n}) \notag\\
&\quad\;- \nabla_w Q_k(w^{\star};\boldsymbol{x}_{k,n}^{\star}(w^{\star}),\boldsymbol{y}_{k,n})\Vert^4 + 128\mathds{E}\Vert\nabla_w Q_k(w^{\star};\boldsymbol{x}_{k,n}^{\star}(w^{\star}),\boldsymbol{y}_{k,n})\Vert^4\notag\\
&\overset{(c)}{\le}1024\mathds{E}\Vert\nabla_w Q_k(\boldsymbol{w};\widehat{\boldsymbol{x}}_{k,n},\boldsymbol{y}_{k,n}) - \nabla_w Q_k(w^{\star};\widehat{\boldsymbol{x}}_{k,n},\boldsymbol{y}_{k,n})\Vert^4 + 1024\mathds{E}\Vert\nabla_w Q_k(w^{\star};\widehat{\boldsymbol{x}}_{k,n},\boldsymbol{y}_{k,n}) \notag\\
&\quad\; - \nabla_w Q_k(w^{\star};\boldsymbol{x}_{k,n}^{\star}(w^{\star}),\boldsymbol{y}_{k,n})\Vert^4 + 128\mathds{E}\Vert\nabla_w Q_k(w^{\star};\boldsymbol{x}_{k,n}^{\star}(w^{\star}),\boldsymbol{y}_{k,n})\Vert^4\notag\\
&\overset{(d)}{\le}1024L^4\Vert\widetilde{\boldsymbol{w}}\Vert^4+ O(\epsilon^4) + 128\mathds{E}\Vert\nabla_w Q_k(w^{\star};\boldsymbol{x}_{k,n}^{\star}(w^{\star}),\boldsymbol{y}_{k,n})\Vert^4
\end{align}
again, $(a)$, $(b)$, and $(c)$ follow from Jensen's inequality, and $(d)$ follows from Assumption \ref{assump_sc} and bounded perturbations. 

Then, similar to Appendix A of \cite{cao2024trade}, the size of the gradient noise scales inversely with the batch size $B$, namely, 
\begin{align}\label{p_s2_B}
 &\mathds{E}\{\Vert \boldsymbol{s}_{k,n}^B(\boldsymbol{w})\Vert^2\vert\mathcal{F}_{n-1} \} = \frac{1}{B} \mathds{E}\{\Vert \boldsymbol{s}_{k,n}(\boldsymbol{w})\Vert^2\vert\mathcal{F}_{n-1} \} \overset{(a)}{\le} O\left(\frac{1}{B}\right)\Vert\widetilde{\boldsymbol{w}}\Vert^2 + O\left(\frac{1}{B}\right)\\
 \label{p_s4_B}
 &\mathds{E}\{\Vert \boldsymbol{s}_{k,n}^B(\boldsymbol{w})\Vert^4\vert\mathcal{F}_{n-1} \} \le O\left(\frac{1}{B^2}\right) \mathds{E}\{\Vert \boldsymbol{s}_{k,n}(\boldsymbol{w})\Vert^4\vert\mathcal{F}_{n-1} \} \overset{(b)}{\le} O\left(\frac{1}{B^2}\right)\Vert\widetilde{\boldsymbol{w}}\Vert^2 + O\left(\frac{1}{B^2}\right)
\end{align}
where $(a)$ follows from \eqref{p_s2}, and $(b)$ follows from \eqref{p_s4}.

Finally, the proof of \eqref{gcm_scale} is the same with Eq. (92) of \cite{cao2024trade}.
\end{proof}

\section{Decomposition of \texorpdfstring{\eqref{unified_re_1}}{(ref)}}
In this section, we decompose \eqref{unified_re_1} for later use. We start by exploiting the eigen-structure of $A$, which admits the following eigen-decomposition:
\begin{equation}\label{decomp_A}
    A = VPV^{\sf T}
\end{equation}
where $V$ is a unitary matrix, and $P$ is a diagonal matrix, given explicitly by:
\begin{equation}\label{vpv1}
     V = \left[\frac{1}{\sqrt{K}}\mathbbm{1} \quad V_\alpha \right],\quad
    P = \left[
        \begin{array}{cc}
        1&0\\
        0&P_\alpha
        \end{array}
    \right]
\end{equation}
Here, $P_{\alpha}\in \mathbbm{R}^{(K-1)\times(K-1)}$ is a diagonal matrix containing the second largest-magnitude eigenvalue $\lambda_2$ to the smallest-magnitude eigenvalue $\lambda_K$ of $A$. $V_\alpha$ is composed of the corresponding eigenvectors associated with these eigenvalues. Similar to the extended combination matrix defined in \eqref{ex_ca}, we also introduce the extended block matrices for $V$ and $P$:
\begin{equation}
    \mathcal{V} = V\otimes I_M,\quad \mathcal{P} = P\otimes I_M,\quad \mathcal{P}_{\alpha} = P_\alpha\otimes I_M, \quad \mathcal{V}_\alpha = V_\alpha\otimes I_M, \quad \mathds{1} = \mathbbm{1}\otimes I_M
\end{equation}
These extended matrices allow us to analyze the coupled dynamics of the entire network in a compact block form. Using the eigen-structure of $A$, the unified recursion in \eqref{unified_re_1} can be further decomposed. Basically, by multiplying the left of \eqref{unified_re_1}, we obtain
\begin{align}\label{unified_decomp}
\mathcal{V}^{\sf T}{\widetilde{\boldsymbol{\scriptstyle\mathcal{W}}}}_{n} =& \left[\begin{array}{c}
         (\frac{1}{\sqrt{K}}\mathbbm{1}^{\sf T} \otimes I_M){\widetilde{\boldsymbol{\scriptstyle\mathcal{W}}}}_{n} \\
         (V_\alpha^{\sf T}\otimes I_M){\widetilde{\boldsymbol{\scriptstyle\mathcal{W}}}}_{n}
    \end{array}\right] \overset{\Delta}{=} \left[\begin{array}{c}
         \bar{\boldsymbol{w}}_n  \\
          \check{\boldsymbol{w}}_n
    \end{array}\right]\notag\\
    =& \mathcal{P}\mathcal{V}^{\sf T}{\widetilde{\boldsymbol{\scriptstyle\mathcal{W}}}}_{n-1} - \mu\mathcal{V}^{\sf T}\mathcal{A}_2\left(\mathop{\rm col}\limits_k\left\{\nabla J_k(\boldsymbol{w}_{k,n-1}) - \nabla J_k(w^{\star})\right\} + d +  \boldsymbol{e}_{n-1} + \boldsymbol{s}_{n}^{B}\right)\notag\\
    =& \left[\begin{array}{cc}
        I_M & \boldsymbol{0} \\
        \boldsymbol{0} & P_{\alpha}\otimes I_M
    \end{array}\right]\left[\begin{array}{c}
         \bar{\boldsymbol{w}}_{n-1}  \\
          \check{\boldsymbol{w}}_{n-1}
    \end{array}\right] \notag\\
    &- \mu\left[\begin{array}{c}\frac{1}{\sqrt{K}}\mathds{1}^{\sf T}\\
   \mathcal{V}_{\alpha}^{\sf T}
\end{array}\right]\mathcal{A}_2\left(\mathop{\rm col}\limits_k\left\{\nabla J_k(\boldsymbol{w}_{k,n-1}) - \nabla J_k(w^{\star})\right\} + d +  \boldsymbol{e}_{n-1} + \boldsymbol{s}_{n}^{B}\right)
\end{align}
Note that
\begin{align}\label{tw-bw-cw}
\Vert{\widetilde{\boldsymbol{\scriptstyle\mathcal{W}}}}_{n}\Vert^2 = \Vert\mathcal{V}^{\sf T}{\widetilde{\boldsymbol{\scriptstyle\mathcal{W}}}}_{n}\Vert^2 = \Vert\bar{\boldsymbol{w}}_n\Vert^2 + \Vert\check{\boldsymbol{w}}_n\Vert^2
\end{align}

Applying the transformation in \eqref{unified_decomp} yields a decomposition of \eqref{unified_re_1} into two parts, characterized by the quantities $\{\bar{\boldsymbol{w}}_n, 
\check{\boldsymbol{w}}_n\}$:
\begin{align}\label{barw_re1}
\bar{\boldsymbol{w}}_n =& \bar{\boldsymbol{w}}_{n-1} - \frac{\mu}{\sqrt{K}}\sum_k\left(\nabla  J_k(\boldsymbol{w}_{k,n-1}) - \nabla J_k(w^{\star})\right) - \underbrace{\frac{\mu}{\sqrt{K}}\sum_k\nabla J_k(w^{\star})}_{0} - \frac{\mu}{\sqrt{K}}\sum_k\boldsymbol{e}_{k,n-1} \notag\\
&- \frac{\mu}{\sqrt{K}}\sum_k\boldsymbol{s}_{k,n}^B(\boldsymbol{w}_{k,n-1})\\
\label{checkw_re1}
\check{\boldsymbol{w}}_n =& \mathcal{P}_\alpha\check{\boldsymbol{w}}_{n-1} - \mu\mathcal{V}_{\alpha}^{\sf T}\mathcal{A}_2\left(\mathop{\rm col}\limits_k\left\{\nabla J_k(\boldsymbol{w}_{k,n-1}) - \nabla J_k(w^{\star})\right\} + d +  \boldsymbol{e}_{n-1} + \boldsymbol{s}_{n}^{B}\right)
\end{align}
Note that in the centralized method where both $\mathcal{P}_\alpha$ and $\mathcal{V}_{\alpha}$ are zero matrices, the component $\check{\boldsymbol{w}}_n$ is eliminated, and only the term $\bar{\boldsymbol{w}}_n$ remains in the recursion.

\section{Proof for Assumption \ref{assump_ip}}\label{p_assumpip}
\begin{proof}
    We first justify \eqref{assump_ip_eq_1}, which follows from \cite{cao2025decentralized}. Although \cite{cao2025decentralized} originally indicated that the perturbation-related term is of order $O(\epsilon)$, it can 
actually be verified that the order can be made arbitrarily close to $2$. Specifically, referring to Eq.(186) of \cite{cao2025decentralized}, where the approximation error term originates, it can be alternatively split as:
\begin{align}\label{p_assumip_wc2}
O(\epsilon)\Vert\bar{\boldsymbol{w}}_{n-1}\Vert = O(\epsilon^{\frac{7}{8}})\times O(\epsilon^{\frac{1}{8}})\Vert\bar{\boldsymbol{w}}_{n-1}\Vert {\le} O(\epsilon^{\frac{7}{4}}) + O(\epsilon^{\frac{1}{4}})\Vert\bar{\boldsymbol{w}}_{n-1}\Vert^2
\end{align}
Using this decomposition and the proof techniques provided in Appendix E of \cite{cao2025decentralized}, the bound in \eqref{assump_ip_eq_1} can be verified.

We now justify the bound in \eqref{assump_ip_eq_2}. Note that in the centralized setting, all agents share a single model denoted by $\boldsymbol{w}_{c,n}$, meaning their parameters are identical at every iteration. For the centralized setting, multiplying both sides of \eqref{barw_re1} by $\frac{1}{\sqrt{K}}$ and using the step size $\mu'$ gives:
\begin{align}\label{re_wc}
    \widetilde{\boldsymbol{w}}_{c,n} = \widetilde{\boldsymbol{w}}_{c,n-1} - \mu'\left(\nabla J(\boldsymbol{w}_{c,n-1}) - \nabla J(w^{\star})\right) - \mu'\bar{\boldsymbol{e}}_{n-1} - \mu'\bar{\boldsymbol{s}}_{n}^{B}
\end{align}
where
\begin{align}\label{p_assumip_des}
    &\bar{\boldsymbol{e}}_{n-1} \overset{\Delta}{=}\frac{1}{K}\sum_k\boldsymbol{e}_{k,n-1}(\boldsymbol{w}_{c,n-1})\\
    &\bar{\boldsymbol{s}}_{n}^{B} \overset{\Delta}{=}\frac{1}{K}\sum_k\boldsymbol{s}_{k,n}^B(\boldsymbol{w}_{c,n-1})
\end{align}

To proceed, we prove useful lemmas for $\bar{\boldsymbol{e}}_{n-1}$ and $\bar{\boldsymbol{s}}_{n}^{B}$. Basically,
\begin{align}\label{p_assumip_pe}
   \Vert\bar{\boldsymbol{e}}_{n-1}\Vert &\le \frac{1}{K}\sum_k\Vert\boldsymbol{e}_{k,n-1}(\boldsymbol{w}_{c,n-1})\Vert\notag\\
   &\le \frac{1}{K}\sum_k\mathds{E}\Vert\nabla_w Q_k(\boldsymbol{w}_{c,n-1};\widehat{\boldsymbol{x}}_{k,n},\boldsymbol{y}_{k,n}) - \nabla_w Q_k(\boldsymbol{w}_{c,n-1};\boldsymbol{x}_{k,n}^{\star},\boldsymbol{y}_{k,n})\Vert \notag\\
   &\overset{(a)}{\le} O(\epsilon)
\end{align}
where $(a)$ follows from \eqref{assump_sc_eq2} and \eqref{delta_d}. For the gradient noise, following a similar process to Lemma \ref{lm_zbgn}, we establish the following bounds:
\begin{align}
\label{p_assumip_ps0}
    \mathds{E}\{\bar{\boldsymbol{s}}_n^{B}\vert\mathcal{F}_{n-1}\} \overset{(a)}{=}& 0\\
    \label{p_assumip_ps2}
     \mathds{E}\left\{\Vert\bar{\boldsymbol{s}}_n^{B}\Vert^2\vert\mathcal{F}_{n-1}\right\} = &\mathds{E}\left\{\left\Vert\frac{1}{K}\sum_k{\boldsymbol{s}}_{k,n}^{B}(\boldsymbol{w}_{c,n-1})\right\Vert^2\bigg\vert\mathcal{F}_{n-1}\right\} \notag\\
     \overset{(b)}{=}& \frac{1}{K^2}\sum_k \mathds{E}\left\{\Vert{\boldsymbol{s}}_{k,n}^{B}(\boldsymbol{w}_{c,n-1})\Vert^2\vert\mathcal{F}_{n-1}\right\} \notag\\
     \overset{(c)}{\le}& O\left(\frac{1}{B}\right)\Vert\widetilde{\boldsymbol{w}}_{c,n-1}\Vert^2 +  O\left(\frac{1}{B}\right)\\
     \label{p_assumip_ps4}
     \mathds{E}\left\{\Vert\bar{\boldsymbol{s}}_n^{B}\Vert^4\vert\mathcal{F}_{n-1}\right\} \le& \frac{1}{K}\sum_k\mathds{E}\left\{\Vert{\boldsymbol{s}}_{k,n}^{B}(\boldsymbol{w}_{c,n-1})\Vert^4\vert\mathcal{F}_{n-1}\right\}\notag\\
     \overset{(d)}{\le}&O\left(\frac{1}{B^2}\right)\Vert\widetilde{\boldsymbol{w}}_{c,n-1}\Vert^4 +  O\left(\frac{1}{B^2}\right)
\end{align}
where $(a)$, $(c)$, and $(d)$ follow from \eqref{sgn_0}, \eqref{sgn_2}, \eqref{sgn_4}, respectively. The step $(b)$ follows from the independence among agents.

In addition, for any two random vectors $\boldsymbol{a}$ and $\boldsymbol{b}$, we have
\begin{align}\label{sum4_1}
    \Vert \boldsymbol{a} + \boldsymbol{b}\Vert^4 =& \Vert \boldsymbol{a}\Vert^4 + \Vert \boldsymbol{b}\Vert^4 + 2\Vert \boldsymbol{a}\Vert^2 \Vert \boldsymbol{b}\Vert^2 + 4\boldsymbol{a}^{\sf T}\boldsymbol{b}\Vert \boldsymbol{a}\Vert^2+ 4\boldsymbol{a}^{\sf T}\boldsymbol{b}\Vert \boldsymbol{b}\Vert^2 + 4(\boldsymbol{a}^{\sf T}\boldsymbol{b})^2 \notag\\
    \overset{(a)}{\le} &\Vert\boldsymbol{a}\Vert^4 + 3\Vert\boldsymbol{b}\Vert^4+ 8\Vert\boldsymbol{a}\Vert^2\Vert\boldsymbol{b}\Vert^2 + 4\Vert \boldsymbol{a}\Vert^2(\boldsymbol{a}^{\sf T}\boldsymbol{b})
\end{align}
where $(a)$ follows from  Cauchy–Schwarz inequality and Jensen's inequality. Moreover, if $\mathds{E}\boldsymbol{b}$ = 0, we have
\begin{align}\label{sum4_2}
    \mathds{E}\Vert\boldsymbol{a}+\boldsymbol{b}\Vert^4 \le \mathds{E}\Vert\boldsymbol{a}\Vert^4 + 3\mathds{E}\Vert\boldsymbol{b}\Vert^4+ 8\mathds{E}\Vert\boldsymbol{a}\Vert^2\Vert\boldsymbol{b}\Vert^2
\end{align}

We now prove the upper bound for $\mathds{E}\Vert\widetilde{\boldsymbol{w}}_{c,n-1}\Vert^4$. Assume $J(w)$ is $v$-strongly concave near $w^{\star}$, recalling \eqref{re_wc} gives:
\begin{align}\label{p_wc4}
    \mathds{E}\Vert\widetilde{\boldsymbol{w}}_{c,n}\Vert^4 =& \mathds{E}\Vert\widetilde{\boldsymbol{w}}_{c,n-1} - \mu'\left(\nabla J(\boldsymbol{w}_{c,n-1}) - \nabla J(w^{\star})\right) - \mu'\bar{\boldsymbol{e}}_{n-1} - \mu'\bar{\boldsymbol{s}}_{n}^{B}\Vert^4\notag\\
    \overset{(a)}{\le}& \mathds{E}\Vert\widetilde{\boldsymbol{w}}_{c,n-1} - \mu'\left(\nabla J(\boldsymbol{w}_{c,n-1}) - \nabla J(w^{\star})\right) - \mu'\bar{\boldsymbol{e}}_{n-1}\Vert^4 + 3\mu'^4\mathds{E}\Vert\bar{\boldsymbol{s}}_{n}^{B}\Vert^4 \notag\\
    & +8\mu'^2\mathds{E}\Vert\widetilde{\boldsymbol{w}}_{c,n-1} - \mu'\left(\nabla J(\boldsymbol{w}_{c,n-1}) - \nabla J(w^{\star})\right) - \mu'\bar{\boldsymbol{e}}_{n-1}\Vert^2\Vert\bar{\boldsymbol{s}}_{n}^{B}\Vert^2
\end{align}
where $(a)$ follows from \eqref{p_assumip_ps0}, \eqref{sum4_2}, and the Law of total expectation. To proceed, we bound each term on the right-hand side of \eqref{p_wc4} separately. Basically, for the first term, we have
\begin{align}\label{p_wc4_1}
&\mathds{E}\Vert\widetilde{\boldsymbol{w}}_{c,n-1} - \mu'\left(\nabla J(\boldsymbol{w}_{c,n-1}) - \nabla J(w^{\star})\right) - \mu'\bar{\boldsymbol{e}}_{n-1}\Vert^4 \notag\\
\overset{(a)}{\le}&\mathds{E}\Vert\widetilde{\boldsymbol{w}}_{c,n-1}\Vert^4 + 3\mu'^4\mathds{E}\Vert\nabla J(\boldsymbol{w}_{c,n-1}) - \nabla J(w^{\star}) + \bar{\boldsymbol{e}}_{n-1}\Vert^4 \notag\\&+
8\mu'^2\mathds{E}\Vert\widetilde{\boldsymbol{w}}_{c,n-1}\Vert^2\Vert\nabla J(\boldsymbol{w}_{c,n-1}) - \nabla J(w^{\star}) + {\bar{\boldsymbol{e}}}_{n-1}\Vert^2 \notag\\
&- 4\mu'\mathds{E}\Vert\widetilde{\boldsymbol{w}}_{c,n-1}\Vert^2\widetilde{\boldsymbol{w}}_{c,n-1}^{\sf T}(\nabla J(\boldsymbol{w}_{c,n-1}) - \nabla J(w^{\star}))- 4\mu'\mathds{E}\Vert\widetilde{\boldsymbol{w}}_{c,n-1}\Vert^2\widetilde{\boldsymbol{w}}_{c,n-1}^{\sf T}\bar{\boldsymbol{e}}_{n-1}
\end{align}
where $(a)$ follows from \eqref{sum4_1}. For the terms on the right-hand side of \eqref{p_wc4_1}, we further have
\begin{align}\label{p_wc4_11}
    3\mu'^4\mathds{E}\Vert\nabla J(\boldsymbol{w}_{c,n-1}) - \nabla J(w^{\star}) + \bar{\boldsymbol{e}}_{n-1}\Vert^4 \le& O(\mu'^4)\mathds{E}\Vert\nabla J(\boldsymbol{w}_{c,n-1}) - \nabla J(w^{\star}) \Vert^4 + O(\mu'^4)\mathds{E}\Vert\bar{\boldsymbol{e}}_{n-1}\Vert^4\notag\\
    \overset{(a)}{\le} &O(\mu'^4)\mathds{E}\Vert\widetilde{\boldsymbol{w}}_{c,n-1}\Vert^4 + O(\mu'^4\epsilon^4)
\end{align}
\begin{align}\label{p_wc4_12}
    8\mu'^2\mathds{E}\Vert\widetilde{\boldsymbol{w}}_{c,n-1}\Vert^2\Vert\nabla J(\boldsymbol{w}_{c,n-1}) - \nabla J(w^{\star}) + {\bar{\boldsymbol{e}}}_{n-1}\Vert^2 \le & O(\mu'^2)\mathds{E}\Vert\widetilde{\boldsymbol{w}}_{c,n-1}\Vert^2\left(2\Vert\nabla J(\boldsymbol{w}_{c,n-1}) - \nabla J(w^{\star})\Vert^2 + 2\Vert{\bar{\boldsymbol{e}}}_{n-1}\Vert^2\right)\notag\\
    \overset{(b)}{\le}& O(\mu'^2)\mathds{E}\Vert\widetilde{\boldsymbol{w}}_{c,n-1}\Vert^2 (O(1)\Vert\widetilde{\boldsymbol{w}}_{c,n-1}\Vert^2 + O(\epsilon^2)) \notag\\
    =& O(\mu'^2)\mathds{E}\Vert\widetilde{\boldsymbol{w}}_{c,n-1}\Vert^4 + O(\mu'^2\epsilon^2)\mathds{E}\Vert\widetilde{\boldsymbol{w}}_{c,n-1}\Vert^2
\end{align}
\begin{align}\label{p_wc4_13}
   - \widetilde{\boldsymbol{w}}_{c,n-1}^{\sf T}(\nabla J(\boldsymbol{w}_{c,n-1}) - \nabla J(w^{\star})) \overset{(c)}{\le} -v\Vert\widetilde{\boldsymbol{w}}_{c,n-1}\Vert^2
\end{align}
\begin{align}\label{p_wc4_14}
-\Vert\widetilde{\boldsymbol{w}}_{c,n-1}\Vert^2\widetilde{\boldsymbol{w}}_{c,n-1}^{\sf T}\bar{\boldsymbol{e}}_{n-1} &\overset{(d)}{\le} \Vert\widetilde{\boldsymbol{w}}_{c,n-1}\Vert^2 (O(\epsilon)\Vert\widetilde{\boldsymbol{w}}_{c,n-1}\Vert)\notag\\
&= \Vert\widetilde{\boldsymbol{w}}_{c,n-1}\Vert^2 (O(\epsilon^{\frac{7}{8}}) \times O(\epsilon^{\frac{1}{8}})\Vert\widetilde{\boldsymbol{w}}_{c,n-1}\Vert) \notag\\
&\le O(\epsilon^{\frac{1}{4}})\Vert\widetilde{\boldsymbol{w}}_{c,n-1}\Vert^4 + O(\epsilon^{\frac{7}{4}})\Vert\widetilde{\boldsymbol{w}}_{c,n-1}\Vert^2
\end{align}
where $(a)$ and $(b)$ follow from \eqref{affine_l_eq_2} and \eqref{p_assumip_pe}, $(c)$ follows from the strong convexity of $J(w)$, and $(d)$ follows from \eqref{p_assumip_pe}.

Substituting \eqref{p_wc4_11}--\eqref{p_wc4_14} into \eqref{p_wc4_1} gives
\begin{align}\label{p_wc4_1_f}
  &\mathds{E}\Vert\widetilde{\boldsymbol{w}}_{c,n-1} - \mu'\left(\nabla J(\boldsymbol{w}_{c,n-1}) - \nabla J(w^{\star})\right) - \mu'\bar{\boldsymbol{e}}_{n-1}\Vert^4\notag\\
  {\le}& (1 - O(\mu') + o(\mu'))\mathds{E}\Vert\widetilde{\boldsymbol{w}}_{c,n-1}\Vert^4 + O(\mu'^4\epsilon^4) + O(\mu'^2\epsilon^2)\mathds{E}\Vert\widetilde{\boldsymbol{w}}_{c,n-1}\Vert^2 +  O(\mu'\epsilon^\frac{7}{4})\mathds{E}\Vert\widetilde{\boldsymbol{w}}_{c,n-1}\Vert^2\notag\\
\overset{(a)}{\le}& (1 - O(\mu') + o(\mu'))\mathds{E}\Vert\widetilde{\boldsymbol{w}}_{c,n-1}\Vert^4 + O(\frac{\mu'^2}{B}\epsilon^{\frac{7}{4}}) + O(\mu'\epsilon^{\frac{7}{2}})
\end{align}
for small $\mu'$ and $\epsilon$, where $(a)$ follows from \eqref{assump_ip_eq_1_e}.

Then, for the second term in \eqref{p_wc4}, it follows from \eqref{p_assumip_ps4} that:
\begin{align}\label{p_wc4_2}
3\mu'^4\mathds{E}\Vert\bar{\boldsymbol{s}}_{n}^{B}\Vert^4 \le  O\left(\frac{\mu'^4}{B^2}\right)\mathds{E}\Vert\widetilde{\boldsymbol{w}}_{c,n-1}\Vert^4 + O\left(\frac{\mu'^4}{B^2}\right) 
\end{align}

As for the third term of \eqref{p_wc4}, we have
\begin{align}\label{p_wc4_3}
&8\mu'^2\mathds{E}\Vert\widetilde{\boldsymbol{w}}_{c,n-1} - \mu'\left(\nabla J(\boldsymbol{w}_{c,n-1}) - \nabla J(w^{\star})\right) - \mu'\bar{\boldsymbol{e}}_{n-1}\Vert^2\Vert\bar{\boldsymbol{s}}_{n}^{B}\Vert^2\notag\\
&\overset{(a)}{\le}O(\mu'^2)\mathds{E}\bigg\{\left(\Vert\widetilde{\boldsymbol{w}}_{c,n-1}\Vert^2 - 2\mu'\widetilde{\boldsymbol{w}}_{c,n-1}^{\sf T}\left(\nabla J(\boldsymbol{w}_{c,n-1}) - \nabla J(w^{\star})\right) - 2\mu'\widetilde{\boldsymbol{w}}_{c,n-1}^{\sf T}\bar{\boldsymbol{e}}_{n-1} + \mu'^2\Vert\nabla J(\boldsymbol{w}_{c,n-1}) - \nabla J(w^{\star}) + \bar{\boldsymbol{e}}_{n-1}\Vert^2 \right)\notag\\
&\quad\ \times\left(O(\frac{1}{B})\Vert\widetilde{\boldsymbol{w}}_{c,n-1}\Vert^2 + O(\frac{1}{B})\right) \bigg\}\notag\\
&\overset{(b)}{\le}O\left(\frac{\mu'^2}{B}\right)\mathds{E}\Big\{\big(\Vert\widetilde{\boldsymbol{w}}_{c,n-1}\Vert^2 - 2\mu'v\Vert\widetilde{\boldsymbol{w}}_{c,n-1}\Vert^2 + \mu'\Vert\widetilde{\boldsymbol{w}}_{c,n-1}\Vert^2 + O(\mu'\epsilon^2) + O(\mu'^2)\Vert\widetilde{\boldsymbol{w}}_{c,n-1}\Vert^2\big)\big(\Vert\widetilde{\boldsymbol{w}}_{c,n-1}\Vert^2+ O(1)\big)\Big\}\notag\\
 &=O\left(\frac{\mu'^2}{B}\right)\mathds{E}\Vert\widetilde{\boldsymbol{w}}_{c,n-1}\Vert^4 + O\left(\frac{\mu'^2}{B}\right)\mathds{E}\Vert\widetilde{\boldsymbol{w}}_{c,n-1}\Vert^2 + O\left(\frac{\mu'^3}{B}\epsilon^2\right)\notag\\
&\overset{(c)}{\le} O\left(\frac{\mu'^2}{B}\right)\mathds{E}\Vert\widetilde{\boldsymbol{w}}_{c,n-1}\Vert^4 + O\left(\frac{\mu'^3}{B^2}\right)+O\left(\frac{\mu'^2}{B}\epsilon^{\frac{7}{4}}\right)
\end{align}
where $(a)$ follows from \eqref{p_assumip_ps2} and the Law of total expectation, $(b)$ follows from the strong convexity of $J(w)$, \eqref{affine_l_eq_2}, and \eqref{p_assumip_pe}, and $(c)$ follows from \eqref{assump_ip_eq_1_e}.

Substituting \eqref{p_wc4_1_f}--\eqref{p_wc4_3} into \eqref{p_wc4}, we obtain
\begin{align}\label{p_wc4_f}
 \mathds{E}\Vert\widetilde{\boldsymbol{w}}_{c,n}\Vert^4 =& (1 - O(\mu') + o(\mu'))\mathds{E}\Vert\widetilde{\boldsymbol{w}}_{c,n-1}\Vert^4 + O\left(\frac{\mu'^3}{B^2}\right) + O(\mu'\epsilon^{\frac{7}{2}}) + O\left(\frac{\mu'^2}{B}\epsilon^{\frac{7}{4}}\right) 
\end{align}
where 
\begin{align}
    1 - O(\mu') + o(\mu') \le 1 
\end{align}
for sufficiently small $\mu'$, and
\begin{align}
  O\left(\frac{\mu'^2}{B}\epsilon^{\frac{7}{4}}\right) = O(\mu')(\frac{\mu'}{B}\times \epsilon^{\frac{7}{4}}) \le  O\left(\frac{\mu'^3}{B^2}\right) + O(\mu'\epsilon^{\frac{7}{2}})
\end{align}

Finally, iterating \eqref{p_wc4_f}, after sufficient iterations, we obtain
\begin{align}
    \mathop{\lim}\limits_{n\to\infty} \mathds{E}\left\Vert \widetilde{\boldsymbol{w}}_{c,n}\right\Vert^2 \le O(\frac{\mu'^2}{B^2}) + O(\epsilon^{\frac{7}{2}})  
\end{align}
which means that the centralized algorithm can attain an $O(\frac{\mu'^2}{B^2}) + O(\epsilon^{\frac{7}{2}})$-neighborhood of $w^{\star}$ in the fourth-order moment sense after enough iterations.
\end{proof}

\section{Proof for Lemma \ref{mse_2}}\label{p_mse_2}
\begin{proof}\label{p_mse2}
In this section, we establish bounds for $\mathds{E}\Vert{\widetilde{\boldsymbol{\scriptstyle\mathcal{W}}}}_{n}\Vert^2$ for the three distributed adversarial training algorithms. As discussed following lemma \ref{affine_l}, unlike the clean setting \cite{cao2024trade}, the affine Lipschitz property implies that the Hessian matrices in the adversarial context are not uniformly bounded. As a result, the proof techniques used in \cite{cao2024trade} for the clean case cannot be applied directly to this paper. To solve the problem, we use the recursion in \eqref{unified_re_1}, which is formulated in terms of gradients, to derive bounds for $\mathds{E}\Vert{\widetilde{\boldsymbol{\scriptstyle\mathcal{W}}}}_{n}\Vert^2$ and $\mathds{E}\Vert{\widetilde{\boldsymbol{\scriptstyle\mathcal{W}}}}_{n}\Vert^4$.

With \eqref{barw_re1} and \eqref{checkw_re1}, we derive bounds for $\mathds{E}\Vert\bar{\boldsymbol{w}}_n\Vert^2$ and $\mathds{E}\Vert\check{\boldsymbol{w}}_n\Vert^2$. Basically, we consider $0< t = 1 - O(\mu)< 1$, then, for $\mathds{E}\Vert\bar{\boldsymbol{w}}_n\Vert^2$,  \eqref{barw_re1} gives:
\begin{align}\label{bar_w_21}
    \mathds{E}\Vert\bar{\boldsymbol{w}}_n\Vert^2 &\overset{(a)}{=} \mathds{E}\Vert\bar{\boldsymbol{w}}_{n-1} - \frac{\mu}{\sqrt{K}}\sum_k(\nabla J_k(\boldsymbol{w}_{k,n-1}) - \nabla J_k(w^{\star})) - \frac{\mu}{\sqrt{K}}\sum_k \boldsymbol{e}_{k,n-1}\Vert^2 \notag\\
    &\quad+ \frac{\mu^2}{K}\sum_k \mathds{E}\Vert\boldsymbol{s}_{k,n}^B(\boldsymbol{w}_{k,n-1})\Vert^2\notag\\
    &= \mathds{E}\Bigg\Vert t\frac{1}{t}\bar{\boldsymbol{w}}_{n-1} + (1-t)\frac{1}{1-t}\frac{\mu}{\sqrt{K}}\left(\sum_k (\nabla J_k(\boldsymbol{w}_{k,n-1}) - \nabla J_k(w^{\star})) + \sum_k \boldsymbol{e}_{k,n-1}\right)\Bigg\Vert^2\notag\\
    &\quad\; + \frac{\mu^2}{K}\sum_k \mathds{E}\Vert\boldsymbol{s}_{k,n}^B\Vert^2\notag\\
    &\overset{(b)}{\le} \frac{1}{1-O(\mu)}\mathds{E}\Vert\bar{\boldsymbol{w}}_{n-1}\Vert^2 + \frac{2\mu^2}{O(\mu)}\sum_k\mathds{E}\Vert\nabla J_k(\boldsymbol{w}_{k,n-1}) - \nabla J_k(w^{\star})\Vert^2 \notag\\
    &\quad\; + \frac{2\mu^2}{O(\mu)}\sum_k \mathds{E}\Vert \boldsymbol{e}_{k,n-1}\Vert^2 + \frac{\mu^2}{K}\sum_k \mathds{E}\Vert\boldsymbol{s}_{k,n}^B\Vert^2
\end{align}
where $(a)$ follows from \eqref{sgn_0} and the independence of data among agents, namely, for any two agents $k$ and $\ell$, it holds that:
\begin{equation}\label{sk_sl_0a}
\mathds{E}\left[\boldsymbol{s}_{k,n}^B(\boldsymbol{s}_{\ell,n}^{B})^{\sf  T}\vert\mathcal{F}_{n-1}\right] = \mathds{E}\left[\boldsymbol{s}_{k,n}^B\vert\mathcal{F}_{n-1}\right]\times\mathds{E}\left[(\boldsymbol{s}_{\ell,n}^B)^{\sf T}\vert\mathcal{F}_{n-1}\right] = 0
\end{equation}
and $(b)$ follows from Jensen's inequality. Moreover, for the terms on the RHS of \eqref{bar_w_21}, we have
\begin{align}
\label{bar_w_211}
    &\Vert\nabla J_k (\boldsymbol{w}_{k,n-1}) - \nabla J_k(w^{\star})\Vert^2 \overset{(a)}{\le} 2L^2\Vert\widetilde{\boldsymbol{w}}_{k,n-1}\Vert^2 + O(\epsilon^2)\\
    \label{bar_w_212}
    &\Vert\boldsymbol{e}_{k,n-1}\Vert^2 \le \mathds{E}\Vert\nabla_w Q_k(\boldsymbol{w}_{k,n-1};\widehat{\boldsymbol{x}}_{k,n},\boldsymbol{y}_{k,n}) - \nabla_w Q_k(\boldsymbol{w}_{k,n-1};\boldsymbol{x}_{k,n}^{\star},\boldsymbol{y}_{k,n})\Vert^2 \overset{(b)}{\le} O(\epsilon^2)\\
 \label{bar_w_213}   
&\mathds{E}\{\Vert\boldsymbol{s}_{k,n}^B\Vert^2\vert\mathcal{F}_{n-1}\}\overset{(c)}{\le} O\left(\frac{1}{B}\right)\mathds{E}\Vert\widetilde{\boldsymbol{w}}_{k,n-1}\Vert^2 + O(\frac{1}{B})
    \end{align}
where $(a)$ follows from Lemma \ref{affine_l}, $(b)$ follows a similar reasoning to \eqref{size_e}, and $(c)$ follows from \eqref{sgn_2}. By substituting \eqref{bar_w_211}--\eqref{bar_w_213} into \eqref{bar_w_21}, we obtain
\begin{align}\label{bar_w_22}
\mathds{E}\Vert\bar{\boldsymbol{w}}_{n}\Vert^2 \le& \frac{1}{1-O(\mu)}\mathds{E}\Vert\bar{\boldsymbol{w}}_{n-1}\Vert^2 + O(\mu)\mathds{E}\Vert{\widetilde{\boldsymbol{\scriptstyle\mathcal{W}}}}_{n-1}\Vert^2 + O(\mu\epsilon^2)+ O\left(\frac{\mu^2}{B}\right)\mathds{E}\Vert{\widetilde{\boldsymbol{\scriptstyle\mathcal{W}}}}_{n-1}\Vert^2 + O\left(\frac{\mu^2}{B}\right)\notag\\
\overset{(a)}{=} &\left(\frac{1}{1-O(\mu)} + O(\mu)\right)\mathds{E}\Vert\bar{\boldsymbol{w}}_{n-1}\Vert^2 + O(\mu)\mathds{E}\Vert\check{\boldsymbol{w}}_{n-1}\Vert^2 + O(\mu\epsilon^2) + O\left(\frac{\mu^2}{B}\right)
\end{align}
where $(a)$ follows from \eqref{tw-bw-cw}.

We next establish bounds for $\mathds{E}\Vert\check{\boldsymbol{w}}_n\Vert^2$. Basically, recalling \eqref{checkw_re1} gives:
\begin{align}\label{check_w_21}
\mathds{E}\Vert\check{\boldsymbol{w}}_n\Vert^2 =& \mathds{E}\Vert\mathcal{P}_\alpha\check{\boldsymbol{w}}_{n-1} -  \mu\mathcal{V}_{\alpha}^{\sf T}\mathcal{A}_2\mathop{\rm col}\limits_k\left\{\nabla J_k(\boldsymbol{w}_{k,n-1}) - \nabla J_k(w^{\star})\right\} -  \mu\mathcal{V}_{\alpha}^{\sf T}\mathcal{A}_2d -\mu\mathcal{V}_{\alpha}^{\sf T}\mathcal{A}_2\boldsymbol{e}_{n-1}\Vert^2 \notag\\
&+ \mu^2\mathds{E}\Vert\mathcal{V}_{\alpha}^{\sf T}\mathcal{A}_2\boldsymbol{s}_{n}^B\Vert^2
\end{align}
for which we have
\begin{align}
\label{check_w_211}
&\mathds{E}\{\Vert\boldsymbol{s}_{n}^B\Vert^2 \vert\mathcal{F}_{n-1}\} = \sum\limits_k \mathds{E}\{\Vert\boldsymbol{s}_{k,n}^B(\boldsymbol{w}_{k,n-1})\Vert^2\vert\mathcal{F}_{n-1} \}\le O\left(\frac{1}{B}\right)\Vert {\widetilde{\boldsymbol{\scriptstyle\mathcal{W}}}}_{n} \Vert^2 + O\left(\frac{1}{B}\right)\\
\label{check_w_212}
&\Vert\boldsymbol{e}_{n-1}\Vert^2 = \sum\limits_k \Vert\boldsymbol{e}_{k,n-1}\Vert^2 \le O(\epsilon^2)
\end{align}

We substitute \eqref{check_w_211} and \eqref{check_w_212} into \eqref{check_w_21}, and apply the Jensen's inequality on it, then we have
\begin{align}\label{check_w_22}
\mathds{E}\Vert\check{\boldsymbol{w}}_n\Vert^2 {\le}& \rho(\mathcal{P}_\alpha)\mathds{E}\Vert\check{\boldsymbol{w}}_{n-1}\Vert^2 + O(\mu^2)\sum_k\mathds{E}\Vert\nabla J_k(\boldsymbol{w}_{k,n-1}) - \nabla J_k(w^{\star})\Vert^2 + O(\mu^2) \notag\\
& + O(\mu^2\epsilon^2) + O(\frac{\mu^2}{B})\mathds{E}\Vert{\widetilde{\boldsymbol{\scriptstyle\mathcal{W}}}}_{n-1}\Vert^2 + O(\frac{\mu^2}{B})\notag\\
\overset{(a)}{=}& (\rho(\mathcal{P}_\alpha) + O(\mu^2))\mathds{E}\Vert\check{\boldsymbol{w}}_{n-1}\Vert^2 + O(\mu^2)\mathds{E}\Vert\bar{\boldsymbol{w}}_{n-1}\Vert^2 + O(\mu^2)
\end{align}
where $\rho(\mathcal{P}_\alpha)$ is the spectral radius of $\mathcal{P}_\alpha$ which is smaller than 1 \cite{sayed2014adaptation}, $(a)$ follows from \eqref{bar_w_211} and \eqref{tw-bw-cw}, and the last constant term $O(\mu^2)$ arises from the network heterogeneity
which guarantees $d\neq 0$, otherwise this term reduces to $O(\mu^2\epsilon^2) + O(\frac{\mu^2}{B})$.

Combining \eqref{bar_w_22} and \eqref{check_w_22}, we have
\begin{align}\label{sum_tildew2}
    \mathds{E}\left[\begin{array}{c}
         \Vert\bar{\boldsymbol{w}}_{n}\Vert^2\\
          \Vert\check{\boldsymbol{w}}_{n}\Vert^2
    \end{array}\right]\le \left[\begin{array}{cc}
       \frac{1 + O(\mu)}{1-O(\mu)} & O(\mu) \\
         O(\mu^2)&  \rho(P_\alpha)+O(\mu^2)
    \end{array}\right]\left[\begin{array}{c}
         \mathds{E}\Vert\bar{\boldsymbol{w}}_{n-1}\Vert^2\\
         \mathds{E} \Vert\check{\boldsymbol{w}}_{n-1}\Vert^2
    \end{array}\right] + \left[\begin{array}{c}
         O(\frac{\mu^2}{B}) + O(\mu\epsilon^2)\\
         O(\mu^2)
    \end{array}\right]
\end{align}

Let  
\begin{equation}\label{d_gamma_1}
    \Gamma_1 = \left[\begin{array}{cc}
       \frac{1+O(\mu)}{1-O(\mu)} & O(\mu) \\
         O(\mu^2)&  \rho(P_\alpha)+O(\mu^2)
    \end{array}\right]
\end{equation}
then, iterating \eqref{sum_tildew2} gives
\begin{align}\label{sum_tildew2_2}
    \mathds{E}\left[\begin{array}{c}
         \Vert\bar{\boldsymbol{w}}_{n}\Vert^2\\
          \Vert\check{\boldsymbol{w}}_{n}\Vert^2
    \end{array}\right]&\le \Gamma_1^{n+1}\mathds{E}\left[\begin{array}{c}
         \Vert\bar{\boldsymbol{w}}_{-1}\Vert^2\\
          \Vert\check{\boldsymbol{w}}_{-1}\Vert^2
    \end{array}\right] + \sum\limits_{i = 0}^{n} \Gamma_1^{i}\left[\begin{array}{c}
         O(\frac{\mu^2}{B})\\
         O(\mu^2)
    \end{array}\right] \notag\\
    &= \Gamma_1^{n+1}\mathds{E}\left[\begin{array}{c}
         \Vert\bar{\boldsymbol{w}}_{-1}\Vert^2\\
          \Vert\check{\boldsymbol{w}}_{-1}\Vert^2
    \end{array}\right]  + (I - \Gamma_1)^{-1}(I-\Gamma_1^{n+1})\left[\begin{array}{c}
         O(\frac{\mu^2}{B}) + O(\mu\epsilon^2)\\
         O(\mu^2)
    \end{array}\right]
\end{align}
To proceed, we compute $(I - \Gamma_1)^{-1}(I-\Gamma_1^{n+1})$. Basically, as
\begin{align}
    1 - \frac{1 + O(\mu)}{1 - O(\mu)} = - O(\mu)
\end{align}
we get
\begin{align}\label{gamma_1_in}
    (I - \Gamma_1)^{-1} =& \left[\begin{array}{cc}
        1 - \frac{1+O(\mu)}{1-O(\mu)} & -O(\mu) \\
        -O(\mu^2) &  1 - \rho(P_\alpha)+O(\mu^2) + O(\mu^2)
    \end{array}\right]^{-1} \notag\\
    =& \left[\begin{array}{cc}
        -O(\mu) & -O(\mu) \\
        -O(\mu^2) &  O(1)
    \end{array}\right]^{-1} \notag\\
    =& \left[\begin{array}{cc}
        -O(\frac{1}{\mu}) & -O(1) \\
        -O(\mu) &  O(1)
    \end{array}\right]
\end{align}
Then, for the matrix power $\Gamma_1^{n+1}$, we have
\begin{align}\label{gamma1_p}
    \Gamma_1^{n+1} = \left[\begin{array}{cc}
      (\frac{1+O(\mu)}{1-O(\mu)})^{n+1}   & O(\mu) \\
        O(\mu^2) & \rho^{n+1}(P_\alpha)
    \end{array}\right] 
\end{align} 
Moreover, by resorting to Lemma 2 in \cite{VlaskiS21}, with $n\le O(\frac{1}{\mu})$, it holds that
\begin{align}\label{gamma1_p1}
   \left(\frac{1+O(\mu)}{1-O(\mu)}\right)^{n+1}  = O(1), \quad\quad 1 -  \left(\frac{1+O(\mu)}{1-O(\mu)}\right)^{n+1}  = -O(1)
\end{align}
In addition, according to assumption \ref{assump_ip}, we have
\begin{align}
\label{ip_bar}
  &\mathds{E}\Vert\bar{\boldsymbol{w}}_{-1}\Vert^2 = \mathds{E}\Vert \frac{1}{\sqrt{K}}\sum\limits_{k=1}^{K}\widetilde{\boldsymbol{w}}_{k,-1}\Vert^2  \le \sum\limits_{k=1}^{K}\mathds{E}\Vert\widetilde{\boldsymbol{w}}_{k,-1}\Vert^2 \le o(\frac{\mu}{B}) + O(\epsilon^{\frac{7}{4}}) \\
  \label{ip_check}
  &\mathds{E}\Vert\check{\boldsymbol{w}}_{-1}\Vert^2 =\mathds{E}\Vert\mathcal{V}^{\sf T}_{\alpha}{\widetilde{\boldsymbol{\scriptstyle\mathcal{W}}}}_{-1}\Vert^2 \le \Vert\mathcal{V}^{\sf T}_{\alpha}\Vert^2 \mathds{E}\Vert{\widetilde{\boldsymbol{\scriptstyle\mathcal{W}}}}_{-1}\Vert^2 \le o(\frac{\mu}{B}) + O(\epsilon^{\frac{7}{4}})
\end{align}
Substituting  \eqref{gamma_1_in}--\eqref{ip_check} into \eqref{sum_tildew2_2}, we obtain
\begin{align}\label{sum_tildew3}
     \mathds{E}\left[\begin{array}{c}
         \Vert\bar{\boldsymbol{w}}_{n}\Vert^2\\
          \Vert\check{\boldsymbol{w}}_{n}\Vert^2
    \end{array}\right] &\le \left[\begin{array}{c}
          o(\frac{\mu}{B}) + O(\epsilon^{\frac{7}{4}})\\
           o(\frac{\mu}{B}) + O(\epsilon^{\frac{7}{4}})
    \end{array}\right]  + \left[\begin{array}{cc}
        -O(\frac{1}{\mu}) & -O(1) \\
        -O(\mu) &  O(1)
    \end{array}\right]\notag\\
    &\quad\;\times\left[\begin{array}{cc}
     -O(1) & -O(\mu) \\
        -O(\mu^2) & O(1) 
    \end{array}\right]\left[\begin{array}{c}
         O(\frac{\mu^2}{B}) + O(\epsilon^2)\\
         O(\mu^2)
    \end{array}\right] \notag\\
    &\le \left[\begin{array}{c}
         O(\frac{\mu}{B})+O(\mu^2) + O(\epsilon^{\frac{7}{4}})\\
         O(\mu^2) + O(\epsilon^{\frac{7}{4}}) + o(\frac{\mu}{B})
    \end{array}\right]
\end{align}
according to which we have
\begin{align}\label{sum_tildew4}
\mathds{E}\Vert{\widetilde{\boldsymbol{\scriptstyle\mathcal{W}}}}_{n}\Vert^2 = \mathds{E}(\Vert\bar{\boldsymbol{w}}_{n}\Vert^2+\Vert\check{\boldsymbol{w}}_{n}\Vert^2)\le O(\frac{\mu}{B}) + O( \mu^2) + O(\epsilon^{\frac{7}{4}})
\end{align}
where the term $O\left(\frac{\mu}{B}\right)$ originates from the gradient noise, the term $O(\mu^2)$ is attributed to the network heterogeneity, and $O(\epsilon^{\frac{7}{4}})$ appears due to the adversarial perturbations.

Following the explanation after \eqref{check_w_22}, in the totally \textbf{homogeneous networks} where $d=0$, the bound of $\mathds{E}\Vert\check{\boldsymbol{w}}_{n}\Vert^2$ in \eqref{sum_tildew3} becomes:
\begin{align}
 \mathds{E}\Vert\check{\boldsymbol{w}}_{n}\Vert^2 \le O(\mu^2\epsilon^2) +  O\left(\frac{\mu^2}{B}\right) + O(\epsilon^{\frac{7}{4}}) + o\left(\frac{\mu}{B}\right) \le O(\epsilon^{\frac{7}{4}}) + o\left(\frac{\mu}{B}\right)
\end{align}
As a result, we have
\begin{align}
\mathds{E}\Vert{\widetilde{\boldsymbol{\scriptstyle\mathcal{W}}}}_{n}\Vert^2 \le O\left(\frac{\mu}{B}\right) + O\left(\epsilon^{\frac{7}{4}}\right)
\end{align}

 In addition, for the \textbf{centralized method} where $\check{\boldsymbol{w}}_n$ is $0$, the $O(\mu^2)$ term in \eqref{sum_tildew4} is eliminated. Consequently, we have
\begin{align}
\mathds{E}\Vert{\widetilde{\boldsymbol{\scriptstyle\mathcal{W}}}}_{n}\Vert^2  \le O\left(\frac{\mu}{B}\right) + O\left(\epsilon^{\frac{7}{4}}\right)
\end{align}
\end{proof}

\section{Proof for Lemma \ref{mse_2}}\label{p_mse_4}
\begin{proof}
    In this section, we derive bounds for $\mathds{E}\Vert{\widetilde{\boldsymbol{\scriptstyle\mathcal{W}}}}_{n}\Vert^4$.  First, we establish bounds for $\mathds{E}\Vert\check{\boldsymbol{w}}_{n}\Vert^4$. Basically, using \eqref{sgn_0} and \eqref{sum4_2}, we obtain the fourth-order moments of \eqref{checkw_re1} as follows:
    \begin{align}\label{check_w_4}
    \mathds{E}\Vert\check{\boldsymbol{w}}_{n}\Vert^4 \le& \mathds{E}\Vert\mathcal{P}_\alpha\check{\boldsymbol{w}}_{n-1} -  \mu\mathcal{V}_{\alpha}^{\sf T}\mathcal{A}_2\mathop{\rm col}\limits_k\left\{\nabla J_k(\boldsymbol{w}_{k,n-1}) - \nabla J_k(w^{\star})\right\} -  \mu\mathcal{V}_{\alpha}^{\sf T}\mathcal{A}_2d -\mu\mathcal{V}_{\alpha}^{\sf T}\mathcal{A}_2\boldsymbol{e}_{n-1}\Vert^4\notag\\
        &+ 3\mu^4\mathds{E}\Vert\mathcal{V}_{\alpha}^{\sf T}\mathcal{A}_2\boldsymbol{s}_{n}^B\Vert^4 + 8\mu^2\mathds{E}\Vert\mathcal{P}_\alpha\check{\boldsymbol{w}}_{n-1} -  \mu\mathcal{V}_{\alpha}^{\sf T}\mathcal{A}_2\mathop{\rm col}\limits_k\left\{\nabla J_k(\boldsymbol{w}_{k,n-1}) - \nabla J_k(w^{\star})\right\} \notag\\
        &-  \mu\mathcal{V}_{\alpha}^{\sf T}\mathcal{A}_2d -\mu\mathcal{V}_{\alpha}^{\sf T}\mathcal{A}_2\boldsymbol{e}_{n-1}\Vert^2\Vert\mathcal{V}_{\alpha}^{\sf T}\mathcal{A}_2\boldsymbol{s}_{n}^B\Vert^2
    \end{align}
We proceed by examining each of the three terms on the RHS of \eqref{check_w_4}. Basically, using Jensen's inequality on the first term, we have
    \begin{align}\label{check_w_4_11}
&\mathds{E}\Vert\mathcal{P}_\alpha\check{\boldsymbol{w}}_{n-1} -  \mu\mathcal{V}_{\alpha}^{\sf T}\mathcal{A}_2\mathop{\rm col}\limits_k\left\{\nabla J_k(\boldsymbol{w}_{k,n-1}) - \nabla J_k(w^{\star})\right\} -  \mu\mathcal{V}_{\alpha}^{\sf T}\mathcal{A}_2d -\mu\mathcal{V}_{\alpha}^{\sf T}\mathcal{A}_2\boldsymbol{e}_{n-1}\Vert^4\notag\\
&\le \rho(\mathcal{P}_{\alpha})\mathds{E}\Vert\check{\boldsymbol{w}}_{n-1}\Vert^4 + O(\mu^4)\mathds{E}\Vert\mathop{\rm col}\limits_k\left\{\nabla J_k(\boldsymbol{w}_{k,n-1}) - \nabla J_k(w^{\star})\right\}\Vert^4 + O(\mu^4)\mathds{E}\Vert\mathcal{V}_{\alpha}^{\sf T}\mathcal{A}_2d\Vert^4 \notag\\
&\quad\; + O(\mu^4)\mathds{E}\Vert\mathcal{V}_{\alpha}^{\sf T}\mathcal{A}_2\boldsymbol{e}_{n-1}\Vert^4\notag\\
&\overset{(a)}{\le} \rho(\mathcal{P}_{\alpha})\mathds{E}\Vert\check{\boldsymbol{w}}_{n-1}\Vert^4 + O(\mu^4)\mathds{E}\Vert{\widetilde{\boldsymbol{\scriptstyle\mathcal{W}}}}_{n-1}\Vert^4 + O(\mu^4\epsilon^2)\mathds{E}\Vert{\widetilde{\boldsymbol{\scriptstyle\mathcal{W}}}}_{n-1}\Vert^2 + O(\mu^4)\notag\\
&\overset{(b)}{\le} (\rho(\mathcal{P}_{\alpha}) + O(\mu^4))\mathds{E}\Vert\check{\boldsymbol{w}}_{n-1}\Vert^4 + O(\mu^4)\mathds{E}\Vert\bar{\boldsymbol{w}}_{n-1}\Vert^4  + O(\mu^4)
    \end{align}
where $(a)$ follows from \eqref{affine_l_eq_2} and \eqref{check_w_212}, namely,
\begin{align}\label{check_w_4_12}
    \Vert\mathop{\rm col}\limits_k\left\{\nabla J_k(\boldsymbol{w}_{k,n-1}) - \nabla J_k(w^{\star})\right\}\Vert^4 = &\left(\sum_k \Vert\nabla J_k(\boldsymbol{w}_{k,n-1}) - \nabla J_k(w^{\star})\Vert^2\right)^2 \notag\\
    \le & \left(2L^2\Vert{\widetilde{\boldsymbol{\scriptstyle\mathcal{W}}}}_{n-1}\Vert^2 + O(\epsilon^2)\right)^2\notag\\
    =& 4L^4\Vert{\widetilde{\boldsymbol{\scriptstyle\mathcal{W}}}}_{n-1}\Vert^4 +O(\epsilon^2)\Vert{\widetilde{\boldsymbol{\scriptstyle\mathcal{W}}}}_{n-1}\Vert^2 +O(\epsilon^4)
\end{align}
and, $(b)$ follows from \eqref{sum_tildew4} and the following inequality:
\begin{align}\label{check_w_4_13}
\left\Vert{\widetilde{\boldsymbol{\scriptstyle\mathcal{W}}}}_{n}\right\Vert^4 = (\Vert{\widetilde{\boldsymbol{\scriptstyle\mathcal{W}}}}_{n}\Vert^2)^2= (\Vert\bar{\boldsymbol{w}}_{n}\Vert^2+\Vert\check{\boldsymbol{w}}_{n}\Vert^2)^2 \le 2\Vert\bar{\boldsymbol{w}}_{n}\Vert^4 + 2\Vert\check{\boldsymbol{w}}_{n}\Vert^4
\end{align}

Then, for the second term on the RHS of \eqref{check_w_4} associated with the fourth-order moments of the gradient noise, we have
\begin{align}\label{check_w_4_21}
    3\mu^4\mathds{E}\Vert\mathcal{V}_{\alpha}^{\sf T}\mathcal{A}_2\boldsymbol{s}_{n}^B\Vert^4 \le O(\mu^4)\mathds{E}\Vert\boldsymbol{s}_{n}^B\Vert^4
\end{align}
Note that 
\begin{align}\label{check_w_4_22}
\mathds{E}\{\Vert\boldsymbol{s}_{n}^B\Vert^4\vert\mathcal{F}_{n-1}\} = & \mathds{E}\left\{\left(\sum_k\Vert\boldsymbol{s}_{k,n}^B\Vert^2\right)^2\bigg\vert\mathcal{F}_{n-1}\right\}\notag\\
\overset{(a)}{\le} &K\sum_k\mathds{E}\left\{\Vert\boldsymbol{s}_{k,n}^B\Vert^4\vert\mathcal{F}_{n-1}\right\} \notag\\
\overset{(b)}{\le}& O\left(\frac{1}{B^2}\right)\sum_k\Vert\widetilde{\boldsymbol{w}}_{k,n-1}\Vert^4 + O\left(\frac{1}{B^2}\right)\notag\\
\le & O\left(\frac{1}{B^2}\right)\Vert{\widetilde{\boldsymbol{\scriptstyle\mathcal{W}}}}_{n-1}\Vert^4 + O\left(\frac{1}{B^2}\right)
\end{align}
where $(a)$ follows from Jensen's inequality, and $(b)$ follows from \eqref{sgn_4}.
Substituting \eqref{check_w_4_22} and \eqref{check_w_4_13} into \eqref{check_w_4_21}, we obtain
\begin{align}
 3\mu^4\mathds{E}\Vert\mathcal{V}_{\alpha}^{\sf T}\mathcal{A}_2\boldsymbol{s}_{n}^B\Vert^4 \le    O\left(\frac{\mu^4}{B^2}\right)\mathds{E}\Vert\bar{\boldsymbol{w}}_{n-1}\Vert^4 + \left(\frac{\mu^4}{B^2}\right)\mathds{E}\Vert\check{\boldsymbol{w}}_{n-1}\Vert^4 + O\left(\frac{\mu^4}{B^2}\right) 
\end{align}

Next, for the cross term in \eqref{check_w_4}, we have
\begin{align}\label{check_w_4_3}
&8\mu^2\mathds{E}\Vert\mathcal{P}_\alpha\check{\boldsymbol{w}}_{n-1} -  \mu\mathcal{V}_{\alpha}^{\sf T}\mathcal{A}_2\mathop{\rm col}\limits_k\left\{\nabla J_k(\boldsymbol{w}_{k,n-1}) - \nabla J_k(w^{\star})\right\} -  \mu\mathcal{V}_{\alpha}^{\sf T}\mathcal{A}_2d -\mu\mathcal{V}_{\alpha}^{\sf T}\mathcal{A}_2\boldsymbol{e}_{n-1}\Vert^2\Vert\mathcal{V}_{\alpha}^{\sf T}\mathcal{A}_2\boldsymbol{s}_{n}^B\Vert^2 \notag\\
&\overset{(a)}{\le} O\left(\frac{\mu^2}{B}\right)\mathds{E}\bigg\{\Big(\rho(\mathcal{P}_{\alpha})\Vert\check{\boldsymbol{w}}_{n-1}\Vert^2 + O(\mu^2)\Vert\mathop{\rm col}\limits_k\left\{\nabla J_k(\boldsymbol{w}_{k,n-1}) - \nabla J_k(w^{\star})\right\}\Vert^2 \notag\\
&\quad\; + O(\mu^2)\Vert\mathcal{V}_{\alpha}^{\sf T}\mathcal{A}_2d\Vert^2 + O(\mu^2)\Vert\mathcal{V}_{\alpha}^{\sf T}\mathcal{A}_2\boldsymbol{e}_{n-1}\Vert^2\Big)\Big(O(1)\Vert\bar{\boldsymbol{w}}_{n-1}\Vert^2 + O(1)\Vert\check{\boldsymbol{w}}_{n-1}\Vert^2 + O(1) \Big)\bigg\}\notag\\
&\overset{(b)}{\le}  O\left(\frac{\mu^2}{B}\right)\mathds{E}\bigg\{ \Big(\rho(\mathcal{P}_{\alpha})\Vert\check{\boldsymbol{w}}_{n-1}\Vert^2 + O(\mu^2)\Vert\bar{\boldsymbol{w}}_{n-1}\Vert^2 + O(\mu^2)\Vert\check{\boldsymbol{w}}_{n-1}\Vert^2 + O(\mu^2)\Big)\notag\\
&\quad \times\Big(O(1)\Vert\bar{\boldsymbol{w}}_{n-1}\Vert^2 + O(1)\Vert\check{\boldsymbol{w}}_{n-1}\Vert^2 + O(1)\Big)\bigg\}\notag\\
&= O\left(\frac{\mu^2}{B}\right)\mathds{E}\Vert\check{\boldsymbol{w}}_{n-1}\Vert^4 + O\left(\frac{\mu^2}{B}\right)\mathds{E}\Vert\bar{\boldsymbol{w}}_{n-1}\Vert^4 + O\left(\frac{\mu^2}{B}\right)\mathds{E}\Vert\check{\boldsymbol{w}}_{n-1}\Vert^2 + O\left(\frac{\mu^4}{B}\right)\mathds{E}\Vert\bar{\boldsymbol{w}}_{n-1}\Vert^2 +  O\left(\frac{\mu^4}{B}\right)\notag\\
&\overset{(c)}{\le}  O\left(\frac{\mu^2}{B}\right)\mathds{E}\Vert\check{\boldsymbol{w}}_{n-1}\Vert^4 + O\left(\frac{\mu^2}{B}\right)\mathds{E}\Vert\bar{\boldsymbol{w}}_{n-1}\Vert^4 + O\left(\frac{\mu^4}{B}\right) + O\left(\frac{\mu^2}{B}\epsilon^{\frac{7}{4}}\right) + o\left(\frac{\mu^3}{B^2}\right)
\end{align}
where $(a)$ follows from Jensen's inequality, \eqref{check_w_211} and \eqref{tw-bw-cw}, $(b)$ follows from \eqref{bar_w_211}, \eqref{check_w_212} and  \eqref{tw-bw-cw}, and $(c)$ follows from \eqref{sum_tildew3}.

Substituting \eqref{rmuB}, \eqref{check_w_4_11}, \eqref{check_w_4_21}, and \eqref{check_w_4_3} into \eqref{check_w_4}, and grouping terms, we obtain
\begin{align}\label{check_w_4_f}
\mathds{E}\Vert\check{\boldsymbol{w}}_{n}\Vert^4 \le \left(\rho(\mathcal{P}_{\alpha}) + O\left(\mu^3\right)\right)\mathds{E}\Vert\check{\boldsymbol{w}}_{n-1}\Vert^4 + O\left({\mu^3}\right)\mathds{E}\Vert\bar{\boldsymbol{w}}_{n-1}\Vert^4  + O(\mu^4) + O(\mu^3\epsilon^{\frac{7}{4}})
\end{align}

We next analyze $\mathds{E}\Vert\bar{\boldsymbol{w}}_{n}\Vert^4$. Using \eqref{sgn_0} and \eqref{sum4_2}, we obtain the fourth-order moment of \eqref{barw_re1} as follows:
\begin{align}\label{bar_w_4}
\mathds{E}\Vert\bar{\boldsymbol{w}}_{n}\Vert^4 \le &\mathds{E}\left\Vert\bar{\boldsymbol{w}}_{n-1} - \frac{\mu}{\sqrt{K}}\sum_k(\nabla J_k(\boldsymbol{w}_{k,n-1}) - \nabla J_k(w^{\star})) - \frac{\mu}{\sqrt{K}}\sum_k \boldsymbol{e}_{k,n-1} \right\Vert^4 \notag\\
& + {3\mu^4}\mathds{E}\left\Vert\frac{1}{\sqrt{K}}\sum_k\boldsymbol{s}_{k,n}^B(\boldsymbol{w}_{k,n-1})\right\Vert^4  + 8\mu^2 \mathds{E}\Bigg\Vert\bar{\boldsymbol{w}}_{n-1} - \frac{\mu}{\sqrt{K}}\sum_k(\nabla J_k(\boldsymbol{w}_{k,n-1}) - \nabla J_k(w^{\star}))\notag\\
& - \frac{\mu}{\sqrt{K}}\sum_k \boldsymbol{e}_{k,n-1} \Bigg\Vert^2\left\Vert\frac{1}{\sqrt{K}}\sum_k \boldsymbol{s}_{k,n}^B(\boldsymbol{w}_{k,n-1})\right\Vert^2
\end{align}
Again, we examine the three terms on the RHS of \eqref{bar_w_4} separately. Basically, considering $t = 1 - O(\mu)$, and using Jensen's inequality on the first term, we obtain
\begin{align}\label{bar_w_4_11}
    &\mathds{E}\left\Vert\bar{\boldsymbol{w}}_{n-1} - \frac{\mu}{\sqrt{K}}\sum_k(\nabla J_k(\boldsymbol{w}_{k,n-1}) - \nabla J_k(w^{\star})) - \frac{\mu}{\sqrt{K}}\sum_k \boldsymbol{e}_{k,n-1} \right\Vert^4\notag\\
    & \le \frac{1}{t^3}\mathds{E}\Vert\bar{\boldsymbol{w}}_{n-1}\Vert^4 + O(\mu)\mathds{E}\left\Vert\frac{1}{\sqrt{K}}\sum_k(\nabla J_k(\boldsymbol{w}_{k,n-1}) - \nabla J_k(w^{\star}))\right\Vert^4 +  O(\mu)\mathds{E}\left\Vert\frac{1}{\sqrt{K}}\sum_k \boldsymbol{e}_{k,n-1} \right\Vert^4\notag\\
\end{align}
which can be bounded using the following inequalities:
\begin{align}\label{bar_w_4_12}
\left\Vert\frac{1}{\sqrt{K}}\sum_k(\nabla J_k(\boldsymbol{w}_{k,n-1}) - \nabla J_k(w^{\star}))\right\Vert^4 \le& K\sum_k \Vert\nabla J_k(\boldsymbol{w}_{k,n-1}) - \nabla J_k(w^{\star})\Vert^4\notag\\
\le &K\left(\sum_k \Vert\nabla J_k(\boldsymbol{w}_{k,n-1}) - \nabla J_k(w^{\star})\Vert^2\right)^2\notag\\
\overset{(a)}{\le} & 4L^4K\Vert{\widetilde{\boldsymbol{\scriptstyle\mathcal{W}}}}_{n-1}\Vert^4+ O(\epsilon^2)\Vert{\widetilde{\boldsymbol{\scriptstyle\mathcal{W}}}}_{n-1}\Vert^2 + O(\epsilon^4)
\end{align}
where $(a)$ follows from \eqref{check_w_4_12}. In addition,
\begin{align}\label{bar_w_4_13}
    \left\Vert\frac{1}{\sqrt{K}}\sum_k \boldsymbol{e}_{k,n-1} \right\Vert^4 \le K\sum_k\Vert\boldsymbol{e}_{k,n-1}\Vert^4 \overset{(a)}{\le} O(\epsilon^4)
\end{align}
where $(a)$ follows from \eqref{check_w_212}.

Substituting \eqref{bar_w_4_12} and \eqref{bar_w_4_13} into \eqref{bar_w_4_11}, we obtain
\begin{align}\label{bar_w_4_14}
    &\mathds{E}\left\Vert\bar{\boldsymbol{w}}_{n-1} - \frac{\mu}{\sqrt{K}}\sum_k(\nabla J_k(\boldsymbol{w}_{k,n-1}) - \nabla J_k(w^{\star})) - \frac{\mu}{\sqrt{K}}\sum_k \boldsymbol{e}_{k,n-1} \right\Vert^4\notag\\
    & \le \frac{1}{(1-O(\mu))^3}\mathds{E}\left\Vert\bar{\boldsymbol{w}}_{n-1}\right\Vert^4 + O(\mu)\mathds{E}\Vert{\widetilde{\boldsymbol{\scriptstyle\mathcal{W}}}}_{n-1}\Vert^4 + O(\mu\epsilon^2)\mathds{E}\Vert{\widetilde{\boldsymbol{\scriptstyle\mathcal{W}}}}_{n-1}\Vert^2 + O(\mu\epsilon^4)\notag\\
    & \overset{(a)}{\le} \frac{1}{(1-O(\mu))^3}\mathds{E}\left\Vert\bar{\boldsymbol{w}}_{n-1}\right\Vert^4 + O(\mu)\mathds{E}\Vert{\widetilde{\boldsymbol{\scriptstyle\mathcal{W}}}}_{n-1}\Vert^4 + O\left(\frac{\mu^2\epsilon^2}{B}\right) + O\left(\mu^3\epsilon^2\right) + O(\mu\epsilon^{\frac{15}{4}})
\end{align}
where $(a)$ follows from \eqref{sum_tildew4}.

Then, using \eqref{check_w_4_22}, the second term on the RHS of \eqref{bar_w_4}, which involves the fourth-order moment of the gradient noise, can be bounded as follows:
\begin{align}\label{bar_w_4_2}
 {3\mu^4}\mathds{E}\left\{\left\Vert\frac{1}{\sqrt{K}}\sum_k\boldsymbol{s}_{k,n}^B(\boldsymbol{w}_{k,n-1})\right\Vert^4\bigg\vert\mathcal{F}_{n-1}\right\}\le &O(\mu^4)\sum_k\mathds{E}\left\{\Vert\boldsymbol{s}_{k,n}^B(\boldsymbol{w}_{k,n-1})\Vert^4\vert\mathcal{F}_{n-1}\right\}\notag\\
\le& O\left(\frac{\mu^4}{B^2}\right)\Vert{\widetilde{\boldsymbol{\scriptstyle\mathcal{W}}}}_{n}\Vert^4 + O\left(\frac{\mu^4}{B^2}\right)
\end{align}

As for the cross term in \eqref{bar_w_4}, we have
\begin{align}\label{bar_w_4_3}
&8\mu^2 \mathds{E}\Bigg\Vert\bar{\boldsymbol{w}}_{n-1} - \frac{\mu}{\sqrt{K}}\sum_k(\nabla J_k(\boldsymbol{w}_{k,n-1}) - \nabla J_k(w^{\star}))- \frac{\mu}{\sqrt{K}}\sum_k \boldsymbol{e}_{k,n-1} \Bigg\Vert^2\left\Vert\frac{1}{\sqrt{K}}\sum_k \boldsymbol{s}_{k,n}^B(\boldsymbol{w}_{k,n-1})\right\Vert^2\notag\\
&\overset{(a)}{\le} O\left(\frac{\mu^2}{B}\right)\mathds{E}\Bigg\{\Bigg(\frac{1}{1-O(\mu)}\Vert\bar{\boldsymbol{w}}_{n-1}\Vert^2 + O(\mu)\left\Vert\frac{1}{\sqrt{K}}\sum_k(\nabla J_k(\boldsymbol{w}_{k,n-1}) - \nabla J_k(w^{\star}))\right\Vert^2 + O(\mu\epsilon^2)\Bigg)\notag\\
&\quad\;\times\Big(O(1)\Vert\bar{\boldsymbol{w}}_{n-1}\Vert^2+ O(1)\Vert\check{\boldsymbol{w}}_{n-1}\Vert^2 + O(1) \Big)\Bigg\}\notag\\
&\overset{(b)}{\le} O\left(\frac{\mu^2}{B}\right)\mathds{E}\Big\{\big(O(1)\Vert\bar{\boldsymbol{w}}_{n-1}\Vert^2 + O(\mu)\Vert\check{\boldsymbol{w}}_{n-1}\Vert^2 + O(\mu\epsilon^2)\big)\big(O(1)\Vert\bar{\boldsymbol{w}}_{n-1}\Vert^2+ O(1)\Vert\check{\boldsymbol{w}}_{n-1}\Vert^2 + O(1) \big)\Big\}\notag\\
&\le O\left(\frac{\mu^2}{B}\right)\mathds{E}\Big\{O(1)\Vert\bar{\boldsymbol{w}}_{n-1}\Vert^4 + O(1)\Vert\check{\boldsymbol{w}}_{n-1}\Vert^4 + O(1)\Vert\bar{\boldsymbol{w}}_{n-1}\Vert^2+ O(\mu)\Vert\check{\boldsymbol{w}}_{n-1}\Vert^2+ O(\mu\epsilon^2)\Big\}\notag\\
&\overset{(c)}{\le}O\left(\frac{\mu^2}{B}\right)\mathds{E}\Vert\bar{\boldsymbol{w}}_{n-1}\Vert^4 + O\left(\frac{\mu^2}{B}\right)\mathds{E}\Vert\check{\boldsymbol{w}}_{n-1}\Vert^4 + O\left(\frac{\mu^3}{B^2}\right) + O\left(\frac{\mu^4}{B}\right) + O\left(\frac{\mu^2}{B}\epsilon^{\frac{7}{4}}\right)
\end{align}
where $(a)$ follows from Jensen's inequality, \eqref{check_w_212}, \eqref{check_w_211}, and \eqref{tw-bw-cw}, $(b)$ follows from \eqref{bar_w_211}, and $(c)$ follows from \eqref{sum_tildew3}.

Substituting \eqref{rmuB}, \eqref{bar_w_4_14}, \eqref{bar_w_4_2}, and \eqref{bar_w_4_3} into \eqref{bar_w_4}, and grouping terms, we obtain
\begin{align}\label{bar_w_4_f}
\mathds{E}\Vert\bar{\boldsymbol{w}}_{n}\Vert^4 \le &\left(\frac{1}{(1 - O(\mu))^3}+ O(\mu)\right)\mathds{E}\Vert\bar{\boldsymbol{w}}_{n-1}\Vert^4  + O(\mu)\mathds{E}\Vert\check{\boldsymbol{w}}_{n-1}\Vert^4 + O(\mu^5)+ O\left(\mu^3\epsilon^{\frac{7}{4}}\right) + O\left(\mu\epsilon^{\frac{15}{4}}\right)
\end{align}

Combining \eqref{check_w_4_f} and \eqref{bar_w_4_f} into a single compact recursive inequality, we obtain
\begin{align}\label{tilde_w_4_1}
\mathds{E}\left[\begin{array}{c}
         \Vert\bar{\boldsymbol{w}}_{n}\Vert^4\\
          \Vert\check{\boldsymbol{w}}_{n}\Vert^4
    \end{array}\right]\le& \left[\begin{array}{cc}
       \frac{1}{(1-O(\mu))^3} + O(\mu) & O(\mu) \\
         O(\mu^3)&  \rho(P_\alpha)+O(\mu^3)
    \end{array}\right]\mathds{E}\left[\begin{array}{c}\Vert\bar{\boldsymbol{w}}_{n-1}\Vert^4\\
         \Vert\check{\boldsymbol{w}}_{n-1}\Vert^4
    \end{array}\right] \notag\\
    &+ \left[\begin{array}{c}
         O(\mu^5) + O\left(\mu^3\epsilon^{\frac{7}{4}}\right) + O\left(\mu\epsilon^{\frac{15}{4}}\right)\\
         O(\mu^4) + O(\mu^3\epsilon^{\frac{7}{4}})
    \end{array}\right]
\end{align}
To proceed, we define the coefficient matrix $\Gamma_2$:
\begin{align}\label{gamma_tilde_w4}
    \Gamma_2 \overset{\Delta}{=} \left[\begin{array}{cc}
       \frac{1}{(1-O(\mu))^3} + O(\mu)& O(\mu) \\
         O(\mu^3)&  \rho(P_\alpha)+O(\mu^3)
    \end{array}\right] = \left[\begin{array}{cc}
       \frac{1+O(\mu)}{(1-O(\mu))^3} & O(\mu) \\
         O(\mu^3)&  \rho(P_\alpha)+O(\mu^3)
    \end{array}\right]
\end{align}
with which and iterating \eqref{tilde_w_4_1}, we obtain
\begin{align}\label{tilde_w_4_2}
\mathds{E}\left[\begin{array}{c}
         \Vert\bar{\boldsymbol{w}}_{n}\Vert^4\\
          \Vert\check{\boldsymbol{w}}_{n}\Vert^4
    \end{array}\right]\le \Gamma_2^{n+1}\mathds{E}\left[\begin{array}{c}
         \Vert\bar{\boldsymbol{w}}_{-1}\Vert^4\\
          \Vert\check{\boldsymbol{w}}_{-1}\Vert^4
    \end{array}\right] + (I-\Gamma_2)^{-1}(I - \Gamma_2^{n+1})\left[\begin{array}{c}
          O(\mu^5) + O\left(\mu^3\epsilon^{\frac{7}{4}}\right) + O\left(\mu\epsilon^\frac{15}{4}\right)\\
         O(\mu^4) + O(\mu^3\epsilon^{\frac{7}{4}})
    \end{array}\right]
\end{align}
Note that
\begin{align}\label{gamma2_inverse}
 (I-\Gamma_2)^{-1} =&  \left[\begin{array}{cc}
       1 - \frac{1+O(\mu)}{(1-O(\mu))^3} & -O(\mu) \\
         -O(\mu^3)&  1 - \rho(P_\alpha) - O(\mu^3)
    \end{array}\right]^{-1} \notag\\
    =& \left[\begin{array}{cc}
       -O(\mu) & -O(\mu) \\
         -O(\mu^3)&  O(1)
    \end{array}\right]^{-1} \notag\\
    =&  \left[\begin{array}{cc}
       -O(\frac{1}{\mu}) & -O(1) \\
        -O(\mu^2)&  O(1)
    \end{array}\right]
\end{align}
\begin{align}\label{gamma2_n}
      \Gamma_2^{n+1}=& \left[\begin{array}{cc}
       \frac{1+O(\mu)}{(1-O(\mu))^3} & O(\mu) \\
         O(\mu^3)&   \rho(P_\alpha)+O(\mu^3)
    \end{array}\right]^{n+1}\notag\\
    =& \left[\begin{array}{cc}
       (\frac{(1+\mu L)^4}{(1-O(\mu))^3})^{n+1} +o(\mu) & O(\mu) \\
         O(\mu^3)&  \rho(P_\alpha)^{n+1} + o(\mu) 
    \end{array}\right] 
\end{align}
Similar to \eqref{gamma1_p1}, we resort to Lemma 2 in \cite{VlaskiS21}. Accordingly, for $n\le O\left(\frac{1}{\mu}\right)$, it holds that
\begin{align}\label{gamma2_p2}
    \left(\frac{1+O(\mu)}{(1-O(\mu))^3}\right)^{n+1}   = O(1), \quad\quad 1- \left(\frac{(1+O(\mu)}{(1-O(\mu))^3}\right)^{n+1}   = -O(1)
\end{align}
so that
\begin{align}\label{gamma2_n2}
    I - \Gamma_2^{n+1}  =  \left[\begin{array}{cc}
       -O(1) & -O(\mu) \\
         -O(\mu^3)&  O(1) 
    \end{array}\right]
\end{align}
\begin{align}\label{gamma2_n3}
    \Gamma_2^{n+1}= \left[\begin{array}{cc}
       O(1) & O(\mu) \\
         O(\mu^3)&  O(1)
    \end{array}\right] 
\end{align}

Substituting \eqref{assump_ip_eq_2}, \eqref{rmuB}, \eqref{gamma2_inverse}, \eqref{gamma2_n2}, and \eqref{gamma2_n3} into \eqref{tilde_w_4_2}, we get
\begin{align}\label{tilde_w_4_3}
    \mathds{E}\left[\begin{array}{c}
         \Vert\bar{\boldsymbol{w}}_{n}\Vert^4\\
          \Vert\check{\boldsymbol{w}}_{n}\Vert^4
    \end{array}\right]\le& o(\frac{\mu^2}{B^2}) + O(\epsilon^{\frac{7}{2}}) + \left[\begin{array}{cc}
       -O(\frac{1}{\mu}) & -O(1) \\
         -O(\mu^2)&  O(1)
    \end{array}\right]\left[\begin{array}{cc}
       -O(1) & -O(\mu) \\
        - O(\mu^3)&  O(1) 
    \end{array}\right]\notag\\
    &\times\left[\begin{array}{c}
         O(\mu^5) + O(\mu^3\epsilon^{\frac{7}{4}}) + O(\mu\epsilon^{\frac{15}{4}})\\
         O(\mu^4) + O(\mu^3\epsilon^{\frac{7}{4}})
    \end{array}\right] \notag\\
     \le& o(\mu^4) + O(\epsilon^{\frac{7}{2}}) + \left[\begin{array}{c}
         O(\mu^4) + O(\mu^2\epsilon^{\frac{7}{4}}) + O(\epsilon^{\frac{15}{4}})\\
         O(\mu^4) + O(\mu^3\epsilon^{\frac{7}{4}})
    \end{array}\right]
\end{align}
with which and \eqref{check_w_4_13}, we have
\begin{align}\label{tilde_w_4_4}
\mathds{E}\Vert{\widetilde{\boldsymbol{\scriptstyle\mathcal{W}}}}_{n}\Vert^4 \le 2\mathds{E}\left(\Vert\bar{\boldsymbol{w}}_{n}\Vert^4 + \Vert\check{\boldsymbol{w}}_{n}\Vert^4\right) \le O(\mu^4) + O(\mu^2\epsilon^{\frac{7}{4}}) + O(\epsilon^{\frac{7}{2}}) \overset{(a)}{\le} O(\mu^4) + O(\epsilon^{\frac{7}{2}})
\end{align}
where $(a)$ follows from the following inequality:
\begin{align}
    \mu^2\epsilon^{\frac{7}{4}} \le \frac{1}{2}\mu^4+ \frac{1}{2}\epsilon^\frac{7}{2}
\end{align}
Also, comparing \eqref{tilde_w_4_4} and \eqref{sum_tildew4}, we observe that
\begin{align}\label{c_tildew2_4}
\mathds{E}\Vert{\widetilde{\boldsymbol{\scriptstyle\mathcal{W}}}}_{n}\Vert^2 = O((\mathds{E}\Vert{\widetilde{\boldsymbol{\scriptstyle\mathcal{W}}}}_{n}\Vert^4)^{\frac{1}{2}})
\end{align}
Moreover, by using the Jensen's inequality, we have
 \begin{align}
\mathds{E}\Vert{\widetilde{\boldsymbol{\scriptstyle\mathcal{W}}}}_{n}\Vert^3 = \mathds{E}(\Vert{\widetilde{\boldsymbol{\scriptstyle\mathcal{W}}}}_{n}\Vert^4)^{\frac{3}{4}} \le (\mathds{E}\Vert{\widetilde{\boldsymbol{\scriptstyle\mathcal{W}}}}_{n}\Vert^4)^{\frac{3}{4}} = O\left((\mathds{E}\Vert{\widetilde{\boldsymbol{\scriptstyle\mathcal{W}}}}_{n}\Vert^2)^{1.5}\right)
 \end{align}
\end{proof}
\section{Proof for Lemma \ref{mse_ae}}\label{p_mse_ae}
\begin{proof}
We now analyze the approximation error introduced by the short-term model. The proof techniques are inspired by Appendix E of  \cite{cao2024trade}, with our focus shifted to the adversarial environment, where it is essential to analyze the impact of adversarial perturbations. Recalling the true recursion in \eqref{unified_re_2}:
\begin{align}\label{unified_re_2_2}
\widetilde{\boldsymbol{\scriptstyle\mathcal{W}}}_{n} = \mathcal{A}_2\left(\mathcal{A}_1 - \mu\mathcal{H}_{n-1}\right)\widetilde{\boldsymbol{\scriptstyle\mathcal{W}}}_{n-1} -  \mu\mathcal{A}_2d - \mu\mathcal{A}_2 \boldsymbol{e}_{n-1} - \mu\mathcal{A}_2\boldsymbol{s}_{n}^{B}
\end{align}
and the approximate model in \eqref{approx_unified_re_2}:
\begin{align}\label{approx_unified_re_2_2}
{\widetilde{\boldsymbol{\scriptstyle\mathcal{W}}}}_{n}' = \mathcal{A}_2\left(\mathcal{A}_1 - \mu\mathcal{H}\right)\widetilde{\boldsymbol{\scriptstyle\mathcal{W}}}_{n-1}' -  \mu\mathcal{A}_2d - \mu\mathcal{A}_2\boldsymbol{s}_{n}^{B}
\end{align}
where ${\widetilde{\boldsymbol{\scriptstyle\mathcal{W}}}}_{-1}' = {\widetilde{\boldsymbol{\scriptstyle\mathcal{W}}}}_{-1}$. We introduce the error vector which quantifies the difference between ${\widetilde{\boldsymbol{\scriptstyle\mathcal{W}}}}_{n}$ and ${\widetilde{\boldsymbol{\scriptstyle\mathcal{W}}}}_{n}'$:
\begin{align}\label{zn_1}
    \boldsymbol{z}_n \overset{\Delta}{=} & {\widetilde{\boldsymbol{\scriptstyle\mathcal{W}}}}_{n}' - {\widetilde{\boldsymbol{\scriptstyle\mathcal{W}}}}_{n}\notag\\
    =&  \mathcal{A}_2\left(\mathcal{A}_1 - \mu\mathcal{H}\right)(\widetilde{\boldsymbol{\scriptstyle\mathcal{W}}}_{n-1}'- \widetilde{\boldsymbol{\scriptstyle\mathcal{W}}}_{n-1}) + \mu \mathcal{A}_2(\mathcal{H}_{n-1} - \mathcal{H})\widetilde{\boldsymbol{\scriptstyle\mathcal{W}}}_{n-1} + \mu\mathcal{A}_2\boldsymbol{e}_{n-1}\notag\\
    =& \mathcal{V}(\mathcal{P} - \mu \mathcal{V}^{\sf T}\mathcal{A}_2\mathcal{H}\mathcal{V})\mathcal{V}^{\sf T}\boldsymbol{z}_{n-1}+\mu\mathcal{A}_2(\mathcal{H}_{n-1} - \mathcal{H})\widetilde{\boldsymbol{\scriptstyle\mathcal{W}}}_{n-1} + \mu\mathcal{A}_2\boldsymbol{e}_{n-1}
\end{align}

Similar to \eqref{unified_decomp}, multiplying both sides of \eqref{zn_1} by $\mathcal{V}^{\sf T}$, and using \eqref{vpv1}, we obtain
\begin{align}
         \mathcal{V}^{\sf T}{{\boldsymbol{{z}}}}_{n} {=} &\left[\begin{array}{c}
         \frac{1}{\sqrt{K}}\mathds{1}^{\sf  T}\boldsymbol{z}_n \\
         \mathcal{V}_\alpha^{\sf T}\boldsymbol{z}_n 
    \end{array}\right] \overset{\Delta}{=} \left[\begin{array}{c}
         \bar{\boldsymbol{z}}_n  \\
          \check{\boldsymbol{z}}_n
    \end{array}\right] \notag\\
    =&  \left(\left[\begin{array}{cc}
        I_M & 0 \\
        0 & \mathcal{P}_\alpha
    \end{array}\right] - \mu\left[\begin{array}{c}\frac{1}{\sqrt{K}}\mathds{1}^{\sf  T}\\
    \mathcal{V}_\alpha^{\sf T}
    \end{array}\right]\mathcal{A}_2\mathcal{H}\left[\frac{1}{\sqrt{K}}\mathds{1} \quad \mathcal{V}_\alpha\right]\right) \left[\begin{array}{c}
    \bar{\boldsymbol{z}}_{n-1}  \\
          \check{\boldsymbol{z}}_{n-1}
    \end{array}\right] \notag\\
    & + \mu \mathcal{V}^{\sf T}\mathcal{A}_2(\mathcal{H}_{n-1} - \mathcal{H})\widetilde{\boldsymbol{\scriptstyle\mathcal{W}}}_{n-1} + \mu \mathcal{V}^{\sf T}\mathcal{A}_2\boldsymbol{e}_{n-1}
\end{align}
Consequently, using \eqref{glbhsws}, we have
\begin{align}
\label{barz}
    \bar{\boldsymbol{z}}_n &= (I_M - \mu\bar{H})\bar{\boldsymbol{z}}_{n-1} - \frac{\mu}{\sqrt{K}}\mathds{1}^{\sf  T}\mathcal{H}\mathcal{V}_\alpha\check{\boldsymbol{z}}_{n-1}+ \frac{\mu}{\sqrt{K}}\mathds{1}^{\sf  T}(\mathcal{H}_{n-1} - \mathcal{H})\widetilde{\boldsymbol{\scriptstyle\mathcal{W}}}_{n-1} + \frac{\mu}{\sqrt{K}}\mathds{1}^{\sf  T}\boldsymbol{e}_{n-1}\\
    \label{checkz}
    \check{\boldsymbol{z}}_{n} &= (\mathcal{P}_{\alpha} - \mu\mathcal{V}_{\alpha}^{\sf T}\mathcal{A}_2\mathcal{H}\mathcal{V}_\alpha)\check{\boldsymbol{z}}_{n-1} - \frac{\mu}{\sqrt{K}}\mathcal{V}_{\alpha}^{\sf T}\mathcal{A}_2\mathcal{H}\mathds{1}\bar{\boldsymbol{z}}_{n-1} + \mu\mathcal{V}_{\alpha}^{\sf T}\mathcal{A}_2(\mathcal{H}_{n-1} - \mathcal{H})\widetilde{\boldsymbol{\scriptstyle\mathcal{W}}}_{n-1} + \mu\mathcal{V}_{\alpha}^{\sf T}\mathcal{A}_2\boldsymbol{e}_{n-1}
\end{align}

We now establish bounds for $\mathds{E}\Vert\bar{\boldsymbol{z}}_n\Vert^2$. Given that $w^{\star}$ is a local minimizer of $J(w)$, with $\bar{H}$ representing  the associated Hessian matrix, we leverage the local strong convexity of $J(w)$ at local minima. Basically, we have
\begin{align}
0< \Vert I - \mu\bar{H}\Vert = 1 - O(\mu) < 1
\end{align}
Accordingly, letting $t =\Vert I - \mu\bar{H}\Vert$,  and using Jensen's inequality on \eqref{barz}, we have
\begin{align}\label{barz1}
    \mathds{E}\Vert \bar{\boldsymbol{z}}_{n}\Vert^2 = &\mathds{E}\Vert (I_M - \mu \bar{H})\bar{\boldsymbol{z}}_{n-1} - \frac{\mu}{\sqrt{K}}\mathds{1}^{\sf T}\mathcal{H}\mathcal{V}_{\alpha}\check{\boldsymbol{z}}_{n-1} + \frac{\mu}{\sqrt{K}}\mathds{1}^{\sf  T}(\mathcal{H}_{n-1} - \mathcal{H})\widetilde{\boldsymbol{\scriptstyle\mathcal{W}}}_{n-1} + \frac{\mu}{\sqrt{K}}\mathds{1}^{\sf  T}\boldsymbol{e}_{n-1}\Vert^2 \notag\\
{\le} & t\mathds{E}\Vert\bar{\boldsymbol{z}}_{n-1} \Vert^2+ \frac{\mu^2}{K(1-t)}\mathds{E}\Vert\mathds{1}^{\sf T}\mathcal{H}\mathcal{V}_{\alpha}\check{\boldsymbol{z}}_{n-1}\Vert^2+ \frac{\mu^2}{K(1-t)}\mathds{E}\Vert\mathds{1}^{\sf  T}(\mathcal{H}_{n-1} - \mathcal{H})\widetilde{\boldsymbol{\scriptstyle\mathcal{W}}}_{n-1}\Vert^2 \notag\\
&+  \frac{\mu^2}{K(1-t)}\mathds{E}\Vert\mathds{1}^{\sf  T}\boldsymbol{e}_{n-1}\Vert^2
\end{align}
Note that although the Hessian matrix of $J(w)$ is not guaranteed to be uniformly bounded over all $w$, the Hessian matrix at a local minimum $H_{k}^{\star}$ for $k=1,\ldots,K$ are constants. Thus, the terms related to $\mathcal{H}$, the collection of $H_{k}^{\star}$ as defined in \eqref{collec_hk}, can be bounded. Accordingly, we have
\begin{align}\label{barz1_1}
\frac{\mu^2}{K(1-t)}\mathds{E}\Vert\mathds{1}^{\sf T}\mathcal{H}\mathcal{V}_{\alpha}\check{\boldsymbol{z}}_{n-1}\Vert^2 \le O(\mu^2) \mathds{E}\Vert\check{\boldsymbol{z}}_{n-1}\Vert^2 
\end{align}
In addition, using Taylor expansion, when $\boldsymbol{w}_{k,n}$ is sufficiently close to $w^{\star}$, we have
\begin{align}
     H_{k,n-1} - H_k^\star =  \nabla H_k^\star\tilde{\boldsymbol{w}}_{k,n-1} + O(\Vert\tilde{\boldsymbol{w}}_{k,n-1}\Vert^2)
\end{align}
and 
\begin{align}
    \Vert (H_{k,n-1} - H_k^\star)\tilde{\boldsymbol{w}}_{k,n-1}\Vert \le \Vert \nabla H_k^\star\Vert\Vert\tilde{\boldsymbol{w}}_{k,n-1}\Vert^2 +O(\Vert\tilde{\boldsymbol{w}}_{k,n-1}\Vert^3)
\end{align}
As a result,
\begin{align}\label{barz1_2}
   \Vert (\mathcal{H}_{n-1} - \mathcal{H})\widetilde{\boldsymbol{\scriptstyle\mathcal{W}}}_{n-1}\Vert^2 = \sum_k \Vert (H_{k,n-1} - H_k^\star)\tilde{\boldsymbol{w}}_{k,n-1}\Vert^2 \le O(\Vert\widetilde{\boldsymbol{\scriptstyle\mathcal{W}}}_{n-1}\Vert^4)
\end{align}

Substituting \eqref{barz1_1}, \eqref{barz1_2} and \eqref{check_w_212} into \eqref{barz1} gives
\begin{align}\label{barz_2}
 \mathds{E}\Vert \bar{\boldsymbol{z}}_{n}\Vert^2    \le (1- O(\mu) )\mathds{E}\Vert \bar{\boldsymbol{z}}_{n-1}\Vert^2 + O(\mu)\mathds{E}\Vert \check{\boldsymbol{z}}_{n-1}\Vert^2 + O(\mu)\mathds{E}\Vert\widetilde{\boldsymbol{\scriptstyle\mathcal{W}}}_{n-1}\Vert^4 + O(\mu\epsilon^2)
\end{align}

We then analyze $\mathds{E}\Vert \check{\boldsymbol{z}}_{n}\Vert^2$. Recalling \eqref{checkz} gives
\begin{align}\label{checkz1}
  \mathds{E}\Vert \check{\boldsymbol{z}}_{n}\Vert^2 = &\mathds{E}\Vert(\mathcal{P}_{\alpha} - \mu\mathcal{V}_{\alpha}^{\sf T}\mathcal{A}_2\mathcal{H}\mathcal{V}_\alpha)\check{\boldsymbol{z}}_{n-1} - \frac{\mu}{\sqrt{K}}\mathcal{V}_{\alpha}^{\sf T}\mathcal{A}_2\mathcal{H}\mathds{1}\bar{\boldsymbol{z}}_{n-1} + \mu\mathcal{V}_{\alpha}^{\sf T}\mathcal{A}_2(\mathcal{H}_{n-1} - \mathcal{H})\widetilde{\boldsymbol{\scriptstyle\mathcal{W}}}_{n-1} + \mu\mathcal{V}_{\alpha}^{\sf T}\mathcal{A}_2\boldsymbol{e}_{n-1} \Vert^2 \notag\\
  \overset{(a)}{\le} & \left(\rho(\mathcal{P}_\alpha) + O(\mu^2)\right)\mathds{E}\Vert\check{\boldsymbol{z}}_{n-1}\Vert^2 + O(\mu^2)\mathds{E}\Vert \bar{\boldsymbol{z}}_{n-1}\Vert^2 + O(\mu^2)\mathds{E}\Vert\widetilde{\boldsymbol{\scriptstyle\mathcal{W}}}_{n-1}\Vert^4 + O(\mu^2\epsilon^2)
\end{align}
where $(a)$ follows from Jensen's inequality, \eqref{barz1_2}, and \eqref{check_w_212}.

Next, combining \eqref{barz} and \eqref{checkz1}, we obtain
\begin{align}\label{zn_2}
     \mathds{E}\left[\begin{array}{c}
         \Vert\bar{\boldsymbol{z}}_{n}\Vert^2\\
          \Vert\check{\boldsymbol{z}}_{n}\Vert^2
    \end{array}\right]\le& \left[\begin{array}{cc}
       1 - O(\mu)  & O(\mu) \\
         O(\mu^2)&  \rho(P_\alpha)+O(\mu^2)
    \end{array}\right] \mathds{E}\left[\begin{array}{c}
         \Vert\bar{\boldsymbol{z}}_{n-1}\Vert^2\\
          \Vert\check{\boldsymbol{z}}_{n-1}\Vert^2
    \end{array}\right] \notag\\
&+ \left[\begin{array}{c}
O(\mu)\mathds{E}\Vert\widetilde{\boldsymbol{\scriptstyle\mathcal{W}}}_{n-1}\Vert^4 + O(\mu\epsilon^2) \\
O(\mu^2)\mathds{E}\Vert\widetilde{\boldsymbol{\scriptstyle\mathcal{W}}}_{n-1}\Vert^4 + O(\mu^2\epsilon^2)
    \end{array}\right]  
\end{align}
where $\boldsymbol{z}_{-1} = 0$. To proceed, we introduce the coefficient matrix:
\begin{align}\label{de_gamma3}
    \Gamma_3 = \left[\begin{array}{cc}
       1 - O(\mu) & O(\mu) \\
         O(\mu^2)&  \rho(P_\alpha)+O(\mu^2)
    \end{array}\right]
\end{align}
with which and iterating \eqref{zn_2}, we have
\begin{align}\label{zn_21}
    \mathds{E}\left[\begin{array}{c}
         \Vert\bar{\boldsymbol{z}}_{n}\Vert^2\\
          \Vert\check{\boldsymbol{z}}_{n}\Vert^2
    \end{array}\right] = (I - \Gamma_3)^{-1}(I - \Gamma_3^{n+1})\left[\begin{array}{c}
O(\mu)\mathds{E}\Vert\widetilde{\boldsymbol{\scriptstyle\mathcal{W}}}_{n-1}\Vert^4 + O(\mu\epsilon^2) \\
O(\mu^2)\mathds{E}\Vert\widetilde{\boldsymbol{\scriptstyle\mathcal{W}}}_{n-1}\Vert^4 + O(\mu^2\epsilon^2)
    \end{array}\right]  
\end{align}
for which we have
\begin{align}\label{gamma3in}
    (I - \Gamma_3)^{-1} =  \left[\begin{array}{cc}
       O(\mu) & -O(\mu) \\
         -O(\mu^2)&  1 - \rho(P_\alpha)+O(\mu^2)
    \end{array}\right]^{-1} = \left[\begin{array}{cc}
       O(\frac{1}{\mu}) & O(1) \\
         O(\mu)&  O(1)
    \end{array}\right]
\end{align}
and 
\begin{align}\label{gamma3n}
    I - \Gamma_3^{n+1} =&  \left[\begin{array}{cc}
       1 - (1 - O(\mu))^{n+1} & -O(\mu) \\
         -O(\mu^2)&  1 - (\rho(P_\alpha)+O(\mu^2))^{n+1}
    \end{array}\right] \notag\\
    =& \left[\begin{array}{cc}
       1 - (1 - O(\mu))^{n+1} & -O(\mu) \\
         -O(\mu^2)&  O(1)
    \end{array}\right]
\end{align}
where
\begin{align}\label{gamma3n_1}
   1 - (1 - O(\mu))^{n+1} < 1 = O(1) 
\end{align}

Substituting \eqref{gamma3in}, \eqref{gamma3n}, and \eqref{tilde_w_4_4} into \eqref{zn_21}, we obtain
\begin{align}
 \mathds{E}\left[\begin{array}{c}
         \Vert\bar{\boldsymbol{z}}_{n}\Vert^2\\
          \Vert\check{\boldsymbol{z}}_{n}\Vert^2
    \end{array}\right]\le&  \left[\begin{array}{cc}
       O(\frac{1}{\mu}) & O(1) \\
         O(\mu)&  O(1)
    \end{array}\right] \left[\begin{array}{cc}
       O(1) & -O(\mu) \\
         -O(\mu^2)&  O(1)
    \end{array}\right] \left[\begin{array}{c}
O(\mu^5) + O(\mu\epsilon^2) \\
O(\mu^6)  + O(\mu^2\epsilon^2)
    \end{array}\right]    \notag\\
    = & \left[\begin{array}{c}
O(\mu^4) + O(\epsilon^2) \\
O(\mu^6) + O(\mu^2\epsilon^2)
    \end{array}\right]
\end{align}
Therefore,
\begin{align}\label{zn_f}
\mathds{E}\Vert\widetilde{\boldsymbol{\scriptstyle\mathcal{W}}}_n' - \widetilde{\boldsymbol{\scriptstyle\mathcal{W}}}_n \Vert^2 = \mathds{E}  \Vert{{\boldsymbol{{z}}}}_{n}\Vert^2 = \mathds{E}(\Vert\check{\boldsymbol{z}}_{n}\Vert^2+\Vert\bar{\boldsymbol{z}}_{n}\Vert^2) = O(\mu^4) + O(\epsilon^2)
\end{align}

In the subsequent analysis, we examine how well $\mathds{E}\Vert\widetilde{\boldsymbol{\scriptstyle\mathcal{W}}}_n'\Vert^2$ can represent $\mathds{E}\Vert\widetilde{\boldsymbol{\scriptstyle\mathcal{W}}}_n\Vert^2$. Specifically, we analyze the size of $\vert \mathds{E}\Vert\widetilde{\boldsymbol{\scriptstyle\mathcal{W}}}_n' \Vert^2 - \mathds{E}\Vert\widetilde{\boldsymbol{\scriptstyle\mathcal{W}}}_n\Vert^2\vert$. Basically,
\begin{align}
\mathds{E}\Vert\widetilde{\boldsymbol{\scriptstyle\mathcal{W}}}_n' \Vert^2 &= \mathds{E}\Vert\widetilde{\boldsymbol{\scriptstyle\mathcal{W}}}_n' -\widetilde{\boldsymbol{\scriptstyle\mathcal{W}}}_n+\widetilde{\boldsymbol{\scriptstyle\mathcal{W}}}_n\Vert^2\notag\\
&\le \mathds{E}\Vert\widetilde{\boldsymbol{\scriptstyle\mathcal{W}}}_n'- \widetilde{\boldsymbol{\scriptstyle\mathcal{W}}}_n\Vert^2 + \mathds{E}\Vert\widetilde{\boldsymbol{\scriptstyle\mathcal{W}}}_n\Vert^2 + 2\vert\mathds{E}(\widetilde{\boldsymbol{\scriptstyle\mathcal{W}}}_n' -\widetilde{\boldsymbol{\scriptstyle\mathcal{W}}}_n)^{\sf T}\widetilde{\boldsymbol{\scriptstyle\mathcal{W}}}_n\vert\notag\\
    &\le \mathds{E}\Vert\widetilde{\boldsymbol{\scriptstyle\mathcal{W}}}_n'- \widetilde{\boldsymbol{\scriptstyle\mathcal{W}}}_n\Vert^2 + \mathds{E}\Vert\widetilde{\boldsymbol{\scriptstyle\mathcal{W}}}_n\Vert^2 + 2\sqrt{\mathds{E}\Vert\widetilde{\boldsymbol{\scriptstyle\mathcal{W}}}_n'- \widetilde{\boldsymbol{\scriptstyle\mathcal{W}}}_n\Vert^2\mathds{E}\Vert\widetilde{\boldsymbol{\scriptstyle\mathcal{W}}}_n\Vert^2}\notag\\
    &\overset{(a)}{\le} O(\mu^2) + O(\epsilon^{\frac{7}{4}})
\end{align}
where $(a)$ follows from \eqref{zn_f} and \eqref{mse_2_unified}. Moreover, 
\begin{align}
\label{approx_e1}
\mathds{E}\Vert\widetilde{\boldsymbol{\scriptstyle\mathcal{W}}}_n' \Vert^2 - \mathds{E}\Vert\widetilde{\boldsymbol{\scriptstyle\mathcal{W}}}_n\Vert^2 &\le \mathds{E}\Vert\widetilde{\boldsymbol{\scriptstyle\mathcal{W}}}_n'- \widetilde{\boldsymbol{\scriptstyle\mathcal{W}}}_n\Vert^2 + 2\sqrt{\mathds{E}\Vert\widetilde{\boldsymbol{\scriptstyle\mathcal{W}}}_n'- \widetilde{\boldsymbol{\scriptstyle\mathcal{W}}}_n\Vert^2\mathds{E}\Vert\widetilde{\boldsymbol{\scriptstyle\mathcal{W}}}_n\Vert^2}\\
\label{approx_e2}
\mathds{E}\Vert\widetilde{\boldsymbol{\scriptstyle\mathcal{W}}}_n\Vert^2 - \mathds{E}\Vert\widetilde{\boldsymbol{\scriptstyle\mathcal{W}}}_n'\Vert^2 &\le \mathds{E}\Vert\widetilde{\boldsymbol{\scriptstyle\mathcal{W}}}_n'- \widetilde{\boldsymbol{\scriptstyle\mathcal{W}}}_n\Vert^2 + 2\sqrt{\mathds{E}\Vert\widetilde{\boldsymbol{\scriptstyle\mathcal{W}}}_n'- \widetilde{\boldsymbol{\scriptstyle\mathcal{W}}}_n\Vert^2\mathds{E}\Vert\widetilde{\boldsymbol{\scriptstyle\mathcal{W}}}_n'\Vert^2}
\end{align}
Combining \eqref{approx_e1} and \eqref{approx_e2}, we have
\begin{align}\label{approx_e3}
    \left\vert \mathds{E}\Vert\widetilde{\boldsymbol{\scriptstyle\mathcal{W}}}_n' \Vert^2 - \mathds{E}\Vert\widetilde{\boldsymbol{\scriptstyle\mathcal{W}}}_n\Vert^2 \right\vert \le &\mathds{E}\Vert\widetilde{\boldsymbol{\scriptstyle\mathcal{W}}}_n'- \widetilde{\boldsymbol{\scriptstyle\mathcal{W}}}_n\Vert^2 + 2\sqrt{\mathds{E}\Vert\widetilde{\boldsymbol{\scriptstyle\mathcal{W}}}_n'- \widetilde{\boldsymbol{\scriptstyle\mathcal{W}}}_n\Vert^2\mathds{E}\Vert\widetilde{\boldsymbol{\scriptstyle\mathcal{W}}}_n\Vert^2}\notag\\
    \overset{(a)}{\le}& O(\mu^4)+ O(\epsilon^2) + \sqrt{\Big(O(\mu^4)+ O(\epsilon^2)\Big)\times\Big(O(\mu^2) + O(\epsilon^{\frac{7}{4}})\Big)}
\end{align}
where $(a)$ follows from \eqref{zn_f} and \eqref{mse_2_unified}. To guarantee that $\mathds{E}\Vert\widetilde{\boldsymbol{\scriptstyle\mathcal{W}}}_n'\Vert^2$ can  well represent $\mathds{E}\Vert\widetilde{\boldsymbol{\scriptstyle\mathcal{W}}}_n\Vert^2$, i.e., the size of $\left\vert \mathds{E}\Vert\widetilde{\boldsymbol{\scriptstyle\mathcal{W}}}_n' \Vert^2 - \mathds{E}\Vert\widetilde{\boldsymbol{\scriptstyle\mathcal{W}}}_n\Vert^2 \right\vert$ is negligible compared to the size of $\mathds{E}\Vert\widetilde{\boldsymbol{\scriptstyle\mathcal{W}}}_n'\Vert^2$ and $\mathds{E}\Vert\widetilde{\boldsymbol{\scriptstyle\mathcal{W}}}_n\Vert^2$, the value of $\epsilon$ should be sufficiently small such that $O(\epsilon^{\frac{7}{4}})$ is dominated by $O(\mu^2)$. Basically, assume
\begin{align}
    \epsilon \le O(\mu^\alpha)
\end{align}
If $\alpha \ge \frac{8}{7}$, then $\left\vert \mathds{E}\Vert\widetilde{\boldsymbol{\scriptstyle\mathcal{W}}}_n' \Vert^2 - \mathds{E}\Vert\widetilde{\boldsymbol{\scriptstyle\mathcal{W}}}_n\Vert^2 \right\vert$ is negligible. For simplicity, in this paper, we consider $\alpha = 2$
\begin{align}
    \epsilon \le O(\mu^2)
\end{align}
As a result,
\begin{align}\label{approx_e4}
    \left\vert \mathds{E}\Vert\widetilde{\boldsymbol{\scriptstyle\mathcal{W}}}_n' \Vert^2 - \mathds{E}\Vert\widetilde{\boldsymbol{\scriptstyle\mathcal{W}}}_n\Vert^2 \right\vert {\le}& O(\mu^3) = O\left((\mathds{E}\Vert{\widetilde{\boldsymbol{\scriptstyle\mathcal{W}}}}_{n}\Vert^2)^{1.5}\right)
\end{align}
\end{proof}

\section{Additional simulation results}\label{appen_as}
We begin by presenting the details of our experimental setup. The simulations are performed on CIFAR10 and CIFAR100 datasets. We first randomly sample 1,024 instances from the training set to construct a validation dataset. Then, for centralized training, we follow the conventional single-agent setting with the batch size set to $KB$, where $K=16$ denotes the number of agents, and $B$ is the local batch size. In the decentralized training setup, the complete dataset is randomly partitioned into $K$ disjoint subsets, with each agent (i.e., model) having access only to one subset. Communication among agents during training is governed by a randomly generated graph topology based on the Metropolis rule, as described in \cite{sayed2014adaptation}. The hyper-parameters used during training are listed in Table \ref{tabl_hs}.

\begin{table*}[htbp]
\footnotesize
\centering
\begin{spacing}{1.6}
\caption{Hyperparameter setting}
\label{tabl_hs}
\begin{tabular}{ccccccc}
\toprule
dataset&neural network&initial learning rate& weight decay& local batch& epoch& milestones\\
\multirow{2}{*}{CIFAR10} & \multirow{2}{*}{WideResNet-28-10} &\multirow{2}{*}{0.1} & \multirow{2}{*}{5e-4}& 128 & 350 & [140, 210, 280]\\
&&&& 256&400&[200, 280, 360]\\
\hline
\multirow{2}{*}{CIFAR100} & \multirow{2}{*}{WideResNet-34-10}&\multirow{2}{*}{0.1}& \multirow{2}{*}{1e-4} &128 & 400 & [200, 280, 360]\\
&&&& 256&500&[250, 350, 450]\\
\bottomrule      
\end{tabular}
\end{spacing}
\end{table*}

In our adversarial training simulations, we consider adversarial perturbations constrained by $\ell_2$ and $\ell_\infty$ norms. Specifically, for $\ell_\infty$ attacks, we simulate two bounds: $\epsilon= 8/255$ and $\epsilon=3/255$; for $\ell_2$ attacks, we use the bound $\epsilon=128/255$. We investigate three different training methods: (i) PGD training using the standard SGD optimizer, (ii) PGD training using SGD with momentum, and (iii) TRADES, which incorporates an additional regularization term related to the clean accuracy relative to standard PGD training. The TRADES methods is also implemented with the SGD momentum optimizer. We adopt the hyper-parameter settings recommended in \cite{abs-2010-03593}. Specifically, the momentum parameter for the SGD momentum optimizer is set to $0.9$. For the inner maximization, the number of iterations for the gradient ascent is $10$, and the corresponding step sizes are $15/255$ for $128/255$, $1/255$ for $3/255$, and $2/255$ for $8/255$. In TRADES, the coefficient controlling the trade-off between clean accuracy and robustness is set to $6.0$. We employ early-stopping, a widely used technique in the literature \cite{abs-2010-03593, WuX020}, to select the \emph{best} model based on its performance on the validation dataset. Note that the \emph{best} model is chosen according to its robustness to  PGD attacks. It is possible that the \emph{final} model exhibit slightly better robustness to AutoAttack than the early-stopped model when the overfitting is not severe.

In the main text, we have reported the results of PGD training using the standard SGD optimizer. To provide a more comprehensive comparison of model robustness under different distributed adversarial training strategies, we now present the results for PGD training with SGD momentum in Table \ref{robustacc_sgdm}, and for Trades in Table \ref{robustacc_trades}, from which we observe that decentralized adversarial training consistently yields better performance on both clean and adversarial data than the centralized method. Furthermore, consensus and diffusion are comparable to each other.

\begin{table*}[!t]
\centering
\footnotesize
\begin{spacing}{1.6}
\caption{Clean and robust accuracy of the obtained models trained using PGD and the SGD momentum optimizer.}
\label{robustacc_sgdm}
\begin{tabular}{cccccccccccccccccc}
\toprule
\multirow{2}{*}{dataset} & \multirow{2}{*}{norm} & \multirow{2}{*}{$\epsilon$} & \multirow{2}{*}{local batch} & \multirow{2}{*}{method} &  \multicolumn{2}{c}{best} & \multicolumn{2}{c}{final} \\
\cline{6-7}\cline{8-9}
&&&&&clean($\%$) & AA($\%$) & clean($\%$) & AA($\%$) \\
\hline
\multirow{18}{*}{CIFAR10} & \multirow{12}{*}{$\ell_\infty$} & \multirow{6}{*}{$8/255$} & \multirow{3}{*}{128} & centralized &79.54&40.12&83.85&38.62 \\
    &&&&consensus &\textbf{84.90}&\textbf{42.31}&84.21&40.28\\
    &&&&diffusion &84.82&42.09&\textbf{84.76}&\textbf{40.48}\\
\cline{4-9}
&&&\multirow{3}{*}{256} & centralized &75.60&37.94&83.28&37.30 \\
    &&&&consensus &82.93&39.47&83.05&38.46\\
    &&&&diffusion &\textbf{84.08}&\textbf{40.14}&\textbf{84.18}&\textbf{39.11}\\
\cline{3-9}
&& \multirow{6}{*}{$3/255$} &\multirow{3}{*}{128} & centralized &90.71&70.32&90.76&70.57 \\
    &&&&consensus &\textbf{91.55}&\textbf{72.28}&\textbf{91.56}&\textbf{72.24}\\
    &&&&diffusion &91.42&71.86&91.50&71.63\\
\cline{4-9}
&&&\multirow{3}{*}{256} & centralized &89.77&68.36&89.87&68.29 \\
    &&&&consensus &\textbf{91.00}&71.46&\textbf{90.97}&71.43\\
    &&&&diffusion &90.51&\textbf{71.66}&90.62&\textbf{71.49}\\
\cline{3-9}
& \multirow{6}{*}{$\ell_2$} & \multirow{6}{*}{$128/255$} & \multirow{3}{*}{128} & centralized &89.91&61.75&89.91&61.79 \\
    &&&&consensus &90.40&\textbf{63.10}&90.36&62.22\\
    &&&&diffusion &\textbf{90.79}&62.80&\textbf{90.81}&\textbf{62.79}\\
\cline{4-9}
&&&\multirow{3}{*}{256} & centralized &88.60&60.29&88.52&60.44 \\
    &&&&consensus &89.64&\textbf{62.96}&89.66&62.46\\
    &&&&diffusion &\textbf{90.08}&62.88&\textbf{90.18}&\textbf{62.85}\\
\midrule
\multirow{18}{*}{CIFAR100} & \multirow{12}{*}{$\ell_\infty$} & \multirow{6}{*}{$8/255$} & \multirow{3}{*}{128} & centralized &44.31&16.56&54.05&17.79 \\
    &&&&consensus &\textbf{56.38}&\textbf{19.77}&\textbf{56.37}&19.61\\
    &&&&diffusion &56.13&19.73&56.33&\textbf{19.67}\\
\cline{4-9}
&&&\multirow{3}{*}{256} & centralized &44.92&15.26&51.99&16.09& \\
    &&&&consensus &53.91&18.41&53.74&17.09\\
    &&&&diffusion &\textbf{57.75}&\textbf{18.86}&\textbf{54.35}&\textbf{18.05}\\
\cline{3-9}
&& \multirow{6}{*}{$3/255$} &\multirow{3}{*}{128} & centralized &63.62&37.45&64.68&38.44 \\
    &&&&consensus &67.22&41.29&67.09&\textbf{41.40}\\
    &&&&diffusion &\textbf{67.50}&\textbf{41.40}&\textbf{67.47}&41.34\\
\cline{4-9}
&&&\multirow{3}{*}{256} & centralized &60.97&34.90&61.33&35.03 \\
    &&&&consensus &\textbf{65.53}&39.39&\textbf{65.47}&39.29\\
    &&&&diffusion &65.41&\textbf{39.91}&65.41&\textbf{39.60}\\
\cline{3-9}
& \multirow{6}{*}{$\ell_2$} & \multirow{6}{*}{$128/255$} & \multirow{3}{*}{128} & centralized &62.54&33.37&62.51&33.47 \\
    &&&&consensus &64.98&35.59&65.46&35.58 \\
    &&&&diffusion &\textbf{65.80}&\textbf{35.85}&\textbf{65.82}&\textbf{35.84}\\
\cline{4-9}
&&&\multirow{3}{*}{256}& centralized &59.31&30.58&60.18&30.90 \\
    &&&&consensus &63.57&34.24&63.88&33.73\\
    &&&&diffusion &\textbf{64.08}&\textbf{34.95}&\textbf{64.03}&\textbf{35.00}\\
\bottomrule      
\end{tabular}
\end{spacing}
\end{table*}

\begin{table*}[!t]
\footnotesize
\centering
\begin{spacing}{1.6}
\caption{Clean and robust accuracy of the obtained models trained using Trades and the SGD momentum optimizer.}
\label{robustacc_trades}
\begin{tabular}{cccccccccccccccccc}
\toprule
\multirow{2}{*}{dataset} & \multirow{2}{*}{norm} & \multirow{2}{*}{$\epsilon$} & \multirow{2}{*}{local batch} & \multirow{2}{*}{method} &  \multicolumn{2}{c}{best} & \multicolumn{2}{c}{final} \\
\cline{6-7}\cline{8-9}
&&&&&clean($\%$) & AA($\%$) & clean($\%$) & AA($\%$) \\
\hline
\multirow{18}{*}{CIFAR10} & \multirow{12}{*}{$\ell_\infty$} & \multirow{6}{*}{$8/255$} & \multirow{3}{*}{128} & centralized &77.85&43.20&82.04&40.00 \\
    &&&&consensus &82.58&\textbf{46.18}&\textbf{83.30}&41.05\\
    &&&&diffusion &\textbf{82.85}&46.12&83.28&\textbf{41.69}\\
\cline{4-9}
&&&\multirow{3}{*}{256} & centralized &76.69&41.55&79.97&37.89 \\
    &&&&consensus &\textbf{81.46}&\textbf{43.52}&\textbf{82.09}&\textbf{40.03}\\
    &&&&diffusion &79.88&42.57&81.65&39.67\\
\cline{3-9}
&& \multirow{6}{*}{$3/255$} &\multirow{3}{*}{128} & centralized &88.78&71.45&88.92&70.83 \\
    &&&&consensus &\textbf{89.35}&\textbf{73.40}&\textbf{89.83}&\textbf{72.59}\\
    &&&&diffusion &89.29&72.57&89.68&72.31\\
\cline{4-9}
&&&\multirow{3}{*}{256} & centralized &87.63&68.92&87.57&69.06 \\
    &&&&consensus &\textbf{89.05}&71.13&89.03&71.06\\
    &&&&diffusion &88.97&\textbf{71.68}&\textbf{89.13}&\textbf{71.46}\\
\cline{3-9}
& \multirow{6}{*}{$\ell_2$} & \multirow{6}{*}{$128/255$} & \multirow{3}{*}{128} & centralized &87.26&62.83&87.58&62.74 \\
    &&&&consensus &87.90&\textbf{65.30}&88.37&63.88\\
    &&&&diffusion &\textbf{87.91}&65.05&\textbf{88.57}&\textbf{63.90}\\
\cline{4-9}
&&&\multirow{3}{*}{256} & centralized &85.87&61.50&86.37&61.49 \\
    &&&&consensus &87.85&\textbf{64.23}&\textbf{88.13}&63.31\\
    &&&&diffusion &\textbf{87.91}&63.44&87.98&\textbf{63.38}\\
\midrule
\multirow{18}{*}{CIFAR100} & \multirow{12}{*}{$\ell_\infty$} & \multirow{6}{*}{$8/255$} & \multirow{3}{*}{128} & centralized &50.71&19.24&50.60&18.34 \\
    &&&&consensus &\textbf{53.94}&\textbf{21.28}&53.51&\textbf{19.47}\\
    &&&&diffusion &53.25&21.20&\textbf{53.70}&19.15\\
\cline{4-9}
&&&\multirow{3}{*}{256} & centralized &44.30&15.40&49.39&17.51 \\
    &&&&consensus &\textbf{51.72}&\textbf{19.48}&50.92&\textbf{17.85}\\
    &&&&diffusion &50.80&19.25&\textbf{51.03}&17.52\\
\cline{3-9}
&& \multirow{6}{*}{$3/255$} &\multirow{3}{*}{128} & centralized &59.90&37.24&59.83&37.24 \\
    &&&&consensus &63.48&\textbf{41.80}&63.70&\textbf{41.40}\\
    &&&&diffusion &\textbf{63.54}&41.70&\textbf{63.83}&41.25\\
\cline{4-9}
&&&\multirow{3}{*}{256} & centralized &58.12&36.28&58.12&36.23 \\
    &&&&consensus &\textbf{62.06}&\textbf{40.45}&\textbf{62.22}&\textbf{40.15}\\
    &&&&diffusion &61.88&40.18&62.16&39.59\\
\cline{3-9}
& \multirow{6}{*}{$\ell_2$} & \multirow{6}{*}{$128/255$} & \multirow{3}{*}{128} & centralized &58.90&33.83&58.80&33.98 \\
    &&&&consensus &\textbf{61.91}&\textbf{36.11}&\textbf{61.87}&\textbf{36.12} \\
    &&&&diffusion &61.65&36.10&61.71&36.06\\
\cline{4-9}
&&&\multirow{3}{*}{256}& centralized &57.49&31.91&57.55&31.87 \\
    &&&&consensus &\textbf{60.87}&\textbf{35.96}&\textbf{61.02}&\textbf{35.77}\\
    &&&&diffusion &60.86&35.43&60.98&35.22\\
\bottomrule      
\end{tabular}
\end{spacing}
\end{table*}

\end{document}